\definecolor{redlinkcolor}{rgb}{0.79607843, 0.25098039, 0.25882353}
\definecolor{bluecitecolor}{rgb}{0,0.36,0.69}
\definecolor{newpurple}{rgb}{0.58431372549,0.15294117647,0.90980392156}
\definecolor{newbrown}{rgb}{0.67843137254,0.41568627451,0.09019607843}
\definecolor{myblue}{HTML}{000099}
\definecolor{myred}{HTML}{E55451}
\definecolor{mycyan}{HTML}{00ccff}
\DeclareMathOperator*{\argmin}{arg\,min}
\title{Fair Federated Learning via Bounded Group Loss}
\date{}
\author{
Shengyuan Hu \\ \small CMU \\ \footnotesize \texttt{\href{mailto:shengyua@andrew.cmu.edu}{shengyua@andrew.cmu.edu}} \and
Zhiwei Steven Wu \\ \small CMU \\ \footnotesize \texttt{\href{mailto:zstevenwu@cmu.edu}{zstevenwu@cmu.edu}} \and
Virginia Smith \\ \small CMU \\ \footnotesize \texttt{\href{mailto:smithv@cmu.edu}{smithv@cmu.edu}}
}
\newtheorem{definition}{Definition}
\newtheorem{assumption}{Assumption}
\newtheorem{theorem}{Theorem}
\newtheorem{lemma}{Lemma}
\newtheorem{corollary}[theorem]{Corollary}
\newenvironment{assumptionp}[1]{
  
  \assumptionalt
}{\endassumptionalt}
\begin{document}

\maketitle

\begin{abstract}
Fair prediction across  protected groups is an important constraint for many federated learning applications. However, prior work studying group fair federated learning lacks formal convergence or fairness guarantees. In this work we propose a general framework for provably fair federated learning. 
%which can be easily applied to enforce common group fairness notions. 
In particular, we explore and extend the notion of Bounded Group Loss as a theoretically-grounded approach for group fairness. 
Using this setup, we propose a scalable federated optimization method that optimizes the empirical risk under a number of group fairness constraints. 
We provide convergence guarantees for the method as well as fairness guarantees for the resulting solution. 
%Based on this setup, we propose and analyze a scalable federated optimization method that optimizes the empirical risk under a number of group fairness constraints. 
%Our resulting method and analysis are the first we are aware of to provide formal convergence and fairness guarantees when training a group fair federated learning model. 
Empirically, we evaluate our method across common benchmarks from fair ML and federated learning, showing that it can provide both fairer and more accurate predictions than baseline approaches.

%In particular, we study a few concrete example and demonstrate how they relate to more commonly used fairness notion such as Equalized Odds(EO). 
%Based on our definition, we propose a scalable algorithm that optimizes the empirical risk under various global fairness constraints, which we evaluate across common benchmarks from fair ML and federated learning. Our resulting method and analysis are the first we are aware of to provide formal convergence and fairness guarantees for training a group fair federated learning model. Empirically, our method provides both fairer and more accurate predictions than baseline approaches.

\end{abstract}
%\vspace{-0.1in}
% % \shengyuan{
% % [TODO]:
% % \begin{itemize}
% %     % \item Motivation for TP / FP Loss (Should we introduce any relation between TP/FP loss and TPR/FPR)
% %     % \item Does the name Conditional BGL makes sense?
% %     % \item How should we include FedMinMax?
% %     % \item Fairness Guarantee: should we introduce $\widehat{\zeta}$?
% %     % \item Does the current failure mode example makes sense?
% %     % \item Do we want to include locally fair constraint?
% %     % \item Generalize our framework to cover FedMinMax.
% %     % \item Add an illustrating figure for why fairness in FL is hard in introduction.
% %     \item Cite ARUBA
% % \end{itemize}
% }
 \vspace{-.1in}
\section{Introduction}
%\vspace{-.1in}
% Federated learning (FL) is a training paradigm that aims to fit a model to data generated by, and residing in, a set of disparate data silos, such as a network of remote devices or collection of organizations~\cite{mcmahan2017communication,kairouz2019advances,li2020federated}. Mirroring
% concerns around fairness in non-federated settings, many FL applications similarly require performing fair
% prediction across protected groups (e.g., age, race, gender). Unfortunately, as we show in Figure~\ref{}, naively applying such approaches by estimating fairness locally on each data silo may be inaccurate due to heterogeneity across silos---failing to produce a fair model over the entire population of data.

Group fairness aims to mitigate unfair biases against certain protected demographic groups (e.g. race, gender, age) in the use of machine learning. Many methods have been proposed to incorporate group fairness constraints in centralized settings~\citep[e.g.,][]{hardt2016equality,zafar2017fairness_a,agarwal2018reductions, feldman2015certifying}. However, there is a lack of work studying these approaches in the context of federated learning (FL), a training paradigm where a model is fit to data generated by a set of disparate data silos, such as a network of remote devices or collection of organizations~\citep{mcmahan2017communication,kairouz2019advances,li2020federated}. Mirroring concerns around fairness in non-federated settings, many FL applications similarly require performing fair prediction across protected groups. Unfortunately, as we show in Figure~\ref{fig:global-local}, naively applying existing approaches to each client in a federated network in isolation may be inaccurate due to  heterogeneity across clients---failing to produce a fair model across the entire population~\citep{zeng2021improving}. 
\begin{wrapfigure}{r}{0.55\textwidth}
   % \centering
    % \vspace{-0.03in}
    \begin{subfigure}{0.27\textwidth}
        \includegraphics[width=\textwidth,trim=10 10 0 30]{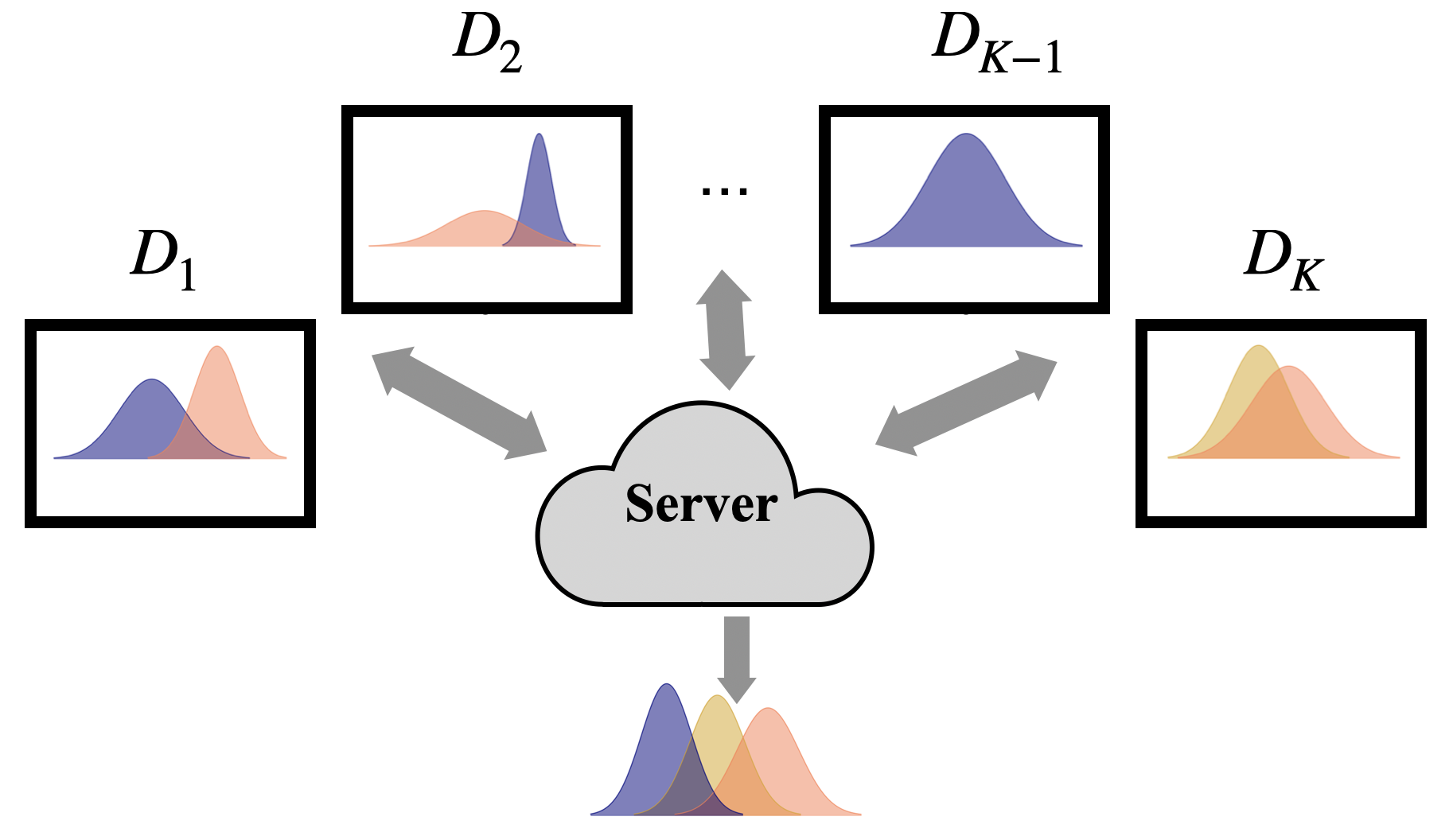}
    \end{subfigure}
    \begin{subfigure}{0.27\textwidth}
        \includegraphics[width=\textwidth,trim=10 10 0 30]{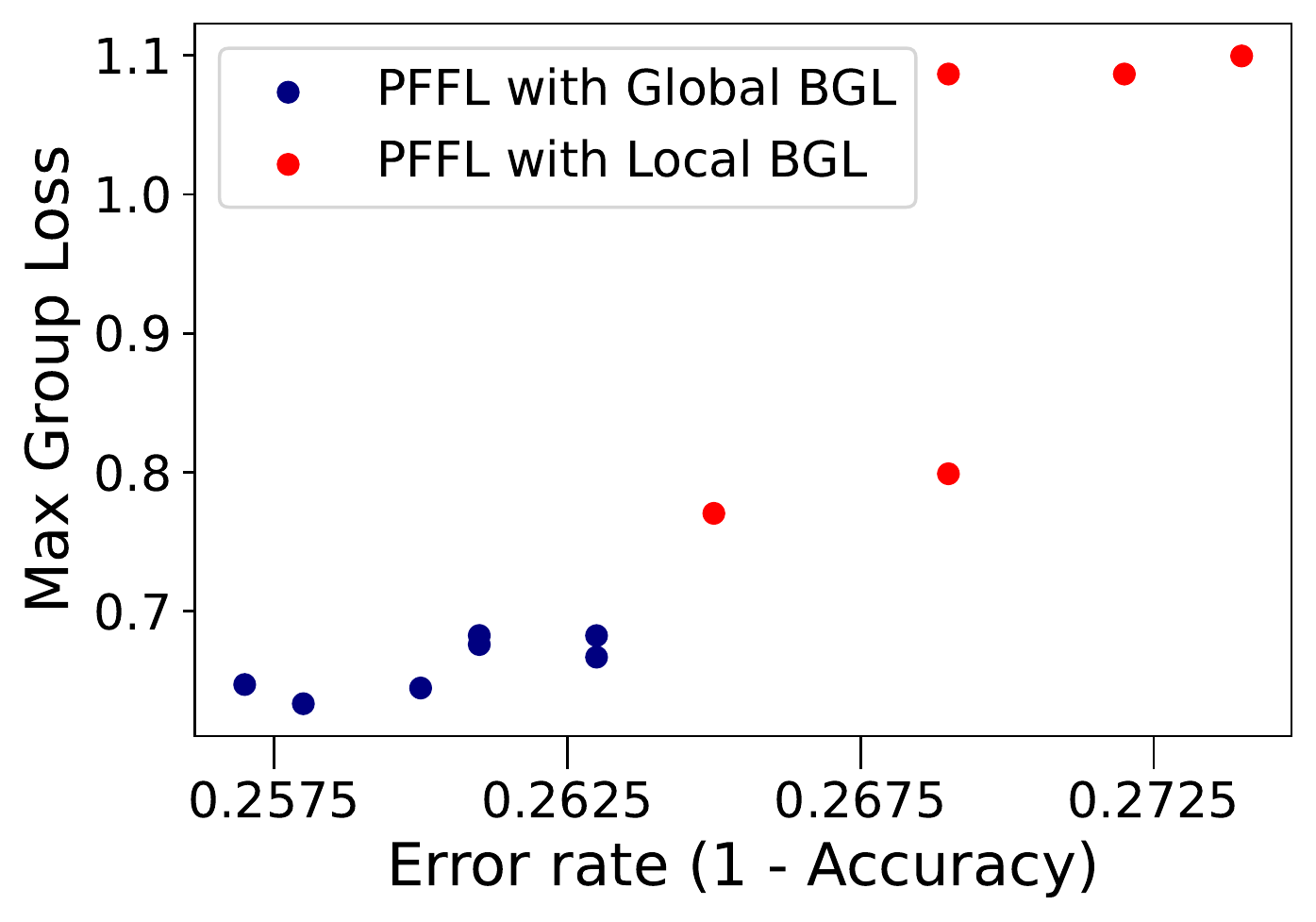}
    \end{subfigure}

    % \vspace{-0.11in}
    \caption{\small Left: Due to data heterogeneity in FL, data distributions conditioned on each protected attribute (shown in different colors) may differ across clients. Fair FL aims to learn a model that provides fair prediction on the entire data distribution. Right: Empirical results (ACS dataset) verify that training with local fairness constraints alone induces higher error and worse fairness  than using a global fairness constraint.} %
  \vspace{-0.1in}
    \label{fig:global-local}
\end{wrapfigure}

\vspace{-0.13in}
Several recent works have considered addressing this issue by exploring specific forms of group fairness in FL~\citep[e.g.,][]{chu2021fedfair,zeng2021improving,papadaki2022minimax,cui2021addressing,du2021fairness,rodriguez2021enforcing}. Despite promising empirical performance, these prior works lack formal guarantees surrounding the resulting fairness of the solutions (Section~\ref{sec:related_work}), which is problematic as it is unclear how the methods may perform in real-world FL deployments. 
%The approaches themselves are also difficult to analyze as they are either heuristic in nature or consider complex objectives and solvers. %A major barrier for formal analysis in these prior works is that the nature of these heuristics lacks necessary assumptions to show convergence. 

% Common choices of group fairness notions used in FL include Demographic Parity, Equal Opportunity, Equalized Odds, etc. \shengyuan{Maybe focus on convergence} These notions require equal prediction quality between any two protected groups. However, directly optimizing 

In this work we provide a formulation and method for group fair FL that can provably satisfy global fairness constraints. 
Common group fairness notions that aim to achieve equal prediction quality between any two protected groups (e.g., Demographic Parity, Equal Opportunity~\citep{hardt2016equality}) are difficult to provably satisfy while simultaneously finding a model with high utility.
%While common choices of group fairness notions aim to achieve equal prediction quality between any two protected groups such as Demographic Parity, Equal Opportunity, Equalized Odds~\citep{hardt2016equality}, etc., it is hard to find the best model subject to these constraints for arbitrary model class with high utility performance with provable guarantees. 
Instead, we consider a different fairness notion known as \textit{Bounded Group Loss (BGL)}~\citep{agarwal2019fair}, which aims to promote worst group's performance, to capture these common group fairness criteria. 
%Instead of using these notions, we consider a different fairness notion known as \textit{Bounded Group Loss (BGL)}~\cite{agarwal2019fair} 
%in order to capture a number of popular group fairness criteria, such as equal opportunity, equalized odds, and minimax group fairness~\citep{diana2021minimax}. 
%As we show, a benefit of this approach is that in addition to exhibiting strong empirical performance, it maintains smoothness and convexity properties that can make our resulting federated solver more amenable to theoretical analysis.
As we show, a benefit of this approach is that in addition to having practical advantages in terms of fairness-utility trade-offs (Section~\ref{sec:exp}), it maintains smoothness and convexity properties that can equip our  solver with favorable theoretical guarantees.

Based on our group fairness formulation, we then provide a scalable method (PFFL) to solve the proposed objectives via federated saddle point optimization. 
%In developing and analyzing our solver, we take care to incorporate practical constraints of realistic federated networks---building off common communication-efficient federated optimization methods that can scale to large networks of devices. 
Theoretically, we provide convergence guarantees for the method as well as fairness and generalization guarantees for the resulting solutions. Empirically, we demonstrate the effectiveness of our approach on common benchmarks from fair machine learning and federated learning. We summarize our main contributions below:
\vspace{-0.1in}
\begin{itemize}[leftmargin=*]
    \setlength\itemsep{-1pt}
    \item We propose a novel fair federated learning framework for a range of group fairness notions. Our framework models the fair FL problem as a saddle point optimization problem and leverages variations of Bounded Group Loss~\citep{agarwal2019fair} to capture common forms of group fairness. We also extend BGL to consider a new fairness notion called Conditional Bounded Group Loss~(CBGL), which may be of independent interest and utility in non-federated settings.
    %\item We propose a novel fair federated learning framework for a broad family of group fairness notions. Our framework models the fair FL problem as a saddle point optimization problem. In addition to that, we provide a few examples of fairness constraint that could be solved by our framework. Among which we propose a new fairness notion called Conditional Bounded Group Loss (CBGL), which is a generalization of Equalized Odds and extension of Bounded Group Loss. 
    \item We propose a scalable federated optimization method for our group fair FL framework. We provide a regret bound analysis for our method under convex ML objectives to demonstrate formal convergence guarantees. Further, we provide  fairness and generalization guarantees on the model for a variety of fairness notions.
    \item Finally, we evaluate our method on common benchmarks used in fair machine learning and federated learning. In all settings, we find that our method can significantly improve model fairness compared to baselines without sacrificing model accuracy. Additionally, even though we do not directly optimize classical group fairness constraints (e.g., Demographic Parity, Equal Opportunity), we find that our method can still provide comparable/better fairness-utility trade-offs relative to existing approaches when evaluated on these metrics.
    %More interestingly, we also find that even compared to methods that directly promotes DP/EO, our method is still able to provide comparable/better fairness-utility tradeoff, even if we do not directly optimize these constraints.
\end{itemize}

% The remainder of the paper is organized as follows. 
% %In Section \ref{sec:related_work}, we will first present prior efforts on training a model that satisfies algorithmic fairness and recent works on fair federated learning and briefly compare the major difference between those works and ours. 
% We present a detailed discussion of related work to Section ~\ref{sec:related_work}.
% In Section \ref{sec:setup}, we formalize the fairness definition via \textit{Bounded Group Loss} and the fair federated learning objective. In Section \ref{sec:alg_and_theory}, we present a scalable algorithm to solve the proposed objective, and provide formal convergence and fairness guarantees for our objective and algorithm. In Section \ref{sec:exp}, we evaluate our algorithm on common fairness benchmark and show our method is able to achieve both better utility and fairness performance compared to vanilla FedAvg. 
% We present a detailed discussion of related work to Section ~\ref{sec:related_work}.

\vspace{-0.1in}
\section{Background and Related Work}
\label{sec:related_work}
\vspace{-0.1in}
\textbf{Fair Machine Learning.}
%Algorithmic fairness in machine learning often concerns the treatment of a protected attribute during the process of learning a model.
Algorithmic fairness in machine learning aims to identify and correct bias in the learning process.
Common approaches for obtaining fairness include pre-processing methods that rectify the features or raw data to enhance fairness \citep{zemel2013learning, feldman2015certifying, calmon2017optimized}; post-processing methods that revise the prediction score for a trained model \citep{hardt2016equality, dwork2018decoupled, menon2018cost}; and in-processing methods that directly modify the training objective/solver to produce a fair predictor \citep{agarwal2018reductions, agarwal2019fair, zafar2017fairness_a,zafar2017fairness_b,woodworth2017learning}. Most existing methods in fair ML rely on using a centralized dataset to train and evaluate the model. As shown in Figure~\ref{fig:global-local}, in the federated setting where data is privately distributed across different data silos, directly applying these methods locally only ensures fairness for each silo rather than the entire population. Developing effective and efficient techniques for fair FL is thus an important area of study. %In this work we will focus on studying in-processing method for provable fair federated learning.

% \paragraph{Cross Silo Federated Learning.}
% A major motivation for our work is enabling fair training in cross-silo federated learning. The goal of federated learning is to collaboratively learn from data residing from a network of devices / silos. Different from cross-device federated learning where the clients are a huge number of mobile / IoT devices with limited number of data locally, cross-silo federated learning study the setting where clients are different organizations (e.g. hospital, institutes, etc.) with ample amount of data per silo~\citep{heikkila2020differentially, kairouz2019advances}. 

\textbf{Fair Federated Learning.}
In FL, definitions of fairness may take many forms. A commonly studied notion of fairness is representation parity~\citep{hashimoto2018fairness}, whose application in FL requires the model's performance across all clients to have small variance~\citep{mohri2019agnostic, li2019fair, donahue2021models,yue2021gifair,li2021ditto}. 
%There are a line of recent works that study this notion of fairness in the context of federated learning~\citep{mohri2019agnostic, li2019fair, donahue2021models}. 
In this work we instead focus on notions of \textit{group fairness}, in which every data point in the federated network belongs to some (possibly) protected group, and we aim to find a model that doesn't introduce bias towards any group. 
%For example, in cross-silo FL settings, we may wish to produce a model that fairly treats patients coming from various demographic groups in a collection of hospitals. In cross-device settings, we may naturally wish to ensure fair performance across groups of devices/users. 
%\gs{Add some examples of why we care about group fair FL here?? E.g., in cross-silo settings, XXX. in cross-device FL, XXX.}
%In this work we aims at achieving algorithmic fairness in federated learning, where every data belongs to a specific protected group. The purpose of learning is to find a model that doesn't introduce bias towards any group. In the rest of the work we use the word \textit{fair(ness)} to represent the notion of \textit{group fair(ness)}. 

Recent works have proposed various objectives for group fairness in federated learning. \citet{zeng2021improving} proposes a bi-level optimization objective that minimizes the difference between each group's loss while finding an optimal global model. Similarly, several works propose using a constrained optimization problem that aims to find the best model subject to an upper bound on the group loss difference~\citep{chu2021fedfair,rodriguez2021enforcing,du2021fairness,cui2021addressing}. Different from these approaches, our method focuses on a fairness constraint based on upperbounding the loss of each group with a constant rather than the loss difference between any two groups. More closely related to our work, \citet{papadaki2022minimax} weighs the empirical loss given each group by a trainable vector $\boldsymbol{\lambda}$ and finds the best model for the worst case $\boldsymbol{\lambda}$. Though similar to our method for $\zeta=0$, this approach fails to achieve both strong utility and fairness performance under non-convex loss functions (see Section \ref{sec:exp}). \citet{zhang2021unified} also propose a similar objective to learn a model with unified fairness. Among these works, \citet{zeng2021improving} and \citet{cui2021addressing} also provide simplified convergence and fairness guarantees for their method. However, these works lack formal analyses around the convergence for arbitrary convex loss functions as well as the behavior of the fairness constraint over the true data distribution. Ours is the first work we are aware to provide such guarantees in the context of group fair federated learning.
% Unlike these prior works, our work provides convergence and fairness guarantees with respect to our proposed algorithm.

\vspace{-0.1in}
\section{Fair Federated Learning via Bounded Group Loss}
\label{sec:setup}
\vspace{-0.1in}
In this section we first formalize the group fair federated learning problem and a fairness-aware objective solving this problem (Section \ref{sec:setup:setup}). We then provide several examples of group fairness based on the notion of BGL and show how to incorporate them into our framework (Section \ref{sec:formulation:objective}).
\vspace{-.05in}
\subsection{Setup: Group Fair Federated Learning}
\label{sec:setup:setup}
Many applications of FL require treatment of data belonging to protected groups (e.g., race, gender, age). This is particularly common in applications of cross-silo FL, where we may wish to produce a model that fairly treats individuals from various demographic groups across a collection of data silos (e.g. hospitals, schools, financial institutions)~\citep{chu2021fedfair,vaid2021federated,cui2021addressing}.
% In cross-device settings where clients themselves may be viewed as the owner of a subset of data from a particular group, we may naturally wish to ensure fair performance across groups of devices/users~\citep{mohri2019agnostic,li2019fair}.  %In cross-device settings, we may naturally wish to ensure fair performance across groups of devices/users. 
%In real world scenarios, federated learning could apply to many fields where data belongs to some protected groups (race, gender, etc.) in both cross-silo and cross-device settings. In cross-silo FL where each data silo has sufficient amount of data (e.g. hospital) and the distribution of protected attributes within each silo could be different, fair prediction over protected groups across all the silos is the goal for group fair FL. In cross-device setting where clients themselves could be viewed as owner of a subset of data from a particular group, group fair FL is also useful in order to control the worst group's loss over the entire network. In this work, we denote the set of protected attributes as $A$. \shengyuan{We could probably move this paragraph to the end of section 3.1/3.2}

\paragraph{FL Setup.} Following standard federated learning scenarios~\citep{mcmahan2017communication}, we consider a network with $K$ different clients. Each client $k \in [K]$ has access to training data $\hat{\mathcal{D}}_k:=\{(x_i,y_i,a_i)\}_{i=1,\cdots,m_k}$ sampled from the true data distribution $\mathcal{D}_k$, where $x_i$ is an observation, $y_i\in Y$ is the label, $a_i\in A$ is the protected attribute. Let the hypothesis class be $\mathcal{H}$ and for any model $h\in\mathcal{H}$, and define the loss function on data $(x,y,a)$ to be $l(h(x),y)$. Federated learning applications typically aim to solve:
%Let the hypothesis class be $\mathcal{H}$ and for any model $h\in\mathcal{H}$, define the loss function on data $(x,y,a)$ to be $l(h(x),y)$. Federated learning applications typically aim to solve: 
\begin{align}\min_{h\in\mathcal{H}}\mathcal{F}(h)=\min_{h\in\mathcal{H}}\mathbb{E}_{(x,y)\sim\mathcal{D}}\left[l(h(x),y)\right]\, .\end{align}
In practice, $\mathcal{D}_k$ is estimated by observing $\{(x_i,y_i,a_i)\}_{i=1,\cdots,m_k}$, and we solve the empirical risk:
\vspace{-.05in}
\begin{align}
    \min_{h\in\mathcal{H}}F(h) = \min_{h\in\mathcal{H}} \frac{1}{K}\sum_{k=1}^K \frac{1}{m_k}\sum_{i=1}^{m_k}l(h(x_{k,i}), y_{k,i}) \, .
\end{align}
For simplicity, we define $f_k(h)=\frac{1}{m_k}\sum_{i=1}^{m_k}l(h(x_{k,i}),y_{k,i})$ as the local objective for client $k$. Further, we assume $h$ is parameterized by a vector $w\in\mathbb{R}^{p}$ where $p$ is the number of parameters. We will use $F(w)$ and $f_k(w)$ to represent $F(h)$ and $f_k(h)$ intermittently in the remainder of the paper.

\paragraph{Fairness via Constrained Optimization.} When a centralized dataset is available, a standard approach to learn a model that incorporates fairness criteria  is to solve a constrained optimization problem where one tries to find the best model subject to some fairness notion~\citep{barocas2019fairness,agarwal2019fair}. Following this formulation, we formalize a similar learning problem in the federated setting, solving:
\begin{equation}
\label{eq:empirical_obj_general}
\begin{array}{cc}
\min_{h\in\mathcal{H}} F(h) & \text{subject to }\mathbf{R}(h)\leq\boldsymbol{\zeta}  ,
\end{array}
\end{equation}
where $\mathbf{R}(h),\boldsymbol{\zeta}\in\mathbb{R}^{Z}$ encodes the constraint set on $h$. For instance, the $z$-th constraint could be written as $\mathbf{R}_z(h)\leq\zeta_z$ where $\zeta_z$ is a fixed constant. This formulation is commonly used to satisfy group fairness notions such as equal opportunity, equalized odds~\citep{hardt2016equality}, and minimax group fairness~\citep{diana2021minimax}.

To solve the constrained optimization problem \ref{eq:empirical_obj_general}, a common method is to use Lagrangian multipliers. In particular, let $\boldsymbol{\lambda}\in\mathbb{R}_+^{Z}$ be a dual variable and assume $\boldsymbol{\lambda}$ has $\|\cdot\|_1$ at most $B$. The magnitude of $B$ could be viewed as the regularization strength for the constraint term. Objective \eqref{eq:empirical_obj_general} can then be converted into the following saddle point optimization problem:
\begin{equation}
\label{eq:empirical_obj}
    \min_w\max_{\boldsymbol{\lambda}\in\mathbb{R}_+^{Z}, \|\boldsymbol{\lambda}\|_1\leq B} G(w;\boldsymbol{\lambda}) = \beta F(w)+\boldsymbol{\lambda}^T\mathbf{r}(w) \, \tag{Main Objective},
\end{equation}
where the $q$-th index of $\mathbf{r}$ encodes the $q$-th constraint from $\mathbf{R}$ (i.e. $\mathbf{r}_q(w):=\mathbf{R}_q(w)-\zeta_q$) and $\beta$ is a fixed constant. In other words, the objective finds the best model under the scenario where the fairness constraint is most violated  (i.e., the regularization term is maximized). 

There are two steps needed in order to provide meaningful utility and fairness guarantees for the model found by solving \ref{eq:empirical_obj}: (1) showing that it is possible to recover a solution close to the \textit{`optimal'} solution, (2) providing an upper bound for both the risk ($F(w)$) and the fairness constraint ($\mathbf{r}(w)$) given this solution. To formally define what an \textit{`optimal'} solution is, in this work we aim to identify constraints that satisfy the following key assumption:
\begin{assumptionp}{0 (Convexity of $G$)}
Assume that $G(w;\boldsymbol{\lambda})$ is convex in $w$ for any fixed $\boldsymbol{\lambda}$.
\end{assumptionp}

\paragraph{Remark.} In particular, since $G$ is linear in $\boldsymbol{\lambda}$, given a fixed $w_0$, we can find a solution to the problem $\max_{\boldsymbol{\lambda}}G(w_0;\boldsymbol{\lambda})$, denoted as $\boldsymbol{\lambda}^*$, i.e. $G(w_0;\boldsymbol{\lambda}^*)\geq G(w_0;\boldsymbol{\lambda})$ for all $\boldsymbol{\lambda}$. When $G$ is convex in $w$, we can argue that given a fixed $\boldsymbol{\lambda}_0$, there exists $w^*$ that satisfies $w^*=\argmin_{w} G(w;\boldsymbol{\lambda}_0)$, i.e. $G(w^*;\boldsymbol{\lambda}_0)\leq G(w;\boldsymbol{\lambda}_0)$ for all $w$. Therefore, $(w^*,\boldsymbol{\lambda}^*)$ is a saddle point of $G(\cdot;\cdot)$, which is denoted as the optimal solution in our setting. 

% In this work, we assume that there exists a saddle point for the Equation \ref{eq:empirical_obj}.
% % , i.e. the solutions for the primal and dual are exactly the same. 
% This would require both $w$ is convex and $\boldsymbol{\lambda}$ is concave in $G(w;\boldsymbol{\lambda})$. Since $G(w;\boldsymbol{\lambda})$ is linear in $\boldsymbol{\lambda}$, it suffices to assume $G(w;\boldsymbol{\lambda})$ is convex in $w$. 

%Different from a lot of prior works~\citep{zeng2021improving, chu2021fedfair} studying group fair federated learning that proposed choosing bounded group loss difference for $\mathbf{R}(h)$, in this work, we focus on a different fairness definition known as \textit{Bounded Group Loss (BGL)}~\citep{agarwal2019fair} (defined below). We further extend Bounded Group Loss and introduce the \textit{Conditional Bounded Group Loss (CBGL)} for a more granular fairness notion.

\subsection{Formulating Fair FL: Bounded Group Loss and Variants}
\label{sec:formulation:objective}
Many prior works in fair federated learning consider instantiating $\mathbf{R}(h)$ in~\eqref{eq:empirical_obj_general} as a constraint that bounds the difference between any two groups' losses, a common technique used to enforce group fairness notions such as equalized odds and demographic parity~\citep[e.g.,][]{zeng2021improving, chu2021fedfair,cui2021addressing}.
% \gs{What is the bounded group loss difference? Can we define this? Why is this important for group fairness?}. 
Unfortunately, this results in $G(w;\boldsymbol{\lambda})$ becoming nonconvex in $w$, thus violating our Assumption~0. This nonconvexity is problematic as it increases the likelihood that a solver will find a local minima that either does not satisfy the fairness constraint or achieves poor utility. Instead of enforcing equity between the prediction quality of any two groups, in this work we explore using a constraint based on \textit{Bounded Group Loss (BGL)}~\citep{agarwal2019fair} which promotes worst group's prediction quality and propose new variants that can retain convexity assumptions while satisfying meaningful fairness notions. In particular, we explore three instantiations of group fairness constraints $\mathbf{R}(h)$ that satisfy Assumption~0 below.

\vspace{-.1in}
\paragraph{Instantiation 1 (Bounded Group Loss).} We begin by considering fairness via the Bounded Group Loss (defined below), which was originally proposed by~\citet{agarwal2019fair}. Different from applying Bounded Group Loss in a centralized setting, BGL in the context of federated learning requires that for any group $a\in A$, the average loss for \textit{all} data belonging to group $a$ is below a certain threshold. As we discuss in Section~\ref{sec:alg_and_theory} this (along with general constraints of FL such as communication) necessitates the development of novel solvers and analyses for the objective.  %In other words, instead of averaging over loss on each silo's local data, one would need to broadcast the total number of training samples belonging to every group $a$ across the network. For example, each client computes its group loss by summing over its loss incurred by data that belongs to group $a$, and scaled by the total number of samples from group $a$.

\begin{definition}[\citet{agarwal2019fair}]
A classifier $h$ satisfies Bounded Group Loss (BGL) at level $\zeta$ under distribution $\mathcal{D}$ if for all $a\in A$, we have $\mathbb{E}\left[l(h(x), y)|A=a\right]\leq\zeta$ .
\end{definition}
In practice, we could define empirical bounded group loss constraint at level $\zeta$ under the empirical distribution $\widehat{\mathcal{D}}=\frac{1}{K}\sum_{k=1}^K\widehat{\mathcal{D}}_k$ to be 
\vspace{-.1in}
\begin{align*}
    \label{eq:empirical_bgl}
    \mathbf{r}_a(h):=\sum_{k=1}^K\mathbf{r}_{a,k}(h)=\sum_{k=1}^K\left(1/m_a\sum_{a_{k,i}=a}l(h(x_{k,i}),y_{k,i})-\zeta/K\right)\leq 0\, .
\end{align*}

\vspace{-.1in}

\paragraph{Benefits of BGL.} BGL ensures that the prediction quality on any group reaches a certain threshold. Compared to standard loss parity constraints that aim to equalize the losses across all protected groups (e.g. overall accuracy equity~\citep{dieterich2016compas}), BGL has two main advantages. First, $G(w;\boldsymbol{\lambda})$ preserves convexity in $w$, as long as the loss function $l$ itself is convex. In contrast, loss parity constraints are generally non-convex even if $l$ is convex. Second, when the prediction difficulties are uneven across groups, loss parity may force an unnecessary drop of accuracy on some groups just to equalize all losses~\citep{agarwal2019fair}. In contrast, the criterion of BGL can avoid such undesirable effects.

\vspace{-.1in}
\paragraph{Instantiation 2 (Conditional Bounded Group Loss).} In some applications one needs a stronger fairness notion beyond ensuring that no group's loss is too large. For example, in the scenario of binary classification, a commonly used fairness requirement is equalized true positive rate or false positive rate~\citep{hardt2016equality}. In the context of optimization for arbitrary loss functions, a natural substitute is equalized true / false positive loss. In other words, any group's loss conditioned on positively / negatively labeled data should be equalized. Therefore, similar to BGL, we propose a novel fairness definition known as \textit{Conditional Bounded Group Loss (CBGL)} defined below:

\begin{definition}
\label{eq:cbgl}
A classifier $h$ satisfies Conditional Bounded Group Loss (CBGL) for $y\in Y$ at level $\zeta_y$ under distribution $\mathcal{D}$ if for all $a\in A$, we have $\mathbb{E}\left[l(h(x), y)|A=a, Y=y\right]\leq\zeta_y$. 
\end{definition}

In practice, we could define empirical Conditional Bounded Group Loss constraint at level $[\zeta_y]_{y\in Y}$ under $\widehat{\mathcal{D}}$ to be 
\vspace{-.1in}

\begin{equation*}
    \label{eq:empirical_cbgl}
    \mathbf{r}_{a,y}(h):=\sum_{k=1}^K\mathbf{r}_{(a,y),k}(h)=\sum_{k=1}^K\left(1/m_{a,y}\sum_{a_{k,i}=a, y_{k,i}=y}l(h(x_{k,i}),y_{k,i})-\zeta_y/K\right)\leq 0\, .
\end{equation*}

Note that satisfying CBGL for all $Y$ is a strictly harder problem than satisfying BGL alone. In fact, we can show that a classifier that satisfies CBGL at level $[\zeta_y]_{y\in Y}$ also satisfies BGL at level $\mathbb{E}_{y\sim\rho_a}[\zeta_y]$ where $\rho_a$ be the probability density of labels for all data from group $a$.

%  In the rest of the paper, we will refer problem \ref{eq:empirical_obj_general} to be the constrained optimization problem with $\mathbf{R}(h)$ replaced by Equation \ref{eq:empirical_bgl}, which is the main problem we propose to solve.
% With empirical risk and empirical bounded group loss constraint, we are ready to introduce our fairness-aware federated learning objective shown in the next section.

% Note that our objective is similar to the objective proposed in \cite{agarwal2018reductions}. Equation \ref{eq:empirical_obj} differs from the Equation 6 of \cite{agarwal2018reductions} in that we are using bounded group loss as the fairness constraint.
\vspace{-.1in}
\paragraph{Relationship between CBGL and Equalized Odds.}
For binary classification tasks in centralized settings, a common  fairness notion is Equalized Odds (EO)~\citep{hardt2016equality}, which requires the True/False Positive Rate to be equal for all groups. Our CBGL definition can be viewed as a relaxation of EO. Consider a binary classification example where $Y=\{0,1\}$. Let the loss function $l$ be the 0-1 loss.  CBGL requires classifier $h$ to satisfy $\text{Pr}[h(x)=y|Y={y_0},A=a]\leq \zeta_{y_0}$ for all $a\in A$ and $y_0\in Y$. EO requires $\text{Pr}[h(x)=y|Y={y_0},A=a]$ to be the same for all $a\in A$ given a fixed $y_0$, which may not be feasible if the hypothesis class $\mathcal{H}$ is not rich enough. Instead of requiring equity of each group's TPR/FPR, CBGL only imposes an upper bound for each group's TPR/FPR. Similar to the comparison between BGL and loss parity, CBGL offers more flexibility than EO since it does not force an artificial increase on FPR or FNR when a prediction task on one of the protected groups is much harder. In addition, for applications where logistic regression or DNNs are used (e.g., CV, NLP), it is uncommon to use the 0-1 loss in the objective. Thus, CBGL can provide a relaxed notion of fairness for more general loss functions whose level of fairness can be flexibly tuned.
% Different from imposing an upper bound for the difference between any two group's loss, a major benefit of setting $\mathbf{r}_a(h)$ to be the BGL constraint is $G(w;\boldsymbol{\lambda})$ preserves convexity in $w$ given the loss function $l$ itself is convex. Moreover, one could easily infer that a classifier that satisfies $\zeta$-BGL also satisfy that the difference between any two groups' losses is upper bounded by $\zeta$. As a result, this implies fair loss value across all protected groups. When choosing BGL as the fairness constraint, we set $\beta=1$ so that objective \ref{eq:empirical_obj} tries to jointly optimize the empirical risk and fairness constraint. 

\vspace{-.1in}
\paragraph{Instantiation 3 (MinMax Fairness).} Recently \citet{papadaki2022minimax} proposed a framework called FedMinMax by solving an agnostic fair federated learning framework where the weight is applied to empirical risk conditioned on each group. Note that using BGL as the fairness constraint, our framework could reduce to FedMinMax as a special case by setting $\beta=0, B=1$ and $\zeta=0$. 
% \sw{should we still state the minmax defn?}
\begin{definition}
Use the same definition of $\mathbf{r}_a(h)$ as we had in Instantiation 1. FedMinMax~\citep{papadaki2022minimax} aims to solve for the following objective: $\min_{h}\max_{\boldsymbol{\lambda}\in\mathbb{R}_+^{|A|}, \|\boldsymbol{\lambda}\|_1= 1}\sum_{a\in A}\boldsymbol{\lambda}_a\mathbf{r}_a(h)$.
\end{definition}

%\paragraph{Remark} 
Note that a key property of FedMinMax is the constant $\zeta$ used to upper bound the per group loss is set to 0. From a constrained optimization view, the only feasible solution that satisfies all fairness constraints for this problem is a model with perfect utility performance since requiring all losses to be smaller than 0 is equivalent to having all of them to be exactly 0. Such a property limits the ability to provide fairness guarantees for FedMinMax. Fixing $B$ and $\zeta$ also limits its empirical performance on the relation between fairness and utility, as we will show later in Appendix \ref{appen:fedminmax}.

% \paragraph{Why use BGL rather than another fairness constraint?}
% Many prior works choose the gap between every two group's loss as the fairness constraint and optimize the Lagrangian. Under such settings, the objective becomes non-convex in terms of the model weight, making it likely that a solver will find a local minima that either does not satisfy the fairness constraint or achieves poor utility. Unlike these approaches, BGL requires that for each group $a\in A$, the classifier $h$'s loss evaluated on all data with protected attribute $a$ is below a certain threshold. Therefore, given that the empirical risk is convex, adding the BGL constraint preserves convexity.
% % Different from using the gap between every two group's loss, our Objective \ref{eq:empirical_obj} that uses BGL preserves convexity given that the empirical risk is convex. 
% As we will see, a major benefit of using BGL relative to other alternatives is that it can satisfy meaningful fairness constraints while preserving convexity,  enabling both strong empirical performance and formal theoretical guarantees. %\shengyuan{Should we justify any reason other than BGL provides a formal theoretical guarantee?}

\vspace{-.1in}
\section{Provably Fair Federated Learning }
\label{sec:alg_and_theory}
\vspace{-.1in}
In this section, we first propose \textit{Provably Fair Federated Learning (PFFL)}, a scalable solver for \ref{eq:empirical_obj}, presented in Algorithm \ref{alg:1}. We provide formal convergence guarantees for the method in Section~\ref{sec:convg}. Given the solution found by PFFL, in Section~\ref{sec:fairguarantee} we then demonstrate the fairness guarantee for different examples of fairness notions defined in Section \ref{sec:setup} (BGL, CBGL). 

\subsection{Algorithm}
\vspace{-.1in}
To find a saddle point for \ref{eq:empirical_obj}, we follow the scheme from \citet{FREUND1997119} and summarize our solver for fair FL in Algorithm \ref{alg:1}. Our algorithm is based off of FedAvg~\citep{mcmahan2017communication}, a common scalable approach in federated learning. Intuitively, the method alternates between two steps:

\vspace{-0.05in}
\begin{itemize}[noitemsep]
\item[] (1) given a fixed $\boldsymbol{\lambda}$, optimize our regularized objective $F(w)+\boldsymbol{\lambda}^T\mathbf{r}(w)$ over $w$;
\item[] (2) given a fixed $w$, optimize the fairness violation term $\boldsymbol{\lambda}^T\mathbf{r}(w)$ over $\boldsymbol{\lambda}$. 
\end{itemize} 
\vspace{-0.05in}

\setlength{\textfloatsep}{20pt}
\begin{algorithm}[t]
\setlength{\abovedisplayskip}{0pt}
\setlength{\belowdisplayskip}{0pt}
\setlength{\abovedisplayshortskip}{0pt}
\setlength{\belowdisplayshortskip}{0pt}
\caption{PFFL: Provably Fair Federated Learning}
\label{alg:1}
\begin{algorithmic}[1]
    \STATE {\bf Input:}  $T$, $\eta_w$, $\eta_{\theta}$, $w_0$, $M$, $\nu$, $B$, $\theta^0=\bf{0}$, $\Bar{w}=\mathbf{0}$
    \FOR {$i=1,\cdots, E$}
    \STATE Set
    % \vspace{-0.1in}
    $\lambda_a = B\frac{\exp(\theta_a^i)}{1+\sum_{a'}\exp(\theta_{a'}^i)}$
    \FOR  {$t=0, \cdots, T-1$}
        \STATE Server broadcasts $w^t$ to all the clients.
        \FOR  {all $k$ in parallel}
            \STATE Each client updates its weight $w_k$ for  $J$ iterations
                \[
                    w_k^{t+1} = w^t-\eta_w\left(\nabla_{w^t}\left(f_k(w^t)+\boldsymbol{\lambda}^T\mathbf{r}\right)\right)
                \]
            \STATE Each client sends $g_k^{t+1}=w_k^{t+1}-w_k^t$ and $\mathbf{r}_{a,k}^t$ back to the server
        \ENDFOR
        \STATE Server aggregates the weight $w^{t+1} = w^{t} + \frac{1}{K}\sum_{k=1}^K g_k^{t+1}$.
        \STATE Update $\Bar{w}=\sum_{t=1}^Tw^t$ and set $w^0=w^T$    
    \ENDFOR
    \STATE Server updates $\theta$ which would be later used to update the dual variable $\boldsymbol{\lambda}$
    %  \vspace{-0.05in}
     \[\theta^{(i+1)}_a=\theta^{i}_a+\eta_{\theta}\sum_k\mathbf{r}_{a,k}^t\]
    \ENDFOR
    \STATE Server updates $\Bar{w}\leftarrow\frac{1}{ET}\Bar{w}$
    \IF{$\max_a\mathbf{r}_a\leq\frac{M+2\nu}{B}$}{
        \RETURN $\Bar{w}$
        }
    \ELSE{
        \RETURN null
        }
    \ENDIF
    % \RETURN $\Bar{w}$
\end{algorithmic}
\end{algorithm}

While \citet{agarwal2019fair} also follows a similar recipe to ensure BGL, our method needs to overcome additional challenges in the federated settings. In particular, the method in \citet{agarwal2019fair} optimizes $w$ by performing exact best response, which is in general in feasible when data for distributed data sets.
% While the method in \citet{agarwal2019fair} relies on the fact that one could find the best response given a fixed $\lambda$ (which is not feasible in federated settings with arbitrary convex loss functions), 
Our method overcomes this challenge by applying a gradient-descent-ascent style optimization process that utilizes the output of a FL learning algorithm as an approximation for the best response. In Algorithm \ref{alg:1}, we provide an example in which the first step is achieved by using FedAvg to solve $\min_w F(w)+\boldsymbol{\lambda}^T\mathbf{r}(w)$ (line 4-12). 
%Our method assumes full device participation during federated learning process. 
{Note that solving this objective does not require the FedAvg solver; any algorithm that learns a global model in FL could be used to find a certain $w$ given $\boldsymbol{\lambda}$.} After we obtain a global model from a federated training round, we use exponentiated gradient descent to update $\boldsymbol{\lambda}$, following Alg 2 in \citet{agarwal2019fair}. This completes one minimax optimization round. At the end of training, we calculate and return the average iterate as the fair global model. 
% Motivated by Algorithm 1 in \cite{agarwal2018reductions}, we propose solving for the second step by using exponentiated gradient (EG). 
% It is worth noting that the problem \ref{eq:empirical_obj_general} does not always have a solution $w^*$. For example when we set $\zeta=0$, requiring $\mathbf{r}(w)\leq 0$ is equivalent to requiring the empirical risk given any group $a\in A$ is non positive, which is only feasible when the loss is 0 for every data in the dataset. 

Note that the ultimate goal to solve for \ref{eq:empirical_obj} is to find a $w$ such that it minimizes the empirical risk subject to $\mathbf{r}(w)\leq 0$. Therefore, at the end of training, our algorithm checks whether the resulting model $\bar{w}$ violates the fairness guarantee by at most some constant error $\frac{M+2\nu}{B}$ where $M$ is the upper bound for the empirical risk and $\nu$ is the upper bound provided in Equation \ref{eq:convergence} (line 16-20). We will show in the Lemma \ref{lemma:empirical_fair_vio} that this is always true when there exists a solution $w^*$ for Problem \ref{eq:empirical_obj_general}. However, it is also worth noting that the Problem \ref{eq:empirical_obj_general} does not always have a solution $w^*$. For example when we set $\zeta=0$, requiring $\mathbf{r}(w)\leq 0$ is equivalent to requiring the empirical risk given any group $a\in A$ is non positive, which is only feasible when the loss is 0 for every data in the dataset. In this case, our algorithm will simply output \textit{null} if the fairness guarantee is violated by an error larger than $\frac{M+2\nu}{B}$.

\subsection{Convergence guarantee}
\label{sec:convg}
\vspace{-.05in}
Different from \citet{agarwal2019fair}, while our algorithm handles arbitrary convex losses in federated setting by replacing the best response with the FedAvg output, we want to show that after running finitely many rounds, how close our solution is to the actual best response. In this section, we provide a no regret bound style analysis for our PFFL algorithm. To formally measure the the distance between the solution found by our algorithm and the optimal solution, we introduce $\nu$-approximate saddle point as a generalization of saddle point (See Remark in Section \ref{sec:setup:setup}) defined below:

\begin{definition}
$(\widehat{w},\widehat{\boldsymbol{\lambda}})$ is a $\nu$-approximate saddle point of $G$ if 
\vspace{-.05in}
\begin{equation}
\begin{array}{cc}
    G(\widehat{w},\widehat{\boldsymbol{\lambda}})\leq G(w,\widehat{\boldsymbol{\lambda}})+\nu &  \text{for all }w\\
    G(\widehat{w},\widehat{\boldsymbol{\lambda}})\geq G(\widehat{w},\boldsymbol{\lambda})-\nu &  \text{for all }\boldsymbol{\lambda}
\end{array}
\end{equation} 
\end{definition}
\vspace{-.05in}
As an example, the optimal solution $(w^*,\boldsymbol{\lambda}^*)$ is a 0-approximate saddle point of $G$. To show convergence, we first introduce some basic assumptions below: 
\begin{assumption}
Let $f_k$ be $\mu$-strongly convex and $L$-smooth for all $k=1,\cdots,K$.
\end{assumption}

\begin{assumption}
Assume the stochastic gradient of $f_k$ has bounded variance: $\mathbb{E}[\|\nabla f_i(w_t^k;\xi_t^k)-\nabla f_k(w_t^k)\|^2]\leq \sigma_k^2$ for all $k=1,\cdots,K$.
\end{assumption}

\begin{assumption}
Assume the stochastic gradient of $f_k$ is uniformly bounded: $\mathbb{E}[\|\nabla f_k(w_t^k;\xi_t^k)\|^2]\leq G^2$ for all $k=1,\cdots,K$.
\end{assumption}
These are common assumptions used when proving the convergence for FedAvg~\citep[e.g.,][]{li2019convergence}. Now we present our main theorem of convergence:
\begin{theorem}[Informal Convergence Guarantee]
\label{th:convergence}
Let Assumption 1-3 hold. Define $\kappa=\frac{L}{\mu}$, $\gamma=\max\{8\kappa, J\}$ and step size $\eta_Q=\frac{2}{(\beta+B)\mu(\gamma+t)}$, and assume $\|\mathbf{r}\|_{\infty}\leq\rho$. Letting $\Bar{w}=\frac{1}{ET}\sum_{t=1}^{ET}w^t$, $\Bar{\boldsymbol{\lambda}}=\frac{1}{ET}\sum_{t=1}^{ET}\boldsymbol{\lambda}^t$, we have:
\vspace{-.1in}
\begin{equation}
\label{eq:convergence}
    \max_{\boldsymbol{\lambda}} G(\Bar{w};\boldsymbol{\lambda}) - \min_w G(w;\bar{\boldsymbol{\lambda}}) \leq \textcolor{newpurple}{\frac{1}{T}\sum_{t=1}^T\frac{\kappa}{\gamma+t-1}C}+\textcolor{newbrown}{\frac{B\log(Z+1)}{\eta_{\theta}ET} + \eta_{\theta}\rho^2B} \, ,
    \vspace{-.1in}
\end{equation}
where $C$ is a constant.
\end{theorem}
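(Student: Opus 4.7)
The plan is to bound the duality gap by splitting it into a primal regret (how well the FedAvg iterates minimize $G(\cdot;\boldsymbol{\lambda}^t)$ on average) and a dual regret (how well the exponentiated-gradient iterates maximize $G(w^t;\cdot)$ on average), and then to invoke established convergence results for each subroutine separately. Since $G(w;\boldsymbol{\lambda})=\beta F(w)+\boldsymbol{\lambda}^\top\mathbf{r}(w)$ is convex in $w$ by Assumption~0 and linear in $\boldsymbol{\lambda}$, Jensen's inequality applied to $\bar w$ and linearity applied to $\bar{\boldsymbol{\lambda}}$ give
\begin{equation*}
\max_{\boldsymbol{\lambda}} G(\bar w;\boldsymbol{\lambda}) - \min_w G(w;\bar{\boldsymbol{\lambda}}) \;\leq\; \max_{\boldsymbol{\lambda}} \tfrac{1}{ET}\sum_t G(w^t;\boldsymbol{\lambda}) - \min_w \tfrac{1}{ET}\sum_t G(w;\boldsymbol{\lambda}^t),
\end{equation*}
and adding and subtracting $\tfrac{1}{ET}\sum_t G(w^t;\boldsymbol{\lambda}^t)$ splits the right-hand side into the desired dual and primal regret terms.

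For the dual regret (the brown term), I would observe that because $G$ is linear in $\boldsymbol{\lambda}$ the per-step regret reduces to $(\boldsymbol{\lambda}-\boldsymbol{\lambda}^t)^\top\mathbf{r}(w^t)$, and the softmax parameterization $\lambda_a=B\exp(\theta_a)/(1+\sum_{a'}\exp(\theta_{a'}))$ together with $\theta^{i+1}=\theta^i+\eta_\theta\sum_k\mathbf{r}^t_{a,k}$ is exactly Hedge/exponentiated-gradient ascent on the $B$-scaled $(Z{+}1)$-simplex. Plugging $\|\mathbf{r}\|_\infty\leq\rho$ into the textbook EG regret bound and accounting for the fact that $\boldsymbol{\lambda}^t$ is piecewise constant over blocks of length $T$ (so one trades the $1/E$ from $E$ EG steps for $1/(ET)$ after the primal averaging is folded in) yields $B\log(Z+1)/(\eta_\theta ET)+\eta_\theta\rho^2 B$.

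For the primal regret (the purple term), I would exploit that within each outer epoch $i\in\{1,\dots,E\}$ the dual $\boldsymbol{\lambda}^i$ is held fixed, so Algorithm~1 is simply running FedAvg on the single objective $G(\cdot;\boldsymbol{\lambda}^i)$. Under Assumptions~1--3 this objective inherits $(\beta+B)\mu$-strong convexity and $(\beta+B)L$-smoothness because $\|\boldsymbol{\lambda}^i\|_1\leq B$ and each $\mathbf{r}_a$ is a weighted average of the $f_k$'s. With the stated diminishing step size $\eta_w=2/((\beta+B)\mu(\gamma+t))$, the FedAvg bound of \citet{li2019convergence} gives $\mathbb{E}[G(w^t;\boldsymbol{\lambda}^i)-\min_w G(w;\boldsymbol{\lambda}^i)]\leq \kappa C/(\gamma+t-1)$ for each inner iterate, where $C$ absorbs $\sigma_k^2$, $G^2$, and $\|w^{0,i}-w^{*,i}\|^2$. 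Averaging this per-iterate bound over the $T$ inner rounds and then over the $E$ epochs produces the purple term $\tfrac{1}{T}\sum_{t=1}^T\tfrac{\kappa C}{\gamma+t-1}$, since the per-epoch contribution depends on $t$ but not on $i$.

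The main obstacle I expect is showing that the constant $C$ in the FedAvg bound is genuinely \emph{uniform across epochs}, because the warm-start from the previous epoch's terminal iterate need not be close to the minimizer of the newly re-weighted objective $G(\cdot;\boldsymbol{\lambda}^{i+1})$. I would handle this by using strong convexity together with $\|\boldsymbol{\lambda}\|_1\leq B$ to bound $\|w^{0,i}-w^{*,i}\|^2$ by the diameter of the level set $\{w:F(w)\leq F(w^0)+B\rho\}$ (or equivalently by noting that the family of minimizers $\{\arg\min_w G(\cdot;\boldsymbol{\lambda}):\|\boldsymbol{\lambda}\|_1\leq B\}$ lies in a bounded set). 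A secondary wrinkle is that the dual update in Algorithm~1 uses $\mathbf{r}^t_{a,k}$ from the final inner step rather than the epoch average; since $\mathbf{r}$ is linear in $\boldsymbol{\lambda}$ the discrepancy can be absorbed by an additional $O(\eta_\theta\rho\cdot\text{(primal error)})$ term that is dominated by the purple bound and therefore does not change the stated rate. Combining the primal and dual bounds then yields the claimed inequality.
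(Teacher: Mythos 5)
Your proposal is correct and follows essentially the same route as the paper's proof: the same Jensen-based duality-gap decomposition into primal and dual regret, the same epoch-wise application of the FedAvg bound of \citet{li2019convergence} to the $(\beta+B)\mu$-strongly-convex, $(\beta+B)L$-smooth objective, and the same exponentiated-gradient regret bound on the $B$-scaled $(Z+1)$-simplex for the dual player. The uniformity issue you flag for $C$ is handled in the paper simply by taking $\max_i C_i$ and $\max_i\mathbb{E}[\|w^{1,i}-w^{*,i}\|^2]$ over epochs, so your proposed boundedness argument is, if anything, more careful than what the paper records.
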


% \begin{theorem}
% \label{th:convergence}
% Let Assumption 1-3 hold. Define $\kappa=\frac{L}{\mu}$, $\gamma=\max\{8\kappa, J\}$ and the learning rate $\eta_Q=\frac{2}{\mu(\gamma+t)}$, and assume $\|\widehat{r}_a(w)\|_{\infty}\leq\rho$. Letting $\Bar{w}=\frac{1}{T}\sum_{t=1}^{T}w^t$, $\Bar{\boldsymbol{\lambda}}=\frac{1}{T}\sum_{t=1}^{T}\boldsymbol{\lambda}^t$, we have
% \begin{equation}
% \label{eq:convergence}
%     \max_{\boldsymbol{\lambda}} G(\Bar{w};\boldsymbol{\lambda}) - \min_w G(w;\bar{\boldsymbol{\lambda}}) \leq \frac{1}{T}\sum_{t=1}^T\frac{\kappa}{\gamma+t-1}\left(\frac{2C}{\mu}+\frac{\mu\gamma}{2}\mathbb{E}[\|w^1-w^*\|^2]\right)+ \eta_{\theta}\rho^2(|A|-1)
% \end{equation} where \[C=\sum_{i=1}^N\sigma_i^2+6L\Gamma+8(J-1)^2G^2\]
% \end{theorem}
 The upper bound in Equation \ref{eq:convergence} consists of two parts: \textcolor{newpurple}{(1) the error for the FedAvg process to obtain $\bar{w}$} which is a term of order $\mathcal{O}(\log T/T)$; (2) \textcolor{newbrown}{the error for the Exponentiated Gradient Ascent process to obtain $\bar{\boldsymbol{\lambda}}$} which converges to a noise ball given a fixed $\eta_\theta$. Following Theorem \ref{th:convergence}, we could express the solution of Algorithm~\ref{alg:1} as a $\nu$-approximate saddle point of $G$ by picking appropriate $\eta_\theta$ and $T$:
\begin{corollary}
\label{cor:saddle}
Let $\eta_{\theta}=\frac{\nu}{2\rho^2B}$ and $T\geq\frac{1}{\nu(\gamma+1)-2\kappa\mathcal{C}}\left(\frac{4\rho^2B^2\log(Z+1)(\gamma+1)}{\nu E}+2\kappa \mathcal{C}(\gamma-1)\right)$, then $(\bar{w},\bar{\boldsymbol{\lambda}})$ is a $\nu$-approximate saddle point of $G$.
\end{corollary}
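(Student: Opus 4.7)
The plan is to derive this as a direct consequence of Theorem~\ref{th:convergence}. First, I would observe that the duality-gap bound in~\eqref{eq:convergence} already implies the $\nu$-approximate saddle point property. Indeed, the weak-duality sandwich $\min_w G(w;\bar{\boldsymbol{\lambda}}) \leq G(\bar{w},\bar{\boldsymbol{\lambda}}) \leq \max_{\boldsymbol{\lambda}} G(\bar{w};\boldsymbol{\lambda})$ shows that whenever the gap $\max_{\boldsymbol{\lambda}} G(\bar{w};\boldsymbol{\lambda}) - \min_w G(w;\bar{\boldsymbol{\lambda}}) \leq \nu$, both one-sided inequalities in the definition of a $\nu$-approximate saddle point follow automatically. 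So it is enough to pick $\eta_{\theta}$ and $T$ that force the right-hand side of~\eqref{eq:convergence} to lie below $\nu$.

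Next, I would split the budget $\nu$ into two halves, one per error source. Substituting $\eta_{\theta} = \nu/(2\rho^{2} B)$ makes the ascent noise term $\eta_{\theta}\rho^{2}B = \nu/2$ exactly, absorbing half of the tolerance, and turns the remaining optimization term into $\frac{B\log(Z+1)}{\eta_{\theta} E T} = \frac{2\rho^{2}B^{2}\log(Z+1)}{\nu E T}$. For the FedAvg term $\frac{1}{T}\sum_{t=1}^{T}\frac{\kappa C}{\gamma+t-1}$, I would bound the arithmetic mean of $\{1/(\gamma+t-1)\}_{t=1}^{T}$ via a standard integral comparison, then fold $C$ together with any harmonic-sum constants into the symbol $\mathcal{C}$ used in the statement; the result is an estimate of the shape $\frac{2\kappa\mathcal{C}}{\gamma+1}\bigl(1 + \tfrac{\gamma-1}{T}\bigr)$, which matches the $(\gamma+1)$-denominator and $(\gamma-1)$-factor appearing in the conclusion.

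Putting the three bounds together, the requirement that their sum does not exceed $\nu$ becomes
\[
\frac{2\kappa\mathcal{C}}{\gamma+1} \;+\; \frac{2\kappa\mathcal{C}(\gamma-1)}{(\gamma+1)T} \;+\; \frac{2\rho^{2}B^{2}\log(Z+1)}{\nu E T} \;\leq\; \frac{\nu}{2}.
\]
Isolating the $1/T$ pieces, multiplying through by $(\gamma+1)$, and solving for $T$ yields exactly
\[
T \;\geq\; \frac{1}{\nu(\gamma+1)-2\kappa\mathcal{C}}\left(\frac{4\rho^{2}B^{2}\log(Z+1)(\gamma+1)}{\nu E} + 2\kappa\mathcal{C}(\gamma-1)\right),
\]
which is the stated condition. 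Note that this rearrangement implicitly requires $\nu(\gamma+1) > 2\kappa\mathcal{C}$, i.e.\ the target accuracy $\nu$ must lie above the asymptotic FedAvg floor $\tfrac{2\kappa\mathcal{C}}{\gamma+1}$; I would flag this as a feasibility condition needed for the corollary to be non-vacuous.

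The main obstacle I foresee is not the saddle-point conversion, which is essentially bookkeeping, but rather choosing the right upper bound for $\frac{1}{T}\sum_{t=1}^{T}\frac{1}{\gamma+t-1}$ so the algebra collapses exactly to the $(\gamma+1)$- and $(\gamma-1)$-structured expression in the corollary. Different tightenings (harmonic-sum vs.\ integral vs.\ maximal-term) change only the constant $\mathcal{C}$, so matching the precise statement requires an elementary but careful manipulation; beyond that, the corollary is really a parameter-selection exercise layered on top of Theorem~\ref{th:convergence}.
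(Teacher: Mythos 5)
Your approach is the same as the paper's: reduce the saddle-point claim to the duality-gap bound of Theorem~\ref{th:convergence} (the weak-duality sandwich $\min_w G(w;\bar{\boldsymbol{\lambda}})\le G(\bar w;\bar{\boldsymbol{\lambda}})\le\max_{\boldsymbol{\lambda}}G(\bar w;\boldsymbol{\lambda})$ you invoke is exactly the step the paper leaves implicit), spend half the budget via $\eta_\theta=\nu/(2\rho^2B)$ so that $\eta_\theta\rho^2B=\nu/2$, bound the harmonic-type FedAvg term by a $T$-independent floor plus an $O(1/T)$ remainder, and solve for $T$. The one thing to fix is a factor of two in your FedAvg estimate: the tightening that makes the algebra close is $\frac{1}{T}\sum_{t=1}^T\frac{\kappa\mathcal{C}}{\gamma+t-1}\le\frac{\kappa\mathcal{C}}{T}\bigl(\log(\gamma+T-1)+1-\log(\gamma+1)\bigr)\le\frac{\kappa\mathcal{C}}{T}\cdot\frac{\gamma+T-1}{\gamma+1}=\frac{\kappa\mathcal{C}}{\gamma+1}+\frac{\kappa\mathcal{C}(\gamma-1)}{(\gamma+1)T}$, i.e.\ with coefficient $\kappa\mathcal{C}$, not $2\kappa\mathcal{C}$. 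If you keep your coefficient, the requirement $\frac{2\kappa\mathcal{C}}{\gamma+1}+\frac{2\kappa\mathcal{C}(\gamma-1)}{(\gamma+1)T}+\frac{2\rho^2B^2\log(Z+1)}{\nu ET}\le\frac{\nu}{2}$ solves to a threshold involving $\nu(\gamma+1)-4\kappa\mathcal{C}$ and $4\kappa\mathcal{C}(\gamma-1)$ rather than the stated $2\kappa\mathcal{C}$ quantities, so your claimed ``exactly'' does not hold as written; with coefficient $\kappa\mathcal{C}$ the same rearrangement reproduces the corollary verbatim. Your feasibility observation that $\nu(\gamma+1)>2\kappa\mathcal{C}$ is needed for the bound on $T$ to be positive is correct and is not stated in the paper.
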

We provide detailed proofs for both Theorem \ref{th:convergence} and Corollary \ref{cor:saddle} in Appendix \ref{appen:convergence}. Different from the setting in prior FedAvg analyses~\citep[e.g.,][]{li2019convergence}, in our case the outer minimization problem changes as $\boldsymbol{\lambda}$ gets updated. Therefore, our analysis necessitates considering a more general scenario where the objective function could change over time. 
% To find an approximate saddle point in this setting we propose providing guarantees on the loss induced by the average iterates rather than the final iterate of the federated learning process. 
%Moreover, to find the approximate saddle point in the case where the objective function changes over time, we propose providing guarantees on the loss induced by the average iterates rather than the final iterate of an federated learning process.
% \gs{Can you add a sentence or two here explaining the main approach in the convergence guarantee, and how this differs from prior analyses?}

\subsection{Fairness guarantee}
\label{sec:fairguarantee}
\vspace{-.05in}
In the previous section, we demonstrated that our Algorithm~\ref{alg:1} could converge and find a $\nu$-approximate saddle point of the objective $G$. In this section, we further motivate why we care about finding a $\nu$-approximate saddle point. The ultimate goal for our algorithm is to: (1) learn a model that produces fair predictions on training data, and (2) more importantly, produces fair predictions on test data, i.e., data from federated clients not seen during training. 

Before presenting the formal fairness and generalization guarantees, we state the following additional assumption, which is a common assumption for showing the generalization guarantee using the Rademacher complexity generalizations bound~\citep{mohri2019agnostic}.
% \gs{Can you say something about why this assumption is necessary?}
\begin{assumption}
Let $\mathcal{F}$ and $F$ be upper bounded by constant $M$.
\end{assumption}
 We first show the fairness guarantee on the training data. 
\begin{lemma}[Empirical Fairness Guarantee]
\label{lemma:empirical_fair_vio}
Assume there exists $w^*$ satisfies $\mathbf{r}(w^*)\leq \mathbf{0}_{Z}$, we have 
\begin{equation}
    \max_{j}\mathbf{r}_j(\bar{w})_+\leq \frac{M+2\nu}{B}.
\end{equation}
\end{lemma}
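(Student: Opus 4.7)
The plan is to exploit the two defining inequalities of a $\nu$-approximate saddle point by plugging in two carefully chosen test points: the feasible reference solution $w^{*}$ on the primal side, and a vertex of the dual ball that concentrates all of its mass on the most violated constraint on the dual side. Combining these will sandwich $G(\bar w,\bar{\boldsymbol\lambda})$ between a quantity controlled by $M$ and a quantity that exposes $B\cdot\max_j \mathbf r_j(\bar w)$.

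Concretely, first I would dispose of the trivial case: if $\max_j \mathbf r_j(\bar w)\le 0$, then $\max_j \mathbf r_j(\bar w)_{+}=0$ and the claim holds. So assume there is some coordinate $j^{*}=\argmax_j \mathbf r_j(\bar w)$ with $\mathbf r_{j^{*}}(\bar w)>0$. Set $\boldsymbol\lambda^{\dagger}:=B\,e_{j^{*}}$, which lies in the feasible dual set $\{\boldsymbol\lambda\in\mathbb R_{+}^{Z}:\|\boldsymbol\lambda\|_{1}\le B\}$. Applying the dual side of the $\nu$-approximate saddle point property (i.e.\ the second inequality in the definition) with $\boldsymbol\lambda=\boldsymbol\lambda^{\dagger}$ gives
\begin{equation*}
G(\bar w,\bar{\boldsymbol\lambda})\;\ge\;G(\bar w,\boldsymbol\lambda^{\dagger})-\nu\;=\;\beta F(\bar w)+B\,\mathbf r_{j^{*}}(\bar w)-\nu.
\end{equation*}

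Next, I would use the primal side of the $\nu$-approximate saddle point property with $w=w^{*}$:
\begin{equation*}
G(\bar w,\bar{\boldsymbol\lambda})\;\le\;G(w^{*},\bar{\boldsymbol\lambda})+\nu\;=\;\beta F(w^{*})+\bar{\boldsymbol\lambda}^{\top}\mathbf r(w^{*})+\nu.
\end{equation*}
Here feasibility of $w^{*}$ (that is, $\mathbf r(w^{*})\le\mathbf 0_{Z}$) together with $\bar{\boldsymbol\lambda}\ge\mathbf 0$ makes the inner product non-positive, so this upper bound simplifies to $\beta F(w^{*})+\nu$, and Assumption~4 ($F\le M$, and presumably $\beta=1$ as in the paper's main objective) gives $G(\bar w,\bar{\boldsymbol\lambda})\le M+\nu$.

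Finally, chaining the two bounds yields $\beta F(\bar w)+B\,\mathbf r_{j^{*}}(\bar w)-\nu \le M+\nu$, and since $F(\bar w)\ge 0$ (the loss is non-negative), rearranging gives $\mathbf r_{j^{*}}(\bar w)\le (M+2\nu)/B$, which is the claim. The only mild subtlety I foresee is keeping track of the constant $\beta$ and the sign of $F(\bar w)$; dropping $\beta F(\bar w)$ from the left-hand side is legitimate because $\beta\ge 0$ and losses are non-negative, but this should be stated explicitly. Everything else is a direct substitution into the two saddle-point inequalities.
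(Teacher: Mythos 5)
Your proof is correct and follows essentially the same route as the paper's: the paper's first inequality, $\bar{\boldsymbol\lambda}^{\top}\mathbf r(\bar w)\geq B\max_{z}\mathbf r_z(\bar w)_{+}-\nu$ (quoted from Lemma~1 of \citet{agarwal2018reductions}), is exactly your dual test point $\boldsymbol\lambda^{\dagger}=B\,e_{j^{*}}$, and the rest of the chain (primal side at $w^{*}$, feasibility killing $\bar{\boldsymbol\lambda}^{\top}\mathbf r(w^{*})$, then $F(w^{*})\leq M$ and $F(\bar w)\geq 0$) matches the paper step for step. Your explicit handling of the trivial case and of the constant $\beta$ is a minor tidiness improvement over the paper's writeup, which implicitly takes $\beta=1$.
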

\vspace{-.1in}
Lemma \ref{lemma:empirical_fair_vio} characterizes the upper bound for the worst fairness constraint evaluated on the training data. Given a fixed $\nu$, one could increase $B$ to obtain a stronger fairness guarantee, i.e. a smaller upper bound. Combining this with Corollary \ref{cor:saddle}, it can be seen that when $B$ is large, additional exponentiated gradient ascent rounds are required to achieve stronger fairness.

Next we formalize the fairness guarantee for the entire true data distribution. Define the true data distribution to be $\mathcal{D}=\frac{1}{K}\sum_{k=1}^K\mathcal{D}_k$. We would like to formalize how well our model is evaluated on the true distribution $\mathcal{D}$ as well as how well the fairness constraint is satisfied under $\mathcal{D}$. This result is presented below in Theorem \ref{th:fairness}.
\begin{theorem}[Full Fairness and Generalization Guarantee]
\label{th:fairness}
Let Assumption 1-4 holds and $(\bar{w},\bar{\boldsymbol{\lambda}})$ a $\nu$-approximate saddle point of $G$. Then with probability $1-\delta$, either there doesn't exist solution for Problem \ref{eq:empirical_obj_general} and Algorithm \ref{alg:1} returns null or Algorithm \ref{alg:1} returns $\bar{w}$ satisfies
\begin{equation}
\label{eq:fairness}
\begin{array}{c}
    \mathcal{F}(\bar{w})\leq\mathcal{F}(w^*)+2\nu+4\mathfrak{R}_m(\mathcal{H})+\frac{2M}{K}\sqrt{\sum_{k=1}^K\frac{1}{2m_k}\log(2/\delta)},\\
    \mathfrak{r}_j(\bar{w})\leq\frac{M+2\nu}{B}+Gen_{\mathbf{r},j}
    % 2\mathfrak{R}_a(\mathcal{H})+\frac{M}{m_a}\sqrt{\frac{K}{2}\log(1/\delta)}
\end{array}
\end{equation}
where  $w^*$ is a solution for Problem \ref{eq:empirical_obj_general} and $Gen_{\mathbf{r}}$ is the generalization error. 
\end{theorem}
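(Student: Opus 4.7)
The plan is to combine the $\nu$-approximate saddle-point characterization of $(\bar w,\bar{\boldsymbol\lambda})$ with a uniform-convergence argument tailored to the federated empirical measure. Conditional on the existence of a feasible optimum $w^*$ for Problem~\ref{eq:empirical_obj_general} (so the algorithm is in its non-null branch), I would first show that on the training data $F(\bar w)$ is at most $F(w^*)$ up to $O(\nu)$, and that $\mathbf r(\bar w)$ obeys Lemma~\ref{lemma:empirical_fair_vio}. I would then lift both empirical inequalities to the true distribution $\mathcal D$ via a Rademacher-plus-McDiarmid bound adapted to the weighted average over $K$ clients.

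For the empirical risk step, I would use the two sides of the saddle-point definition. Setting $w=w^*$ in the first inequality gives $\beta F(\bar w)+\bar{\boldsymbol\lambda}^{T}\mathbf r(\bar w)\leq \beta F(w^*)+\bar{\boldsymbol\lambda}^{T}\mathbf r(w^*)+\nu$; since $\bar{\boldsymbol\lambda}\geq 0$ and $\mathbf r(w^*)\leq 0$ by feasibility, the last term is non-positive. Setting $\boldsymbol\lambda=\mathbf 0$ in the second saddle-point inequality yields $\bar{\boldsymbol\lambda}^{T}\mathbf r(\bar w)\geq -\nu$. Adding the two and absorbing $\beta$ gives $F(\bar w)\leq F(w^*)+2\nu$. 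For the empirical fairness bound I can invoke Lemma~\ref{lemma:empirical_fair_vio} directly to get $\mathbf r_j(\bar w)\leq (M+2\nu)/B$ for every $j$.

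For the generalization step, note that $F(h)=\tfrac1K\sum_k\tfrac1{m_k}\sum_i l(h(x_{k,i}),y_{k,i})$ is a weighted average assigning mass $1/(Km_k)$ to each point in client $k$, so replacing a single sample changes $F(h)$ by at most $M/(Km_k)$ under Assumption~4. Applying McDiarmid with these bounded differences to $\sup_{h\in\mathcal H}(\mathcal F(h)-F(h))$, and controlling the expected supremum by the standard symmetrization inequality $\mathbb E\sup_h(\mathcal F(h)-F(h))\leq 2\mathfrak R_m(\mathcal H)$, yields with probability at least $1-\delta/2$ a one-sided uniform bound $\sup_h(\mathcal F(h)-F(h))\leq 2\mathfrak R_m(\mathcal H)+\tfrac{M}{K}\sqrt{\sum_k\tfrac{\log(2/\delta)}{2m_k}}$. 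Applying the same bound in the reverse direction for $w^*$ with another $\delta/2$ and chaining $\mathcal F(\bar w)\leq F(\bar w)+\mathrm{gen}\leq F(w^*)+2\nu+\mathrm{gen}\leq \mathcal F(w^*)+2\nu+2\,\mathrm{gen}$ produces the first stated bound. The same recipe applied to the loss restricted to the group event $\{A=a\}$ (or $\{A=a,Y=y\}$ for CBGL) yields the term $Gen_{\mathbf r,j}$ that controls $\mathfrak r_j(\bar w)-\mathbf r_j(\bar w)$; combining with the empirical fairness bound produces the second line of the theorem.

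The main obstacles are two. First, the statement must cover both branches of Algorithm~\ref{alg:1}, since Lemma~\ref{lemma:empirical_fair_vio} implicitly assumes a feasible $w^*$ exists; I would open the proof by conditioning on this event, so the non-null branch is well defined and the chain of inequalities above is valid, while the other branch is disposed of by definition. Second, the concentration step must respect the federated weighting rather than treating all $\sum_k m_k$ samples symmetrically: the per-client weights $1/(Km_k)$ are unequal, so carrying the client-wise bounded-differences constants through McDiarmid is what produces the somewhat unusual factor $\tfrac{M}{K}\sqrt{\sum_k(1/2m_k)\log(2/\delta)}$. Doing this uniformly over $\mathcal H$ and over the $Z$ constraint coordinates $\mathbf r_j$ (with an appropriate union bound to keep a single failure probability $\delta$) is where most of the bookkeeping lies.
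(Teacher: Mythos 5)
Your proposal matches the paper's proof in all essentials: the same saddle-point argument (test $w=w^*$ in one inequality, $\boldsymbol{\lambda}=\mathbf{0}$ in the other) giving $F(\bar w)\le F(w^*)+2\nu$, the same appeal to Lemma~\ref{lemma:empirical_fair_vio} for the empirical constraint violation, and the same client-weighted Rademacher-plus-McDiarmid generalization bounds (which the paper imports from \citet{mohri2019agnostic} and states as Lemmas for BGL/CBGL rather than re-deriving), combined with the same case split on whether a feasible $w^*$ exists. The only difference is cosmetic bookkeeping of the $\delta$ budget across the utility and constraint events, which the paper resolves by allotting $\delta/2$ to the two-sided risk bound and $\delta/2$ to the union over constraints.
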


The first part for Equation \ref{eq:fairness} characterizes how well our model performs over the true data distribution compared to the optimal solution. As number of clients $K$ increases, we achieve smaller generalization error. The second part for Equation \ref{eq:fairness} characterizes how well the fairness constraints are satisfied over the true data distribution. Note that the upper bound could be viewed as the sum of empirical fairness violation and a generalization error. Based on our fairness notions defined in Section \ref{sec:formulation:objective}, we demonstrate what generalization error is under different fairness constraints $\mathbf{r}$.

\paragraph{Proposition 1 ($\mathbf{r}$ encodes BGL at level $\zeta$).}
There are in total $|A|$ fairness constraints, one for each group. Define the weighted rademacher complexity for group $a$ as 

$\mathfrak{R}_a(\mathcal{H})=\mathbb{E}_{ S_k\sim \mathcal{D}_k^{m_k}, \sigma}\left[\sup_{h\in\mathcal{H}}\sum_{k=1}^K\frac{1}{m_a}\sum_{a_{k,i}=a}\sigma_{k,i}l\left(h(x_{k,i}), y_{k,i}\right)\right]$.
In this scenario, we have: \[Gen_{\mathbf{r},a}=2\mathfrak{R}_a(\mathcal{H})+\frac{M}{m_a}\sqrt{\frac{K}{2}\log(2|A|/\delta)}.\]
Note that the fairness constraint for group $a$ under true distribution in Equation \ref{eq:fairness} is upper bounded by $\mathcal{O}\left(\frac{\sqrt{K}}{m_a}\right)$. For any group $a_0$ with sufficient data, i.e., $m_{a_0}$ is large, the BGL constraint with respect to group $a_0$ under $\mathcal{D}$ has a stronger formal fairness guarantee compared to any group with less data. It is also worth noting that this generalization error grows as the number of clients $K$ grows. Recall that the generalization error becomes smaller when $K$ grows; combing the two results together provides us a tradeoff between fairness notion of BGL and utility over the true data distribution in terms of $K$.

\vspace{-.1in}
\paragraph{Proposition 2 ($\mathbf{r}$ encodes CBGL at level $[\zeta_y]_{y\in Y}$).}
There are in total $|A||Y|$ fairness constraints, one for each group and label. Define the weighted rademacher complexity for group $a$ conditioned on $y$ as $\mathfrak{R}_{a,y}(\mathcal{H})=\mathbb{E}_{ S_k\sim \mathcal{D}_k^{m_k}, \sigma}\left[\sup_{h\in\mathcal{H}}\sum_{k=1}^K\frac{1}{m_{a,y}}\sum_{a_{k,i}=a,y_{k,i}=y}\sigma_{k,i}l\left(h(x_{k,i}), y\right)\right]$ where $m_{a,y}$ is the number of all examples from group $a$ with label $y$.
In this scenario, we have: \[Gen_{\mathbf{r},{(a,y)}}=2\mathfrak{R}_{a,y}(\mathcal{H})+\frac{M}{m_{a,y}}\sqrt{\frac{K}{2}\log(2|A||Y|/\delta)}.\]
Similar to Proposition 1, in order to achieve strong fairness guarantees for any specific constraint on the true data distribution, we need a sufficient number of samples associated with that constraint.

We provide details and proof for Theorem \ref{th:fairness} in Appendix \ref{appen:fairness}. Different from the analysis performed in \citet{agarwal2019fair}, we analyze the generalization behaviour in federated setting where we introduce the generalization bound as a function of number of clients $K$. We then further formally demonstrate the tension between utility and fairness performance evaluated on the true data distribution induced by $K$, which has not been studied previously to the best of our knowledge.

%\vspace{-.1in}
\section{Experiments}
\label{sec:exp}
\vspace{-.1in}
We evaluate PFFL (Algorithm \ref{alg:1}) empirically on ProPublica COMPAS, a dataset commonly studied in fair ML~\citep{angwin2016machine,zeng2021improving}; the US-wide ACS PUMS data, a recent group fairness benchmark dataset~\citep{ding2021retiring}; and  CelebA~\citep{caldas2018leaf}, a common federated learning dataset. We compare our method with training a vanilla FedAvg model in terms of both fairness and utility in Section \ref{sec:exp:fair_ut}, and explore performance relative to baselines that aim to enforce other fairness notions (Demographic Parity and Equal Opportunity) in Section~\ref{sec:exp:other}. %We further explore the empirical difference between training with our global BGL constraint vs. local BGL constraints in Section \ref{sec:appendix:global_vs_local}. 

\begin{figure*}[t!]
\vspace{-0.2in}
    \centering
    \begin{tabular}{ccc}
    \begin{subfigure}{0.3\textwidth}
        \centering
        \includegraphics[width=0.99\textwidth]{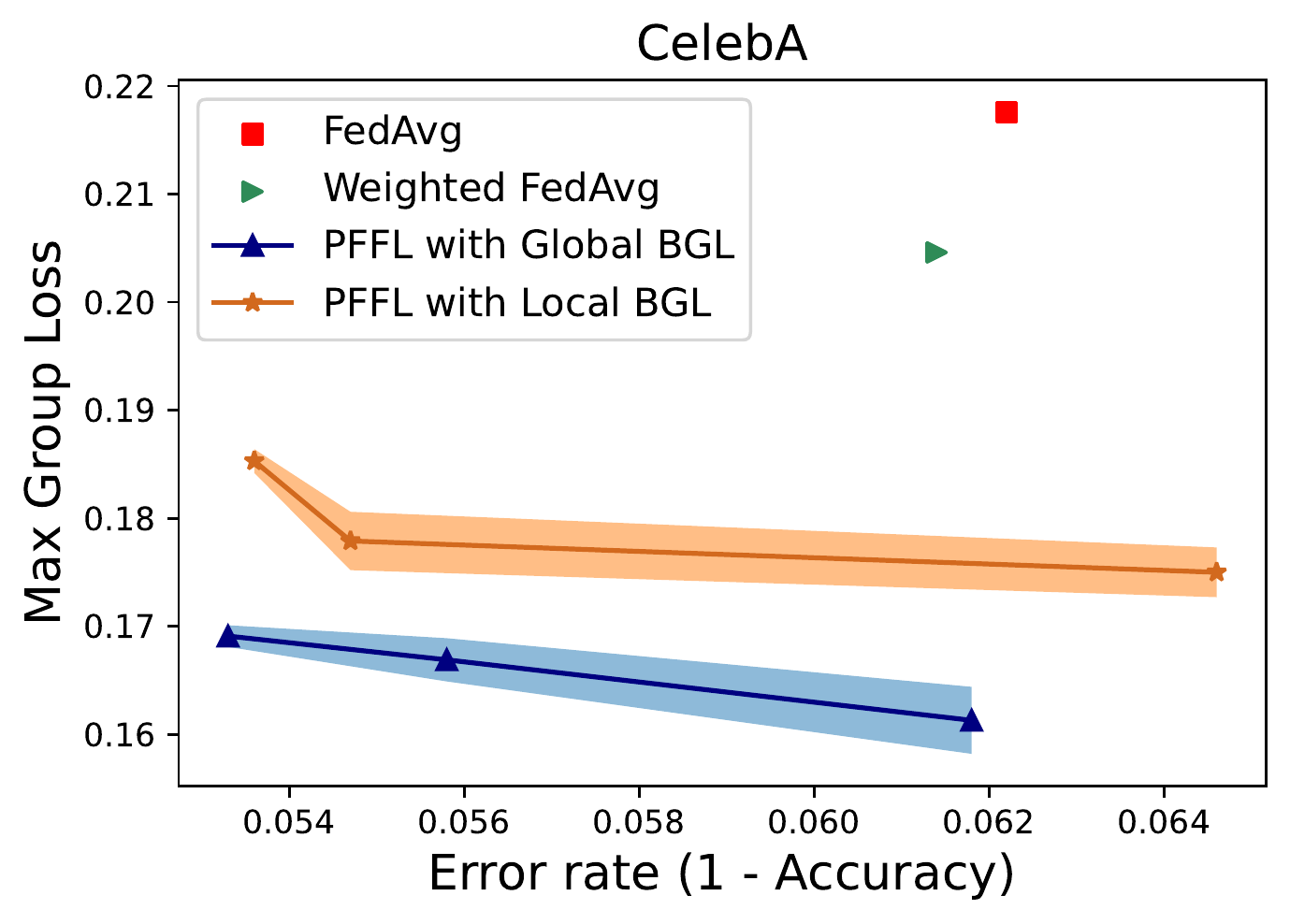}
    \end{subfigure} &
    \begin{subfigure}{0.3\textwidth}
        \centering
        \includegraphics[width=0.99\textwidth]{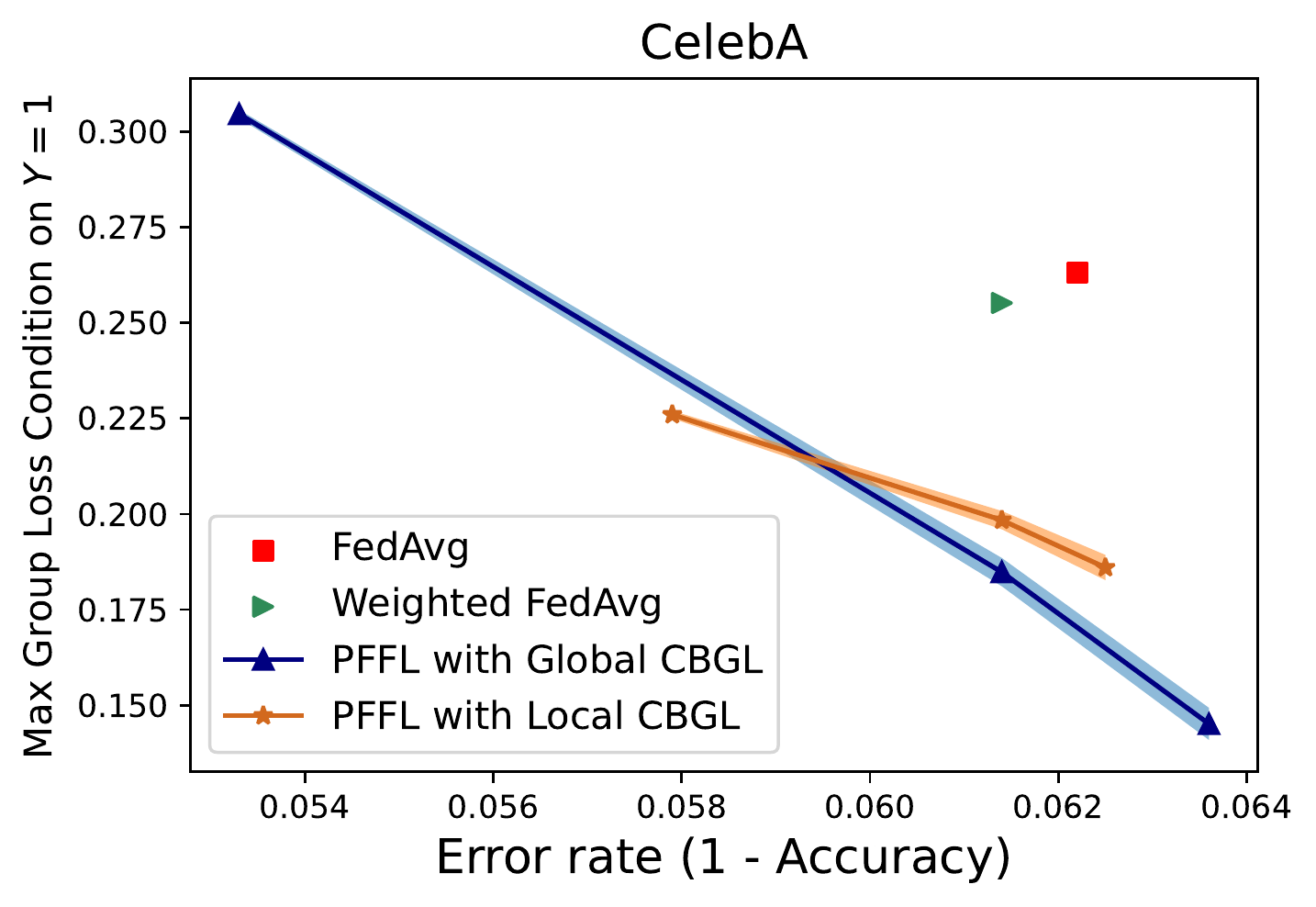}
    \end{subfigure} &
    \begin{subfigure}{0.3\textwidth}
        \centering
        \includegraphics[width=0.99\textwidth,]{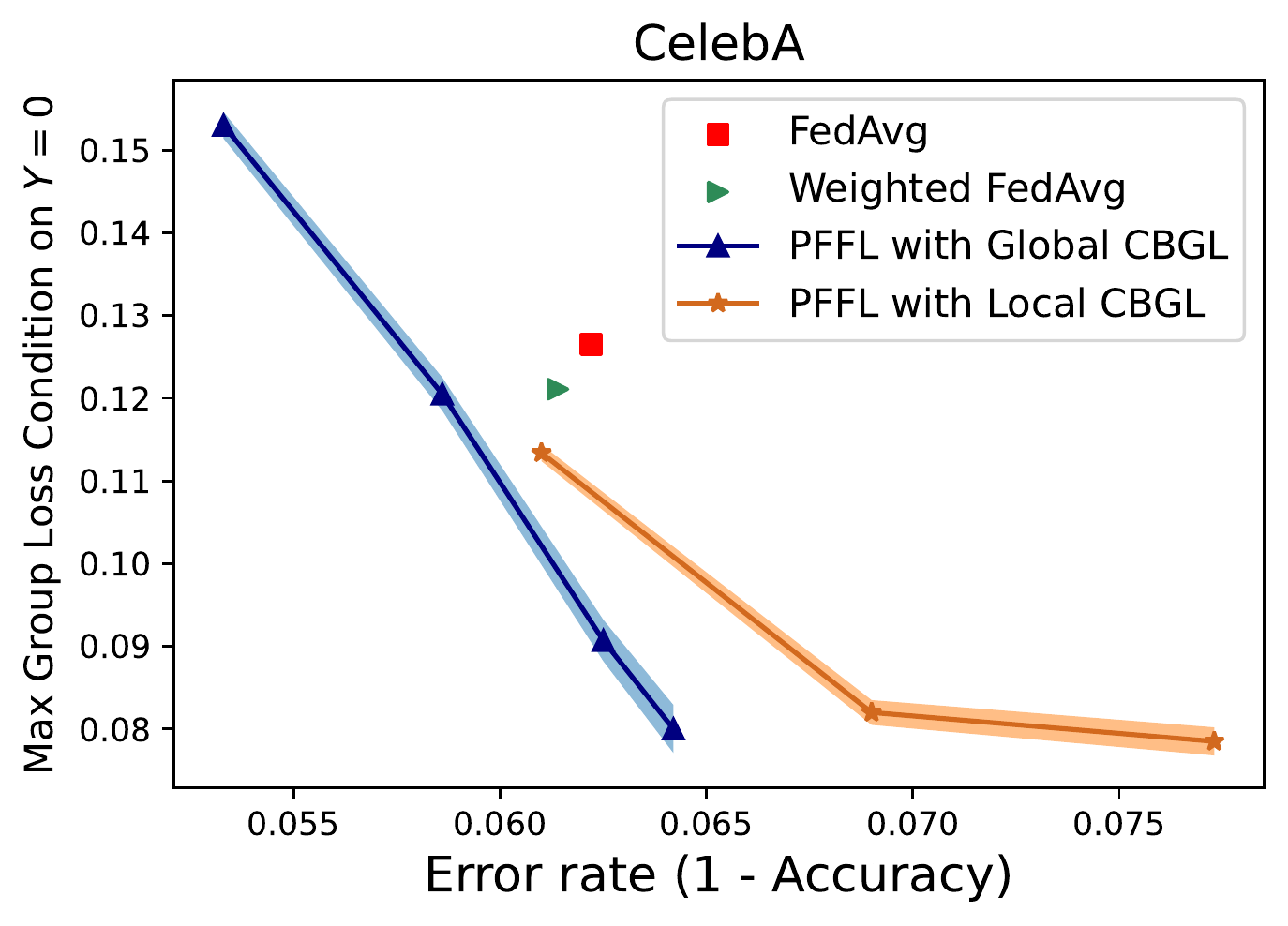}
    \end{subfigure} \\
    % \vspace{0.2in}
    \begin{subfigure}{0.3\textwidth}
        \centering
        \includegraphics[width=0.95\textwidth,trim=10 10 10 0]{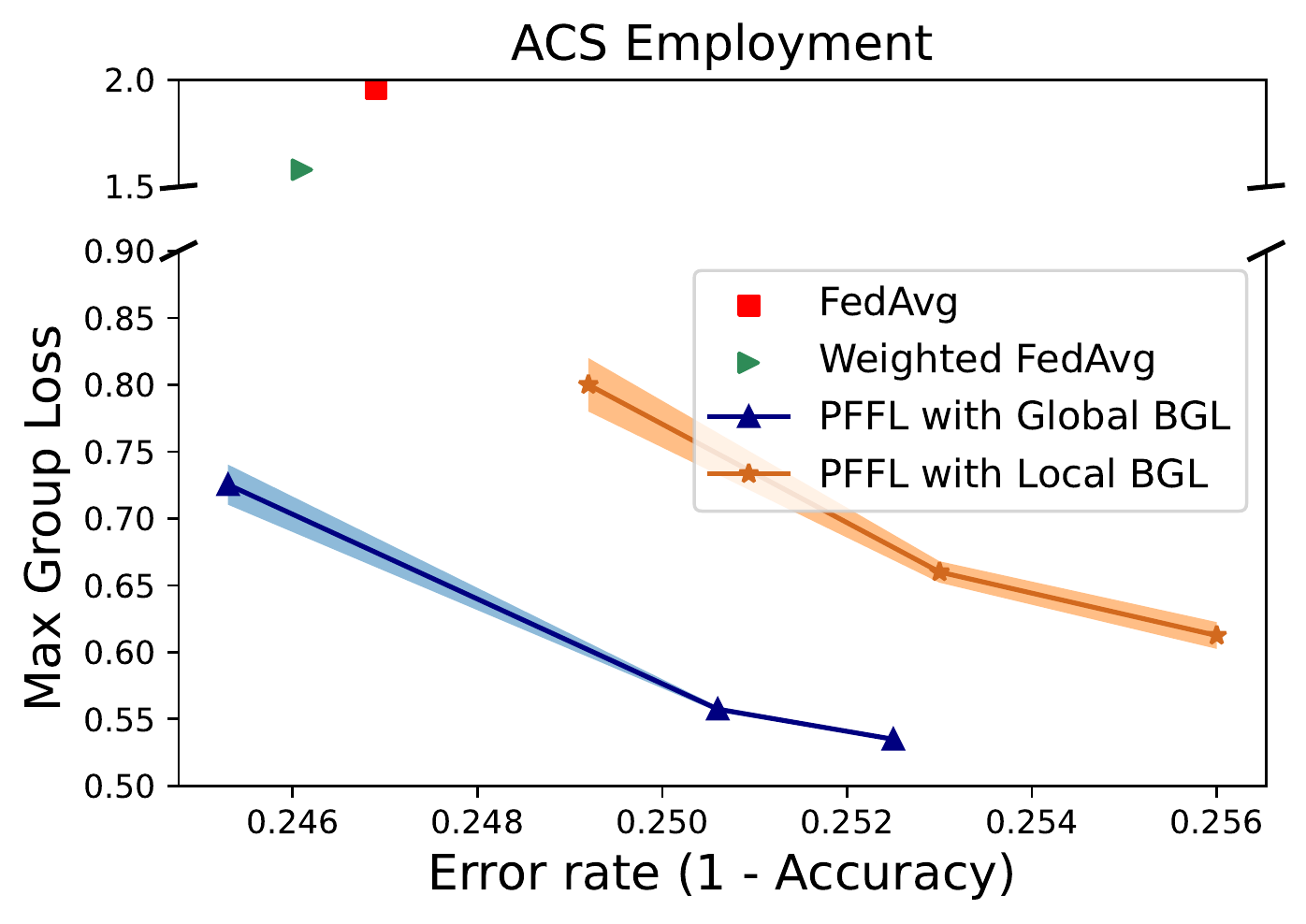}
    \end{subfigure} &
    \begin{subfigure}{0.3\textwidth}
        \centering
        \includegraphics[width=0.95\textwidth,trim=10 10 10 0]{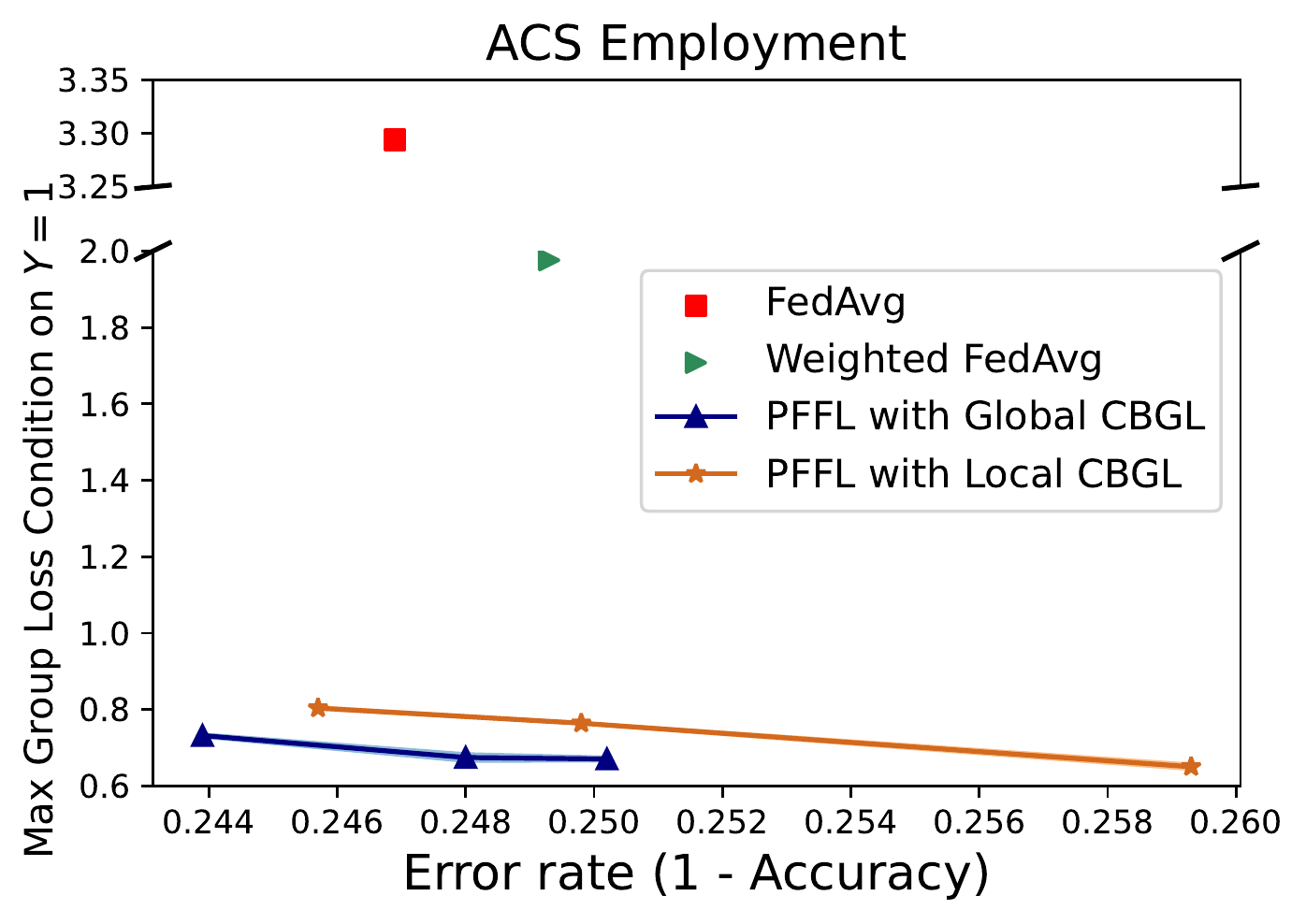}
    \end{subfigure} &
    \begin{subfigure}{0.3\textwidth}
        \centering
        \includegraphics[width=0.95\textwidth,trim=10 10 10 0]{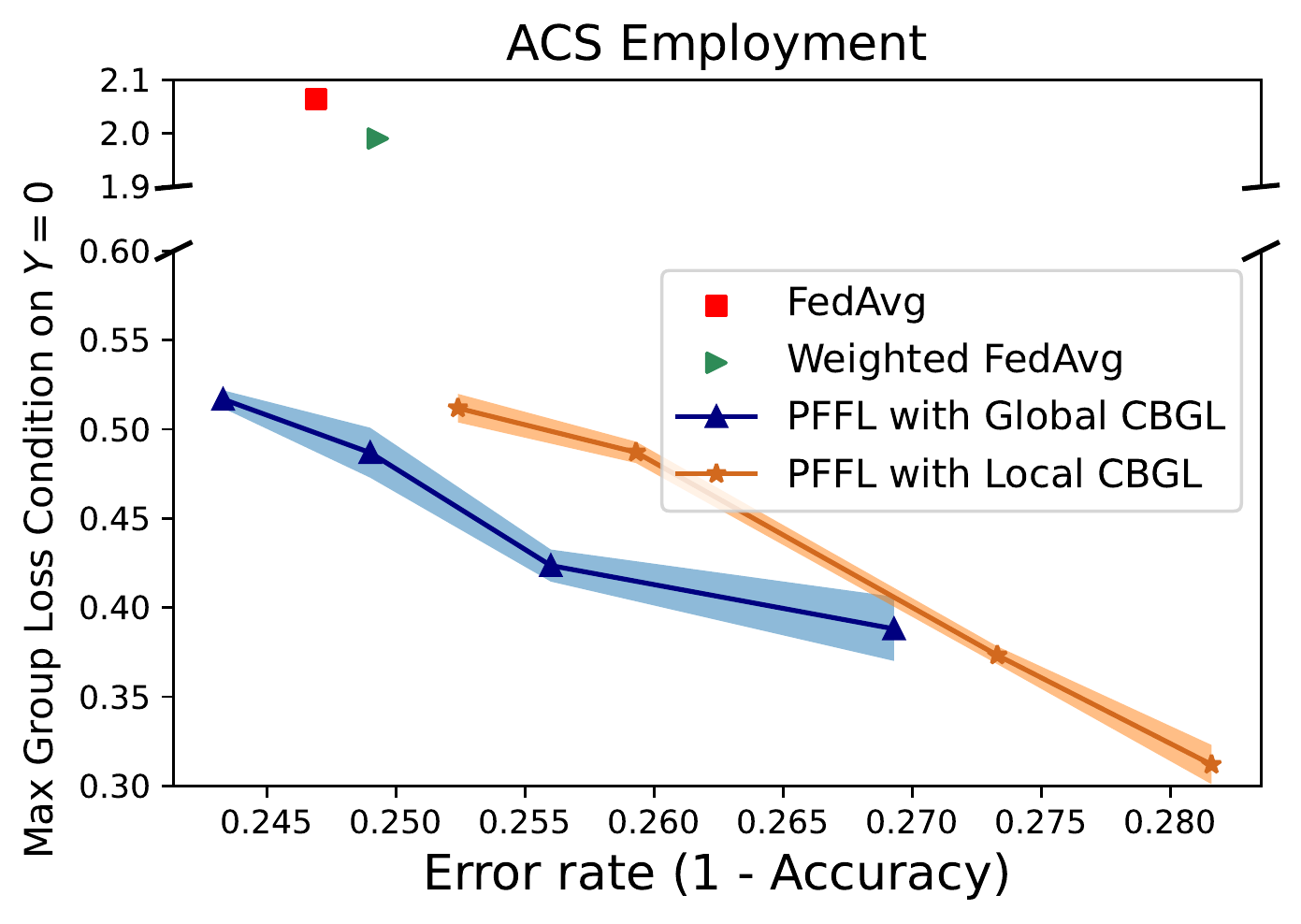}
    \end{subfigure} \\
    \begin{subfigure}{0.3\textwidth}
        \centering
        \includegraphics[width=0.95\textwidth,trim=10 10 10 0]{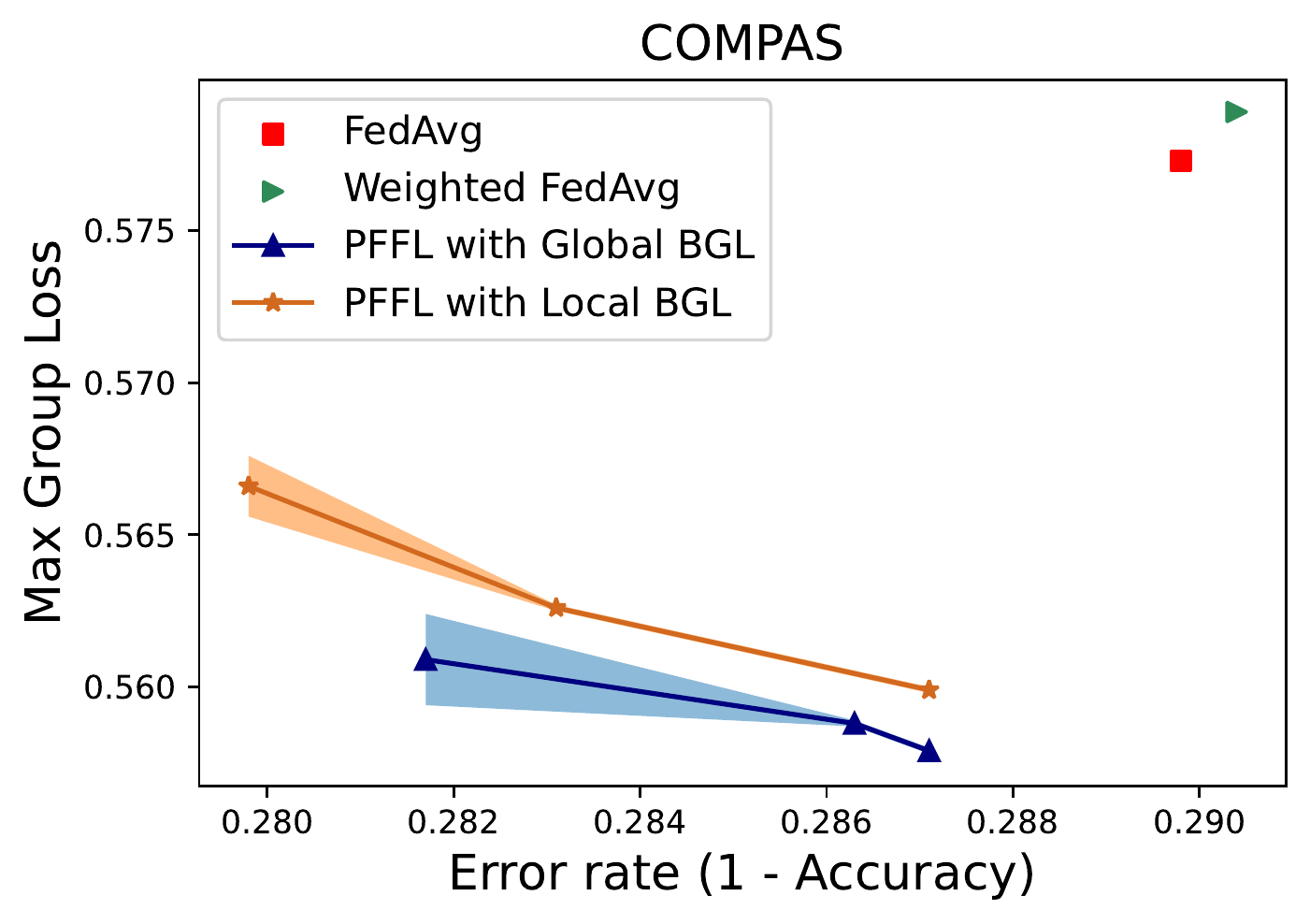}
    \end{subfigure} &
    \begin{subfigure}{0.3\textwidth}
        \centering
        \includegraphics[width=0.95\textwidth,trim=10 10 10 0]{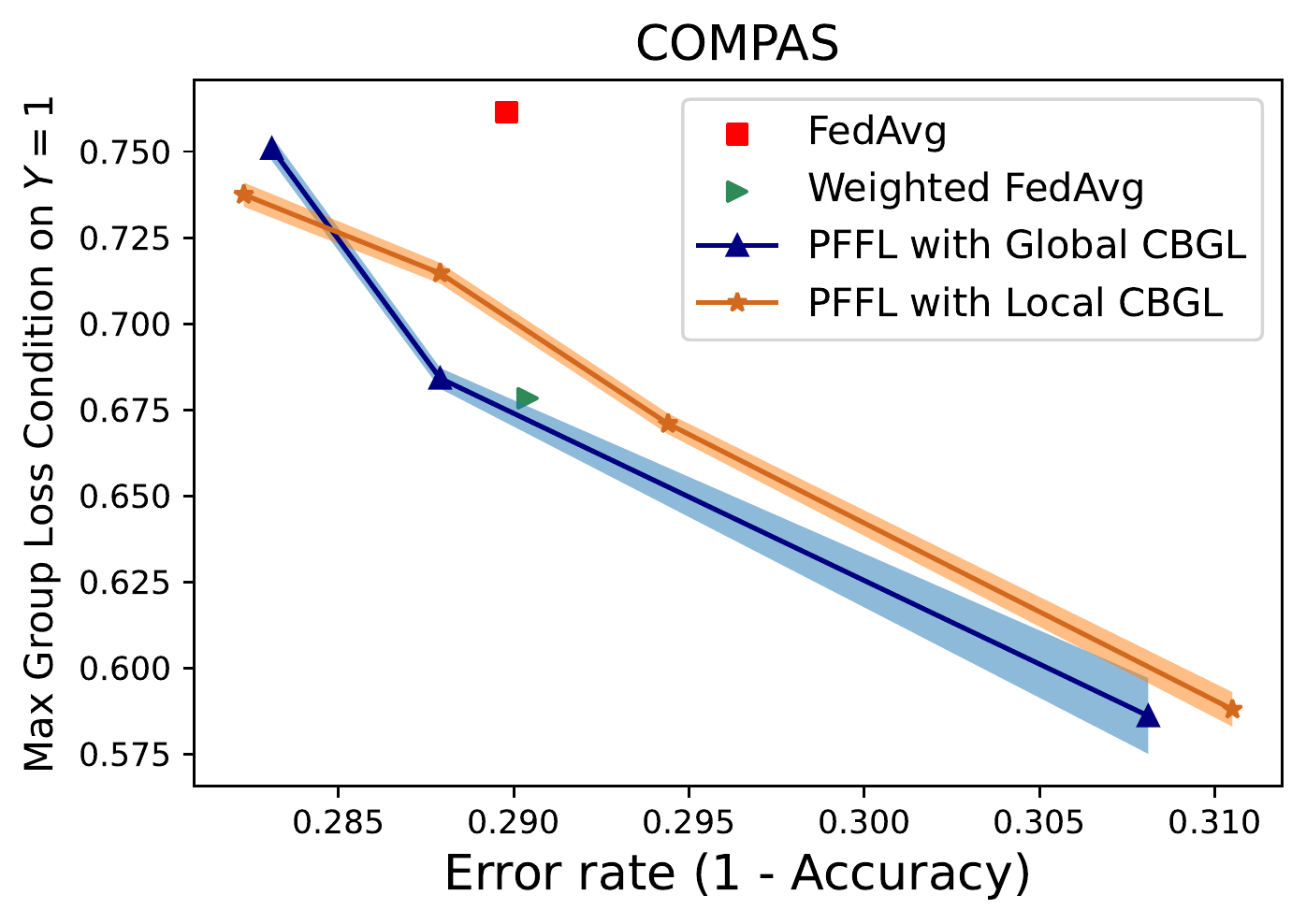}
    \end{subfigure} &
    \begin{subfigure}{0.3\textwidth}
        \centering
        \includegraphics[width=0.95\textwidth,trim=10 10 10 0]{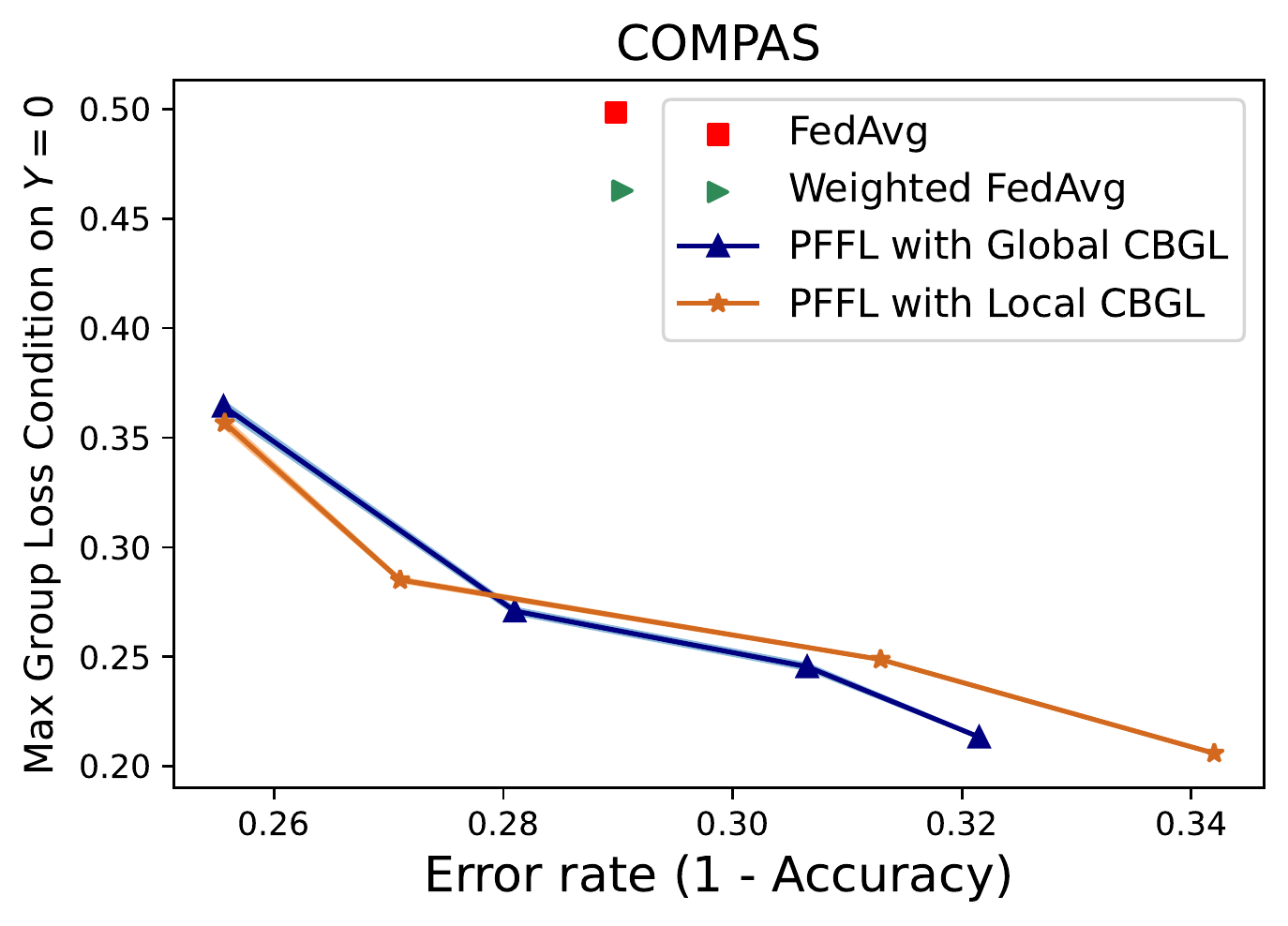}
    \end{subfigure} \\
    \tiny BGL & \tiny CBGL, $Y=1$ & \tiny CBGL, $Y=0$\\
    \end{tabular}
    \vspace{-.1in}
    \caption{\small Experimental results for using BGL (column 1), CBGL for $Y=1$ (column 2), and CBGL for $Y=0$ (column 3) on CelebA (row 1), ACS Employment (row 2), and COMPAS (row 3). Interestingly, we find in all settings that our proposed method (PPFL with Global BGL) not only enables a flexible fairness/utility trade-off, but can in fact achieve both stronger fairness and better utility (lower error) than baselines.} %
  \vspace{-0.08in}
    \label{fig:fairness_utility_fig}
\end{figure*}

\textbf{Setup.} For all experiments, we evaluate the accuracy and the empirical loss for each group on test data that belongs to all the silos of our fair federated learning solver. We consider COMPAS Recidivism prediction with gender as protected attribute, the ACS Employment task~\citep{ding2021retiring} with race as protected attribute, and CelebA~\citep{caldas2018leaf} with gender as a protected attribute. To reflect the federated setting, we use heterogeneous data partitions to create data silos. ACS Employent is naturally partitioned into 50 states; COMPAS and CelebA are manually partitioned in a non-IID manner into a collection of data silos. A detailed description of datasets, models, and partition schemes can be found in Appendix \ref{appen:datasets}. 

\vspace{-.1in}
\subsection{Fairness-Utility Trade-offs for Algorithm \ref{alg:1} }
\label{sec:exp:fair_ut}
\vspace{-.05in}
We first explore how test error rate differs as a function maximum group loss using our Algorithm \ref{alg:1}. To be consistent with our method and theoretical analysis, we exclude the protected attribute $a_i$ for each data as a feature for learning the predictor. For each dataset, we evaluated PFFL with BGL; CBGL for $Y=1$; and CBGL for $Y=0$. For each method we evaluate, given fixed number of training iterations $E$ and $T$, we finetune $B$ and $\zeta$ and evaluate both test error rate and test loss on each group. Given a certain test error rate, we select the hyperparameter pair $(B,\zeta)$ that yields the lowest maximum group loss. We show the relation between test accuracy vs. max group loss in Figure \ref{fig:fairness_utility_fig}. In particular, we compare our fairness-aware FL methods with two baseline methods: vanilla FedAvg and FedAvg trained on loss weighted by groups. In FL, applying fair training locally at each data silo and aggregating the resulting model may not provide strong population-wide fairness guarantees with the same fairness definition~\citep{zeng2021improving}. Hence, we also explore the relationship between test accuracy and max group loss under local BGL and global BGL constraints. 

On all datasets, there exists a natural tradeoff between error rate and the fairness constraint: when a model achieves stronger fairness (smaller max group loss), the model tends to have worse utility (higher error rate). However, in all scenarios, our method not only yields a model with significantly smaller maximum group loss than vanilla FedAvg, but also achieves higher test accuracy than the baseline FedAvg which is unaware of group fairness. Meanwhile, for all datasets and fairness metrics, as expected, PFFL with Global BGL achieves improved fairness-utility tradeoffs relative to PFFL with Local BGL. Therefore, our PFFL with Global fairness constraint framework yields a model where utility can coexist with fairness constraints relying on Bounded Group Loss.

% However, for either method, we could select a point on the pareto frontier such that it yields both lower error rate and strong fairness than baselines, suggesting a possible direction of future work further studying local constraints.

% \begin{figure*}[t!]
%     \centering
%     \begin{subfigure}{0.4\textwidth}
%         \centering
%         \includegraphics[width=0.99\textwidth,trim=10 10 10 30]{}
%     \end{subfigure}\hspace{.2in}
%     \begin{subfigure}{0.4\textwidth}
%         \centering
%         \includegraphics[width=0.99\textwidth,trim=10 10 10 30]{}
%     \end{subfigure}
%     \caption{\small Comparison between FedAvg and our FL via BGL in terms of test accuracy and max group loss on ACS Employment (Left) and CelebA (Right). For each dataset, we select three models for our method to show the relation between test accuracy and fairness guarantee.} %
% %   \vspace{-0.1in}
%     \label{fig:utility_vs_fairness}
% \end{figure*}

\vspace{-.1in}
\subsection{BGL/CBGL evaluated on other fairness notions}
\vspace{-.1in}
\label{sec:exp:other}
Beyond BGL and CBGL, there are other fairness notions commonly used in the fair machine learning literature.
%, such as Demographic Parity and Equal Opportunity~\citep{hardt2016equality}.
For example, several works in group fair FL have proposed optimizing the difference between every two groups' losses (possibly conditioned on the true label) with the aim of achieving Demographic Parity (or Equal Opportunity)~\citep{hardt2016equality,chu2021fedfair,cui2021addressing,zeng2021improving}. Formally, consider the case where the protected attribute set $A=\{0,1\}$. Define $\Delta_{DP}=\left|\text{Pr}(h(X)=1|A=0)-\text{Pr}(h(X)=1|A=1)\right|$ and $\Delta_{EO}=\left|\text{Pr}(h(X)=1|A=0,Y=1)-\text{Pr}(h(X)=1|A=1,Y=1)\right|$. These works aim to train a model such that we could achieve small $\Delta_{DP}$ or small $\Delta_{EO}$, depending on the fairness constraint selected during optimization. As discussed in Section \ref{sec:formulation:objective}, CBGL could be viewed as a more general definition of Equal Opportunity and Equalized Odds. In this section, we compare our method with FedFB~\citep{zeng2021improving}, FedFair~\citep{chu2021fedfair}, and FCFL~\citep{cui2021addressing}, all of which aim to optimize $\Delta_{DP}$ and $\Delta_{EO}$. We evaluate $\Delta_{DP}$ and $\Delta_{EO}$ for all approaches on COMPAS and ACS Employment, with results shown in Figure \ref{fig:pffl_vs_all_others}. Similar to Figure \ref{fig:fairness_utility_fig}, we only show the points lying on the pareto frontier for our method. Although PFFL with BGL and CBGL was not directly designed for this fairness criteria (i.e., it does not directly enforce the loss or prediction parity of two groups' losses to be close), we see that our method is still able to outperform training a FedAvg baseline, and in fact performs comparably or better than prior methods (which were designed for this setting).

\begin{figure*}[t!]
    \centering
    % \vspace{-0.1in}
    \begin{subfigure}{0.24\textwidth}
        \centering
        \includegraphics[width=0.99\textwidth,trim=10 10 10 30]{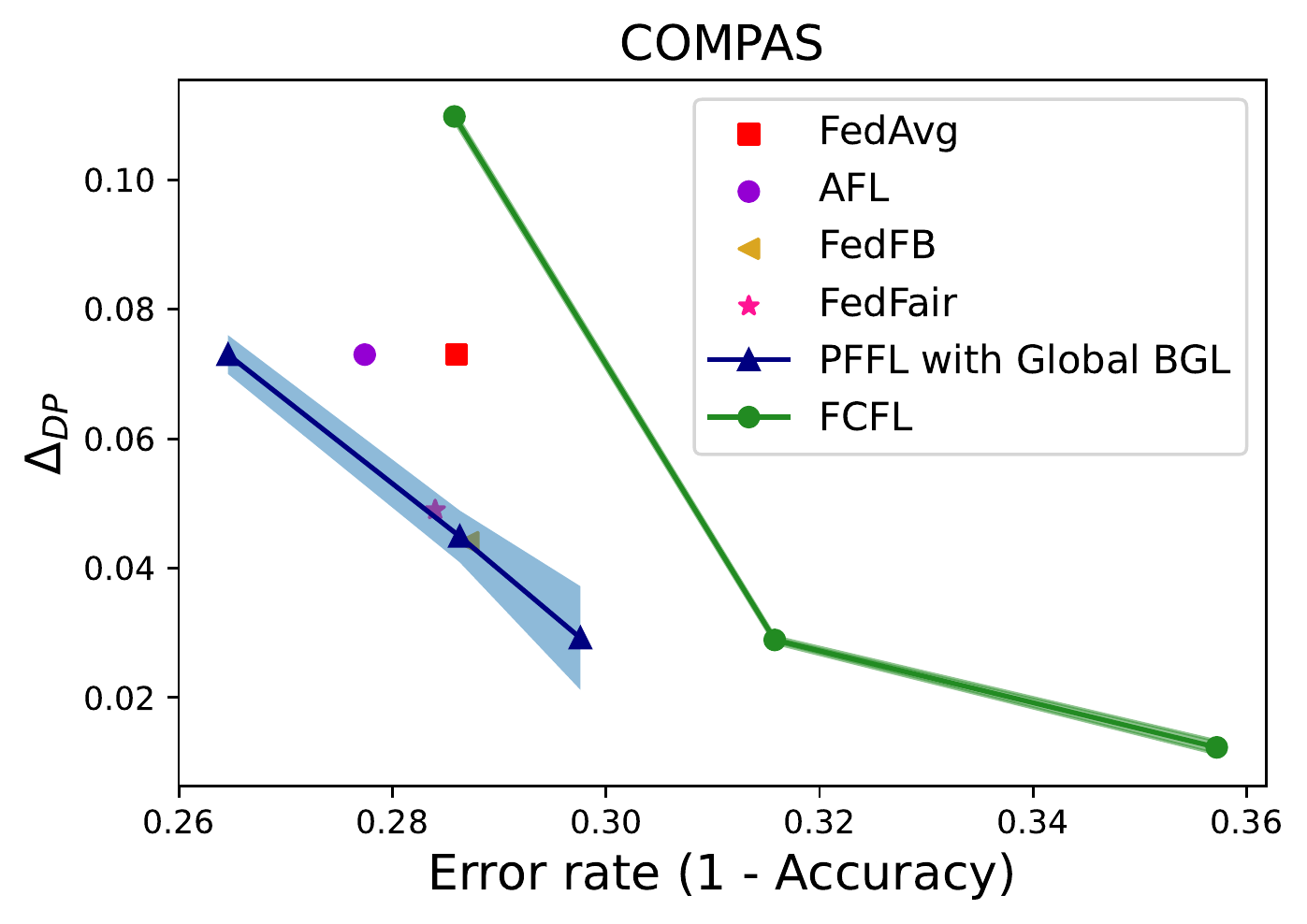}
        \subcaption{Demographic Parity}
    \end{subfigure}
    \begin{subfigure}{0.24\textwidth}
        \centering
        \includegraphics[width=0.99\textwidth,trim=10 10 10 30]{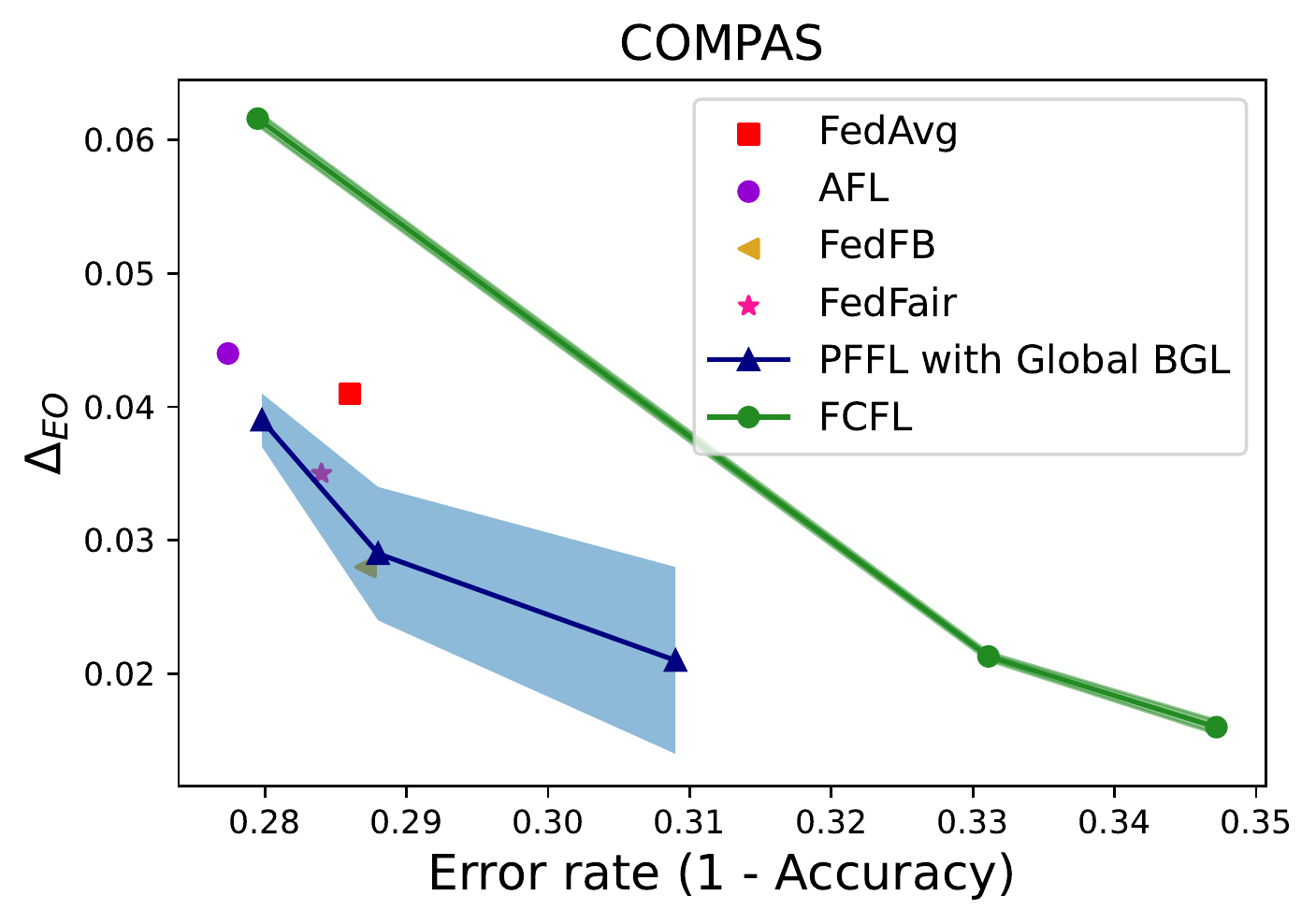}
        \subcaption{Equal Opportunity}
    \end{subfigure}
    \begin{subfigure}{0.24\textwidth}
        \centering
        \includegraphics[width=0.99\textwidth,trim=10 10 10 30]{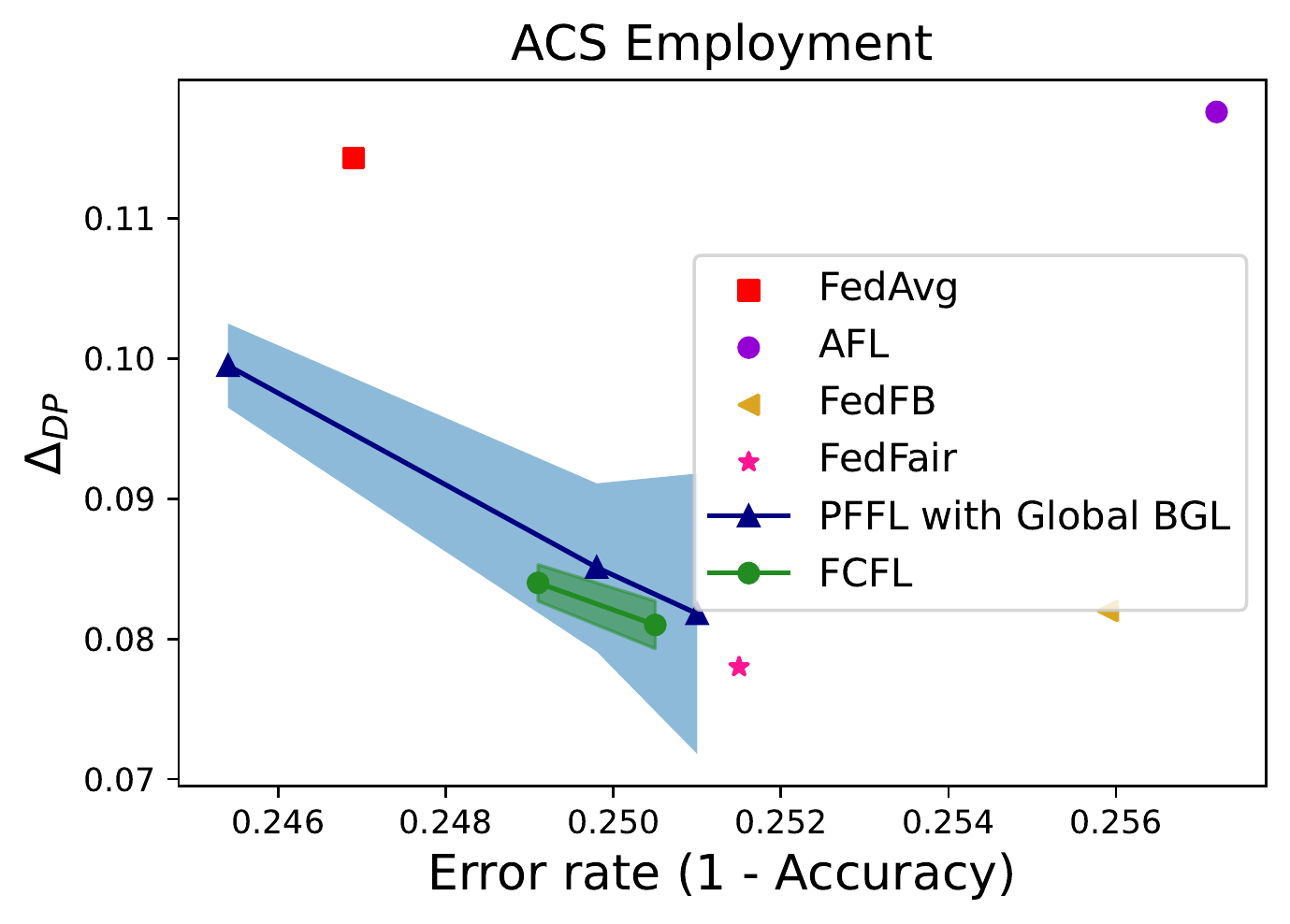}
        \subcaption{Demographic Parity}
    \end{subfigure}
    \begin{subfigure}{0.24\textwidth}
        \centering
        \includegraphics[width=0.99\textwidth,trim=10 10 10 30]{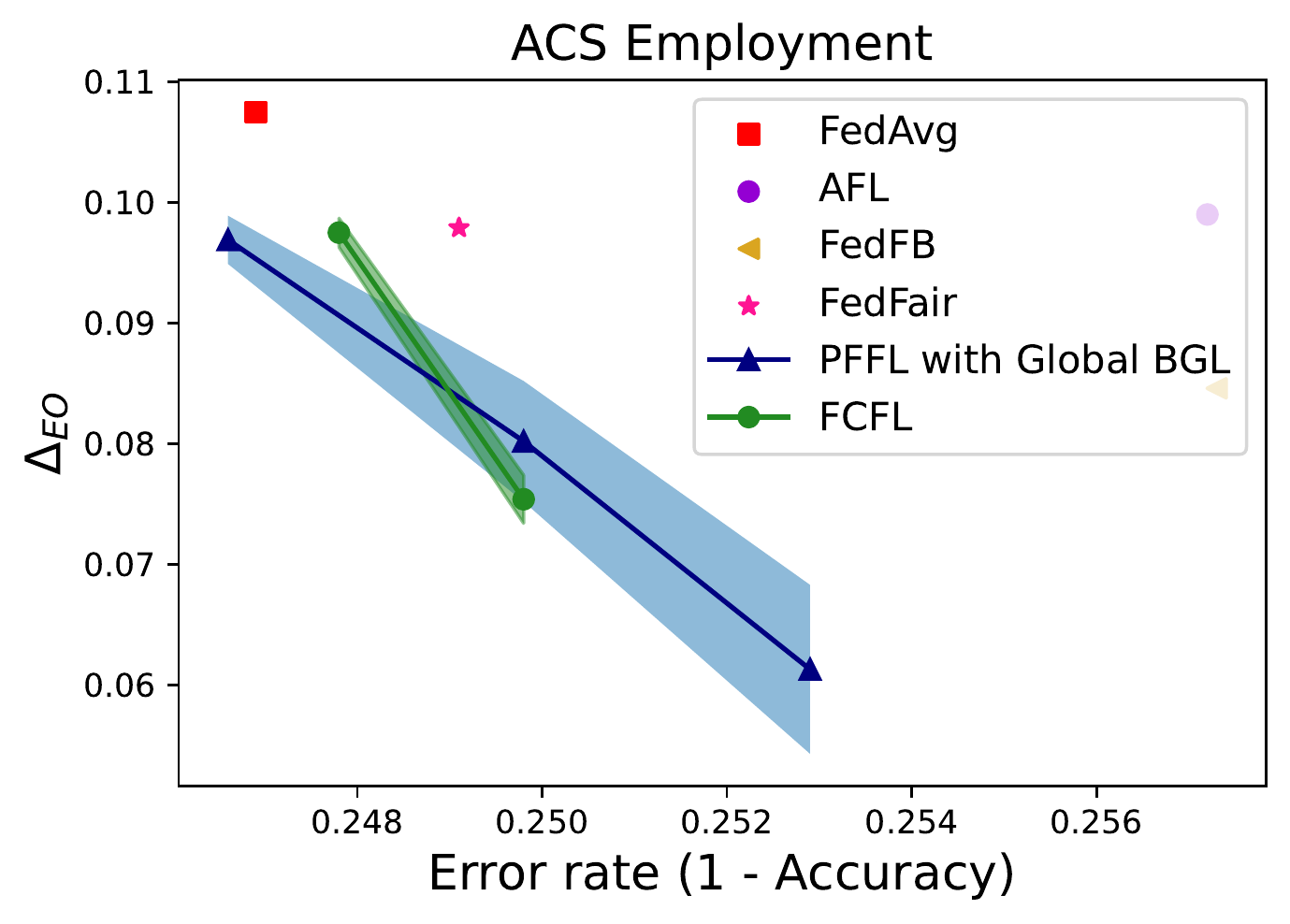}
        \subcaption{Equal Opportunity}
    \end{subfigure}
    \vspace{-0.1in}
    \caption{\small Comparison between PFFL and three different prior works on COMPAS and ACS for $\Delta DP$ and $\Delta EO$. Although PPFL was not directly designed to optimize Demographic Parity/Equal Opportunity, we see that it outperforms the baseline of FedAvg, and performs comparably to / better than prior works that were designed for these objectives.} %
  \vspace{-0.25in}
    \label{fig:pffl_vs_all_others}
\end{figure*}

% \begin{figure*}[t!]
% \vspace{0.1in}
%     \centering
%     \begin{subfigure}{0.45\textwidth}
%         \centering
%         \includegraphics[width=0.99\textwidth,trim=10 10 10 30]{}
%     \end{subfigure}
%     \begin{subfigure}{0.45\textwidth}
%         \centering
%         \includegraphics[width=0.99\textwidth,trim=10 10 10 30]{}
%     \end{subfigure}
%     \caption{\small Comparison between using local fairness constraint and global fairness constraint on ACS Employment (Left) and CelebA (Right). For each method we show the utility-fairness performance of different hyperparameter combinations and plot the pareto frontier.} %
%   \vspace{-0.1in}
%     \label{fig:local_vs_global}
% \end{figure*}

% \vspace{-.1in}
\vspace{-.05in}
\section{Conclusions, Limitations, and Future Work}
\vspace{-.1in}
In this work, we propose a fair learning objective for federated settings via Bounded Group Loss. We then propose a scalable federated solver to find an approximate saddle point for the objective. Theoretically, we provide convergence and fairness guarantees for our method. %Empirically, we show that on the COMPAS, ACS Employment, and CelebA tasks, our method can provide high accuracy and fairness simultaneously. 
Empirically, we show that our method can provide high accuracy and fairness simultaneously across tasks from fair ML and federated learning. 
In addition to strong empirical performance, ours is the first work we are aware of to provide formal convergence and fairness/generalization guarantees for group fair FL with general convex loss functions. In future work we are interested in investigating additional benefits that could be provided by using our framework, including applications in non-federated settings. 
%we are interested in extending our analyses to further provide theoretical guarantees for non-convex concave saddle point optimization and investigating additional benefits that could be provided by using our framework with BGL and CBGL, including applications in non-federated settings. 
Finally, similar to prior works in group fair FL, our method communicates additional parameters beyond standard non-fair FL (e.g., via FedAvg); studying and mitigating the privacy risks of such communications in the context of fair federated learning  would be an interesting direction of future work.

\newpage
{\small
\bibliographystyle{abbrvnat}
\bibliography{references}
}

%%%%%%%%%%%%%%%%%%%%%%%%%%%%%%%%%%%%%%%%%%%%%%%%%%%%%%%%%%%%
\newpage
\appendix

\vspace{-.1in}
\section{Datasets and Models}
\label{appen:datasets}
We summarize the details of the datasets and models we used in our empirical study in Table \ref{table:datasets}. Our experiments include both convex (Logistic Regression) and non-convex (CNN) loss objectives on both fairness (ACS Employment~\citep{ding2021retiring}, COMPAS~\citep{angwin2016machine}) and federated learning (CelebA~\citep{liu2015deep,caldas2018leaf}) benchmarks. Our model choices are common and also used in prior works for ACS Employment~\citep{ding2021retiring}, CelebA~\citep{caldas2018leaf}, and COMPAS~\citep{angwin2016machine}.

\begin{table*}[h!]
	\caption{Details of datasets/models used in our experiments.}
%  	\vspace{1em}
	\centering
	\label{table:datasets}
	\scalebox{0.7}{
	\begin{tabular}{l lllll} 
	   \toprule[\heavyrulewidth]
	     
        \textbf{Dataset} & \textbf{Number of Silos} & \textbf{Model} & \textbf{Protected Attribute} & \textbf{Partition Type}&\textbf{Task Type}  \\
        \cmidrule(r){1-6}
         ACS Employment & 50 & Logistic Regression & Race & Natural partition by States & Binary classification\\
         CelebA & 472 & 4-layer CNN & Gender & Manual partition & Binary classification\\
         COMPAS & 10 & Logistic Regression & Gender & Manual partition & Binary classification\\
    \bottomrule[\heavyrulewidth]
	\end{tabular}}
%  \vspace{0.3in}
\end{table*}

\vspace{-.1in}
\section{Proof of Theorem \ref{th:convergence}}
\label{appen:convergence}
We first introduce a few assumptions needed for Theorem \ref{th:convergence}.

\begin{lemma}[\citet{li2019convergence}]
Let $\Gamma=F^*-\sum_ip_iF_i^*$, $\kappa=\frac{L}{\mu}$, $\gamma=\max\{8\kappa, J\}$ and the learning rate $\eta_t=\frac{2}{\mu(\gamma+t)}$. Then FedAvg with full device participation satisfies \[\frac{1}{T}\sum_{t=1}^TF(w^t)-F^*\leq \frac{1}{T}\sum_{t=1}^T\frac{\kappa}{\gamma+t-1}\left(\frac{2C}{\mu}+\frac{\mu\gamma}{2}\mathbb{E}[\|w^1-w^*\|^2]\right)\] where \[C=\sum_{i=1}^Np_i^2\sigma_i^2+6L\Gamma+8(J-1)^2G^2\]
\end{lemma}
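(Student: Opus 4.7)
The plan is to follow the perturbed iterate analysis, tracking a virtual fully-averaged sequence and bounding its distance to the optimum $w^*$ under a careful step size schedule. Concretely, I would introduce two virtual sequences: $\bar w^t := \sum_{i=1}^N p_i w_i^t$, the weighted average of client iterates at every step (even during local rounds when no communication happens), and $\bar v^{t+1} := \bar w^t - \eta_t \sum_i p_i g_i^t$, where $g_i^t = \nabla F_i(w_i^t;\xi_i^t)$ is the stochastic gradient used locally. By construction $\bar w^{t+1} = \bar v^{t+1}$ at synchronization steps, and otherwise $\bar w^{t+1} = \bar v^{t+1}$ also holds because the aggregation is linear. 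This trick lets me analyze a single recursive sequence $\bar w^t$ even though clients only talk every $J$ rounds.

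Next I would establish the core one-step inequality, which (after taking conditional expectation over the stochastic gradients) has the form
\begin{equation*}
\mathbb{E}\|\bar w^{t+1} - w^*\|^2 \le (1-\eta_t\mu)\,\mathbb{E}\|\bar w^t - w^*\|^2 + \eta_t^2 \,\underbrace{\textstyle\sum_i p_i^2\sigma_i^2}_{\text{SGD variance}} + 2\eta_t\,\underbrace{L\sum_i p_i\,\mathbb{E}\|\bar w^t - w_i^t\|^2}_{\text{local drift, via smoothness}} + 6L\eta_t^2\,\underbrace{\Gamma}_{\text{heterogeneity}}.
\end{equation*}
The strong convexity of each $F_i$ provides the $(1-\eta_t\mu)$ contraction after expanding $\|\bar v^{t+1}-w^*\|^2$ and using $\langle \bar w^t - w^*, \nabla F_i(w_i^t)\rangle$; the heterogeneity term $\Gamma$ enters when I substitute $\nabla F_i(w_i^t)$ for $\nabla F(w_i^t)$ and pay $F^* - \sum_i p_i F_i^*$ in the resulting cross term. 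The SGD variance term follows from Assumption~2.

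The second technical step is to bound the local drift $\sum_i p_i \mathbb{E}\|\bar w^t - w_i^t\|^2$. Since clients synchronize every $J$ rounds, for any $t$ there is a $t_0 \le t$ with $t - t_0 \le J-1$ at which all $w_i^{t_0} = \bar w^{t_0}$. Unrolling $J-1$ local SGD steps and invoking the uniformly bounded stochastic gradient norm $G$ (Assumption~3), together with $\eta_t \le 2\eta_{t+J}$ for the chosen schedule, yields $\sum_i p_i \mathbb{E}\|\bar w^t - w_i^t\|^2 \le 4\eta_t^2 (J-1)^2 G^2$. Plugging this back and collecting constants produces $\mathbb{E}\|\bar w^{t+1} - w^*\|^2 \le (1-\eta_t\mu)\mathbb{E}\|\bar w^t-w^*\|^2 + \eta_t^2 C/\mu$ with $C$ as defined. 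Finally I would induct on $t$ with $\eta_t = 2/(\mu(\gamma+t))$ and $\gamma = \max\{8\kappa, J\}$ to obtain $\mathbb{E}\|\bar w^t - w^*\|^2 \le \frac{1}{\gamma+t-1}\bigl(\tfrac{4C}{\mu^2} + \gamma\,\mathbb{E}\|w^1-w^*\|^2\bigr)$ (the choice of $\gamma$ is precisely what makes the induction go through, since $\eta_t \mu \le 1/4$ is needed). Converting via $L$-smoothness, $F(w^t) - F^* \le \tfrac{L}{2}\|\bar w^t - w^*\|^2$, and averaging over $t=1,\dots,T$ gives the stated bound.

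The main obstacle is the drift bound: the local SGD steps on heterogeneous objectives cause $w_i^t$ to diverge from $\bar w^t$ in a way that scales with both $J$ and the gradient magnitude $G$, and this divergence couples nonlinearly with the heterogeneity gap $\Gamma$. Making the drift estimate compatible with the recursion requires choosing $\gamma \ge J$ so that $\eta_t$ does not change too rapidly within a local-round window, and $\gamma \ge 8\kappa$ so that $(1-\eta_t\mu)$ stays safely bounded away from one when combined with the $\mathcal{O}(\eta_t^2)$ error. Balancing these two conditions is what forces $\gamma = \max\{8\kappa, J\}$ and is the delicate part of the argument.
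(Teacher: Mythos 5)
The paper does not actually prove this lemma itself---it is imported directly from \citet{li2019convergence}---and your sketch correctly reconstructs that reference's argument: the virtual averaged sequence $\bar w^t$, the one-step contraction carrying the variance, heterogeneity ($6L\eta_t^2\Gamma$), and local-drift terms, the $4\eta_t^2(J-1)^2G^2$ drift bound via the $J$-step unrolling and $\eta_{t_0}\le 2\eta_t$, the induction under $\eta_t = 2/(\mu(\gamma+t))$, and the conversion to function values via $L$-smoothness to produce the $\kappa$ prefactor. The only discrepancy is constant bookkeeping---your drift term enters with a $2\eta_t L$ prefactor whereas the source's one-step lemma has a bare factor of $2$, which is what yields exactly $8(J-1)^2G^2$ inside $C$---but the decomposition, the role of $\gamma=\max\{8\kappa,J\}$, and the final form of the bound are all right.
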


\vspace{-.1in}
\begin{proof}[Proof for Theorem \ref{th:convergence}]
Let $m_{a,k}$ be the number of data with protected attribute $a$ for client $k$. By Assumption 1, we have $G_i$ be $(\beta+\sum_a\boldsymbol{\lambda}_a\frac{m_{a,k}}{m_a})\mu$-strongly convex and $(\beta+\sum_a\boldsymbol{\lambda}_a\frac{m_{a,k}}{m_a})L$-smooth. Since $\|\lambda\|_1\leq B$, we have $G_i$ be $(\beta+B)\mu$-strongly convex and $(\beta+B)L$-smooth.
We first present the regret bound for $w^t$
\begin{align}
    &\frac{1}{ET}\sum_{t=1}^{ET}G(w^t;\lambda^t)-\min_w\frac{1}{ET}\sum_{t=1}^{ET}G(w;\lambda^t)\\ &= \frac{1}{ET}\left(\sum_{t=1}^{ET}G(w^t;\lambda^t)-\min_w\sum_{t=1}^{ET}G(w;\lambda^t)\right)\\
    &=\frac{1}{ET}\left(\sum_{i=0}^{E-1}\sum_{t=1}^{T}G(w^{iT+t};\lambda^i)-\min_w\sum_{t=1}^{ET}G(w;\lambda^t)\right)\\
    &\leq \frac{1}{ET}\left(\sum_{i=0}^{E-1}\left(\sum_{t=1}^{T}G(w^{iT+t};\lambda^i)-\min_w\sum_{t=1}^{T}G(w;\lambda^i)\right)\right)\\
    &= \frac{1}{E}\sum_{i=0}^{E-1}\left(\frac{1}{T}\sum_{t=1}^TG(w^t;\lambda^i)-G^*(\lambda^i)\right)\\
    &\leq \frac{1}{ET}\sum_{i=0}^{E-1}\sum_{t=1}^T\frac{\kappa}{\gamma+t-1}\left(\frac{2C_i}{\mu}+\frac{\mu\gamma}{2}\mathbb{E}[\|w^{1,i}-w^{*,i}\|^2]\right)\\
    &\leq \frac{1}{T}\sum_{t=1}^T\frac{\kappa}{\gamma+t-1}\left(\frac{2\max_i C_i}{\mu}+\frac{\mu\gamma}{2}\max_i\mathbb{E}[\|w^{1,i}-w^{*,i}\|^2]\right)
\end{align}

Now we present the regret bound for $\boldsymbol{\lambda}^t\in\mathbb{R}_{+}^{Z}$. For any $\boldsymbol{\lambda}^t$, let's define $\widetilde{\boldsymbol{\lambda}}^t\in\mathbb{R}_{+}^{Z+1}$ such that $\widetilde{\boldsymbol{\lambda}}^t$ satisfies $\|\widetilde{\boldsymbol{\lambda}}^t\|_1=B$ and the first $Z$ entries of $\widetilde{\boldsymbol{\lambda}}^t$ is the same as $\boldsymbol{\lambda}^t$. Let $\widetilde{\mathbf{r}}^t\in\mathbb{R}^{Z+1}$ such that the first $Z$ entries of $\widetilde{\mathbf{r}}^t$ is the same as $\mathbf{r}^t$ and the last entry of $\widetilde{\mathbf{r}}^t$ is 0. Therefore, we have
\begin{align}
    \boldsymbol{\lambda}^T\mathbf{r}^t = \widetilde{\boldsymbol{\lambda}}^T\widetilde{\mathbf{r}}^t
\end{align}
for all $\lambda$.

By \citet{shalev2011online}, for any $\widetilde{\boldsymbol{\lambda}}$, we have
\begin{align}
    \sum_{t=1}^{ET} \widetilde{\boldsymbol{\lambda}}^T\widetilde{\mathbf{r}}^t &\leq \sum_{t=1}^{ET} (\widetilde{\boldsymbol{\lambda}}^t)^T\widetilde{\mathbf{r}}^t + \frac{B\log(Z+1)}{\eta_{\theta}} + \eta_{\theta}\rho^2BET\\
    &=\sum_{t=1}^{ET} (\boldsymbol{\lambda}^t)^T\mathbf{r}^t + \frac{B\log(Z+1)}{\eta_{\theta}} + \eta_{\theta}\rho^2BET
\end{align}
Therefore, we have
\begin{align}
    \min_{\lambda}\frac{1}{ET}\sum_{t=1}^{ET} G(w^t;\boldsymbol{\lambda}) - \frac{1}{ET}\sum_{t=1}^{ET} G(w^t;\boldsymbol{\lambda}^t) &=
    \min_{\lambda}\frac{1}{ET}\sum_{t=1}^{ET}\boldsymbol{\lambda}^T\mathbf{r}^t-\frac{1}{ET}\sum_{t=1}^{ET}(\boldsymbol{\lambda}^t)^T\mathbf{r}^t\\ &\leq \frac{B\log(Z+1)}{\eta_{\theta}ET} + \eta_{\theta}\rho^2B
\end{align}

\vspace{-.1in}
Hence, we conclude that 
\begin{align}
    &\min_{\boldsymbol{\lambda}}\frac{1}{ET}\sum_{t=1}^{ET} G(w^t;\boldsymbol{\lambda}) - \min_w\frac{1}{ET}\sum_{t=1}^{ET}G(w;\boldsymbol{\lambda}^t)\\ &\leq \frac{1}{T}\sum_{t=1}^T\frac{\kappa}{\gamma+t-1}\left(\frac{2\max_i C_i}{(\beta+B)\mu}+\frac{(\beta+B)\mu\gamma}{2}\max_i\mathbb{E}[\|w^{1,i}-w^{*,i}\|^2]\right)+\frac{B\log(Z+1)}{\eta_{\theta}ET} + \eta_{\theta}\rho^2B
\end{align}

By Jensen's Inequality, $G(\frac{1}{ET}\sum_{t=1}^{ET} w^t;\boldsymbol{\lambda})\leq\frac{1}{ET}\sum_{t=1}^{ET} G(w^t;\boldsymbol{\lambda})$. Therefore, we have 
\begin{align}
    \min_{\boldsymbol{\lambda}} G(\Bar{w};\boldsymbol{\lambda}) - \min_w G(w;\bar{\boldsymbol{\lambda}}) \leq &\frac{1}{T}\sum_{t=1}^T\frac{\kappa}{\gamma+t-1}\left(\frac{2\max_i C_i}{(\beta+B)\mu}+\frac{(\beta+B)\mu\gamma}{2}\max_i\mathbb{E}[\|w^{1,i}-w^{*,i}\|^2]\right)\\&+\frac{B\log(Z+1)}{\eta_{\theta}ET} + \eta_{\theta}\rho^2B
\end{align}

Let $C_1=\max_iC_i$ and $C_2=\max_i\mathbb{E}[\|w^{1,i}-w^{*,i}\|^2]$, we get Theorem \ref{th:convergence}.
\end{proof}

\vspace{-.1in}
\begin{proof}[Proof for corollary 1]
Note that $\log(t+1)\leq\sum_{n=1}^t\frac{1}{n}\leq\log(t)+1$
Let \begin{align}
    \mathcal{C} = \frac{2\max_i C_i}{(\beta+B)\mu}+\frac{(\beta+B)\mu\gamma}{2}\max_i\mathbb{E}[\|w^{1,i}-w^{*,i}\|^2]
\end{align}
we have\begin{align}
    \min_{\boldsymbol{\lambda}} G(\Bar{w};\boldsymbol{\lambda}) - \min_w G(w;\bar{\boldsymbol{\lambda}}) \leq\frac{\kappa\mathcal{C}}{T}\left(\log(\gamma+T-1)+1-\log(\gamma+1)\right)+\frac{B\log(Z+1)}{\eta_{\theta}ET} + \eta_{\theta}\rho^2B
\end{align}
Denote the right hand side as $\nu_T$. Pick $\eta_{\theta}=\frac{\nu}{2\rho^2B}$ and $T\geq\frac{1}{\nu(\gamma+1)-2\kappa\mathcal{C}}\left(\frac{4\rho^2B^2\log(Z+1)(\gamma+1)}{\nu E}+2\kappa \mathcal{C}(\gamma-1)\right)$.
\begin{align}
    \nu_T &\leq \frac{\kappa\mathcal{C}}{T}\frac{\gamma+T-1}{\gamma+1}+\frac{2\rho^2B^2\log(Z+1)}{\nu ET}+\frac{\nu}{2}\\
    &=\frac{1}{T}\frac{\kappa\mathcal{C}(\gamma-1)\nu E+2\rho^2B^2\log(Z+1)(\gamma+1)}{\nu E(\gamma+1)}+\frac{\kappa\mathcal{C}}{\gamma+1}+\frac{\nu}{2}\\
    &\leq \frac{\nu E(\nu(\gamma+1)-2\kappa\mathcal{C})}{4\rho^2B^2\log(Z+1)(\gamma+1)+2\kappa \mathcal{C}(\gamma-1)\nu E}\frac{\kappa\mathcal{C}(\gamma-1)\nu E+2\rho^2B^2\log(Z+1)(\gamma+1)}{\nu E(\gamma+1)}+\frac{\kappa\mathcal{C}}{\gamma+1}+\frac{\nu}{2}\\
    &=\frac{\nu E(\nu(\gamma+1)-2\kappa\mathcal{C})}{2\nu E(\gamma+1)}+\frac{\kappa\mathcal{C}}{\gamma+1}+\frac{\nu}{2}\\
    &=\frac{\nu}{2}+\frac{\nu}{2}\\
    &=\nu
\end{align}\end{proof}
\vspace{-.3in}

\section{Proof for Theorem \ref{th:fairness}}
\label{appen:fairness}
We first introduce a few lemmas necessary for the proof of Theorem \ref{th:fairness}.
\begin{lemma}
\label{lemma:fl_gen}
Let \[\mathfrak{R}_m(\mathcal{H})=\mathbb{E}_{ S_k\sim \mathcal{D}_k^{m_k}, \sigma}\left[\sup_{h\in\mathcal{H}}\frac{1}{K}\sum_{k=1}^K\frac{1}{m_k}\sum_{i=1}^{m_k}\sigma_{k,i}l\left(h(x_{k,i}), y_{k,i}\right)\right]\]
then for any $h\in\mathcal{H}$, with probability $1-\delta$, we have
\begin{equation}
    |\mathcal{F}(h)- F(h)|\leq 2\mathfrak{R}_m(\mathcal{H})+\frac{M}{K}\sqrt{\sum_{k=1}^K\frac{1}{2m_k}\log(1/\delta)}
\end{equation}
\end{lemma}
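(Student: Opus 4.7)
The plan is a standard two-step Rademacher-complexity argument adapted to the federated setting where the overall distribution is a uniform mixture $\mathcal{D}=\frac{1}{K}\sum_k \mathcal{D}_k$ and the empirical estimate $F(h)$ is itself a uniform average of per-client empirical averages. Define the scalar random variable $\Phi(S) = \sup_{h\in\mathcal{H}}\bigl(\mathcal{F}(h)-F(h)\bigr)$, viewed as a function of the stacked sample $S=(S_1,\ldots,S_K)$ with $S_k=\{(x_{k,i},y_{k,i})\}_{i=1}^{m_k}$ drawn independently from $\mathcal{D}_k$. I would then (i) use McDiarmid's inequality to concentrate $\Phi(S)$ around $\mathbb{E}[\Phi(S)]$, and (ii) bound $\mathbb{E}[\Phi(S)]$ by $2\mathfrak{R}_m(\mathcal{H})$ via the usual ghost-sample symmetrization.

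For the concentration step, I would verify the bounded-differences constants. If one replaces the single sample $(x_{k,i},y_{k,i})$ by an independent copy, $\mathcal{F}(h)$ does not change, and $F(h)$ changes by at most $\frac{M}{K m_k}$ because that point contributes with weight $\frac{1}{K m_k}$ to $F(h)$ and the loss is bounded by $M$ (Assumption~4). Hence $\Phi$ satisfies bounded differences with constant $c_{k,i}=\frac{M}{K m_k}$, and
\begin{equation*}
\sum_{k=1}^{K}\sum_{i=1}^{m_k} c_{k,i}^2 \;=\; \frac{M^2}{K^2}\sum_{k=1}^{K}\frac{1}{m_k}\,.
\end{equation*}
McDiarmid's inequality then yields $\Phi(S)\leq \mathbb{E}[\Phi(S)] + \frac{M}{K}\sqrt{\tfrac{1}{2}\bigl(\sum_k \tfrac{1}{m_k}\bigr)\log(1/\delta)}$ with probability at least $1-\delta$.

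For the symmetrization step, introduce an independent ghost sample $S'=(S'_1,\ldots,S'_K)$ with $S'_k\sim\mathcal{D}_k^{m_k}$ and let $F'(h)$ be the corresponding empirical average. Since $\mathcal{F}(h)=\mathbb{E}_{S'}[F'(h)]$, Jensen's inequality gives $\mathbb{E}_S[\Phi(S)] \leq \mathbb{E}_{S,S'}\bigl[\sup_h (F'(h)-F(h))\bigr]$. Because each term $l(h(x'_{k,i}),y'_{k,i})-l(h(x_{k,i}),y_{k,i})$ is symmetric, one can multiply it by an independent Rademacher $\sigma_{k,i}$ without changing the distribution, then split the supremum over the two sums to obtain $\mathbb{E}[\Phi(S)]\leq 2\mathfrak{R}_m(\mathcal{H})$ in exactly the form defined in the lemma, with the uniform $\frac{1}{K m_k}$ weighting preserved throughout. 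Combining the two steps yields the one-sided bound; the same argument applied to $\sup_h(F(h)-\mathcal{F}(h))$ gives the reverse inequality.

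The routine obstacle is simply matching the heterogeneous weighting $\frac{1}{K m_k}$ on both sides so that the McDiarmid constants and the Rademacher complexity agree with the lemma's normalization; once that bookkeeping is done, everything reduces to textbook manipulations. The main conceptual subtlety worth stating clearly is that the relevant ``sample size'' is not $\sum_k m_k$ but enters through $\sum_k \tfrac{1}{m_k}/K^2$, reflecting that $\mathcal{F}$ averages over clients uniformly rather than weighting by dataset size; this is what lets the constant $\frac{M}{K}\sqrt{\sum_k \tfrac{1}{2 m_k}\log(1/\delta)}$ appear rather than the IID bound $M\sqrt{\tfrac{\log(1/\delta)}{2\sum_k m_k}}$.
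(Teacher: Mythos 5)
Your proposal is correct, and it is in substance the same argument the paper relies on: the paper's proof of this lemma is a one-line citation to Theorem~2 of \citet{mohri2019agnostic} instantiated with uniform weights $\lambda_k=\tfrac{1}{K}$, and that theorem is proved exactly by the McDiarmid-plus-symmetrization route you describe, with the skewness term $\sum_k \lambda_k^2/m_k$ reducing to $\tfrac{1}{K^2}\sum_k\tfrac{1}{m_k}$ under uniform weighting --- which is precisely your bounded-differences computation. The only bookkeeping quibble is that your argument (like the cited theorem) delivers a one-sided bound at confidence $1-\delta$; the two-sided statement with the absolute value formally requires a union bound and hence $\log(2/\delta)$, a slippage already present in the paper's own statement of the lemma and absorbed later when it is invoked at level $\delta/2$.
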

\begin{proof}[Proof for lemma \ref{lemma:fl_gen}]
Lemma \ref{lemma:fl_gen} directly follows Theorem 2 in \citet{mohri2019agnostic} with $\lambda_k=\frac{1}{K}$.
\end{proof}

\begin{lemma}[Lemma 1 in \citet{agarwal2018reductions}]
Let $(\bar{w},\bar{\boldsymbol{\lambda}})$ is a $\nu$-approximate saddle point, then \begin{equation}
    \bar{\boldsymbol{\lambda}}^T\mathbf{r}(\bar{w})\geq B\max_{z\in Z}\mathbf{r}_z(\bar{w})_+-\nu
\end{equation}
where $x_+=\max\{x,0\}$.
\end{lemma}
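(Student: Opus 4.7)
The plan is to invoke the dual-side inequality from the $\nu$-approximate saddle-point definition and then pick a well-chosen comparator $\boldsymbol{\lambda}$ that attains the maximum $B \max_z \mathbf{r}_z(\bar w)_+$. Concretely, starting from
\[
G(\bar w,\bar{\boldsymbol{\lambda}}) \;\geq\; G(\bar w,\boldsymbol{\lambda}) - \nu \quad \text{for all feasible } \boldsymbol{\lambda},
\]
and recalling $G(w,\boldsymbol{\lambda}) = \beta F(w) + \boldsymbol{\lambda}^T \mathbf{r}(w)$, the $\beta F(\bar w)$ terms cancel and I am left with the reduced inequality
\[
\bar{\boldsymbol{\lambda}}^T \mathbf{r}(\bar w) \;\geq\; \boldsymbol{\lambda}^T \mathbf{r}(\bar w) - \nu
\]
holding for every $\boldsymbol{\lambda}$ in the feasible set $\Lambda := \{\boldsymbol{\lambda} \in \mathbb{R}_+^Z : \|\boldsymbol{\lambda}\|_1 \leq B\}$. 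So it suffices to maximize $\boldsymbol{\lambda}^T \mathbf{r}(\bar w)$ over $\Lambda$ and show that this maximum equals $B \max_z \mathbf{r}_z(\bar w)_+$.

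This final step is a short case analysis on the sign of $\max_z \mathbf{r}_z(\bar w)$. Let $z^\star \in \arg\max_z \mathbf{r}_z(\bar w)$. If $\mathbf{r}_{z^\star}(\bar w) > 0$, I would pick $\boldsymbol{\lambda} = B e_{z^\star}$, which lies in $\Lambda$ and yields $\boldsymbol{\lambda}^T \mathbf{r}(\bar w) = B\,\mathbf{r}_{z^\star}(\bar w) = B\,\max_z \mathbf{r}_z(\bar w)_+$. If instead $\mathbf{r}_{z^\star}(\bar w) \leq 0$, then $\max_z \mathbf{r}_z(\bar w)_+ = 0$, and the choice $\boldsymbol{\lambda} = \mathbf{0} \in \Lambda$ achieves $\boldsymbol{\lambda}^T \mathbf{r}(\bar w) = 0$, matching the right-hand side. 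Substituting either choice back into the reduced saddle-point inequality gives the claim.

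There is no real obstacle here: the argument only uses linearity of $G$ in $\boldsymbol{\lambda}$ and the fact that the feasible set is a scaled simplex (together with the origin), so the dual maximum of a linear functional is attained at a vertex. The only subtlety to flag is that the feasible set is $\|\boldsymbol{\lambda}\|_1 \leq B$ rather than $= B$, which is exactly what permits the $\boldsymbol{\lambda}=\mathbf{0}$ choice in the second case and ensures the bound involves the positive part $\mathbf{r}_z(\bar w)_+$ rather than $\mathbf{r}_z(\bar w)$ itself.
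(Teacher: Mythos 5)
Your proof is correct, and it is essentially the canonical argument: the paper itself does not reprove this lemma but imports it verbatim from \citet{agarwal2018reductions}, whose proof is exactly your two steps — apply the dual-side inequality of the $\nu$-approximate saddle point (the $\beta F(\bar w)$ terms cancel by linearity of $G$ in $\boldsymbol{\lambda}$), then choose the comparator $\boldsymbol{\lambda}=B e_{z^\star}$ or $\boldsymbol{\lambda}=\mathbf{0}$ to realize $\max_{\boldsymbol{\lambda}\in\Lambda}\boldsymbol{\lambda}^T\mathbf{r}(\bar w)=B\max_{z}\mathbf{r}_z(\bar w)_+$. Your remark that the constraint $\|\boldsymbol{\lambda}\|_1\leq B$ (rather than equality) is what produces the positive part $(\cdot)_+$ is exactly the right subtlety to flag.
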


\begin{lemma}[Lemma 2 in \citet{agarwal2018reductions}]
For any $w$ such that $\mathbf{r}(w)\leq\mathbf{0}_{Z}$, $F(\bar{w})\leq F(w)+2\nu$.
\end{lemma}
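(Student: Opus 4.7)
The plan is to exploit both sides of the $\nu$-approximate saddle point definition applied to $G(w;\boldsymbol{\lambda}) = \beta F(w) + \boldsymbol{\lambda}^T \mathbf{r}(w)$ (I will assume the standard normalization $\beta=1$, as in \citet{agarwal2018reductions}; otherwise the result rescales by $1/\beta$). The proof is a clean sandwich argument: the first saddle-point inequality gives an upper bound on $F(\bar{w})$ in terms of $F(w)$ for any feasible $w$, up to the Lagrangian term $\bar{\boldsymbol{\lambda}}^T\mathbf{r}(\bar{w})$; the second saddle-point inequality, applied carefully, gives a lower bound on $\bar{\boldsymbol{\lambda}}^T\mathbf{r}(\bar{w})$. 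Putting the two together yields $F(\bar{w}) \le F(w) + 2\nu$.

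Concretely, for any $w$ with $\mathbf{r}(w)\leq\mathbf{0}_Z$, I first invoke the left saddle inequality
\[
G(\bar{w},\bar{\boldsymbol{\lambda}}) \le G(w,\bar{\boldsymbol{\lambda}}) + \nu,
\]
which expands to $F(\bar{w}) + \bar{\boldsymbol{\lambda}}^T\mathbf{r}(\bar{w}) \le F(w) + \bar{\boldsymbol{\lambda}}^T\mathbf{r}(w) + \nu$. Since $\bar{\boldsymbol{\lambda}}\in\mathbb{R}_+^Z$ and $\mathbf{r}(w)\le 0$ componentwise, the inner product $\bar{\boldsymbol{\lambda}}^T\mathbf{r}(w)\le 0$, so
\[
F(\bar{w}) + \bar{\boldsymbol{\lambda}}^T\mathbf{r}(\bar{w}) \le F(w) + \nu.
\]
This is the step that uses feasibility of $w$ in an essential way.

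Next, I use the right saddle inequality $G(\bar{w},\bar{\boldsymbol{\lambda}}) \ge G(\bar{w},\boldsymbol{\lambda}) - \nu$ for \emph{all} admissible $\boldsymbol{\lambda}$ in the dual feasible set $\{\boldsymbol{\lambda}\in\mathbb{R}_+^Z : \|\boldsymbol{\lambda}\|_1\le B\}$. The convenient choice is $\boldsymbol{\lambda}=\mathbf{0}$, which is admissible, giving
\[
F(\bar{w}) + \bar{\boldsymbol{\lambda}}^T\mathbf{r}(\bar{w}) \ge F(\bar{w}) - \nu,
\]
i.e.\ $\bar{\boldsymbol{\lambda}}^T\mathbf{r}(\bar{w}) \ge -\nu$. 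Substituting this lower bound into the previous display yields
\[
F(\bar{w}) - \nu \le F(\bar{w}) + \bar{\boldsymbol{\lambda}}^T\mathbf{r}(\bar{w}) \le F(w) + \nu,
\]
and rearranging gives the claim $F(\bar{w}) \le F(w) + 2\nu$.

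There isn't really a deep obstacle here—this is a textbook saddle-point manipulation. The only subtlety worth flagging is that both directions of the $\nu$-approximate saddle condition are needed (using only one gives a bound that still carries the Lagrangian residual), and that admissibility of $\boldsymbol{\lambda}=\mathbf{0}$ in the dual domain is what lets the second inequality collapse cleanly. Sign conventions—$\bar{\boldsymbol{\lambda}}\ge 0$ and $\mathbf{r}(w)\le 0$—must be tracked to ensure that $\bar{\boldsymbol{\lambda}}^T\mathbf{r}(w)\le 0$ works in our favor; this is exactly why the constrained optimization problem is set up with a nonnegative dual variable rather than a signed one.
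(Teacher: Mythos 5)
Your proof is correct and follows essentially the same saddle-point sandwich the paper uses (embedded in its proof of Lemma~\ref{lemma:empirical_fair_vio}): the left saddle inequality plus feasibility of $w$ gives the upper bound, and a lower bound on $\bar{\boldsymbol{\lambda}}^T\mathbf{r}(\bar{w})$ closes the argument. The only cosmetic difference is that you obtain $\bar{\boldsymbol{\lambda}}^T\mathbf{r}(\bar{w})\geq-\nu$ directly by instantiating $\boldsymbol{\lambda}=\mathbf{0}$ in the right saddle inequality, whereas the paper routes it through the stronger bound $\bar{\boldsymbol{\lambda}}^T\mathbf{r}(\bar{w})\geq B\max_{z}\mathbf{r}_z(\bar{w})_+-\nu$ of its Lemma~1; both are valid, and your caveat about the $\beta$ normalization matches the paper's implicit $\beta=1$ convention in that derivation.
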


\begin{lemma}[Generation for BGL]
\label{lemma:fairness_gen}
Let \[\mathfrak{R}_a(\mathcal{H})=\mathbb{E}_{ S_k\sim \mathcal{D}_k^{m_k}, \sigma}\left[\sup_{h\in\mathcal{H}}\sum_{k=1}^K\frac{1}{m_a}\sum_{a_i=a}\sigma_{k,i}l\left(h(x_{k,i}), y_{k,i}\right)\right]\]
then for any $h\in\mathcal{H}$ and \textbf{all} $a\in A$, with probability $1-\delta$, we have
\begin{equation}
    |\mathbf{\mathfrak{r}}_a(h)- \mathbf{r}_a(h)| \leq 2\mathfrak{R}_a(\mathcal{H})+\frac{M}{m_a}\sqrt{\frac{K}{2}\log(|A|/\delta)}
\end{equation}
\end{lemma}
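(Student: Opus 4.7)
The plan is to adapt the Rademacher-complexity generalization argument that yields Lemma \ref{lemma:fl_gen} so that it instead handles the per-group empirical loss. A first observation is that the constant offsets $\zeta/K$ appearing in both $\mathbf{r}_a(h)$ and $\mathfrak{r}_a(h)$ cancel in the difference, so it suffices to control $\sup_{h\in\mathcal{H}} |\bar{l}_a(h) - \mathbb{E}_{\mathcal{D}}[l(h(X),Y) \mid A=a]|$, where $\bar{l}_a(h) := \frac{1}{m_a}\sum_{k,i:\, a_{k,i}=a} l(h(x_{k,i}), y_{k,i})$ is the empirical average of the loss over group-$a$ samples pooled across the $K$ clients.

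I would then carry out the two standard steps. First, apply McDiarmid's inequality to the above supremum, viewed as a function of the per-client sample blocks $S_k \sim \mathcal{D}_k^{m_k}$. Because the loss is bounded by $M$ (Assumption 4) and the contribution of client $k$ to $\bar{l}_a$ carries weight $m_{a,k}/m_a$, replacing the block $S_k$ shifts the supremum by at most $M\, m_{a,k}/m_a$; summing the squared bounded differences across $K$ clients, with a worst-case upper bound on $m_{a,k}/m_a$, yields the deviation term $\frac{M}{m_a}\sqrt{(K/2)\log(1/\delta')}$. Second, bound the expected supremum by $2\mathfrak{R}_a(\mathcal{H})$ via the classical symmetrization argument, introducing an independent ghost group-$a$ sample and a Rademacher sequence $\sigma_{k,i}$ indexed exactly as in the paper's definition of $\mathfrak{R}_a(\mathcal{H})$. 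Combining the two gives the single-$a$ version of the claim with confidence $1-\delta'$.

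Finally, a union bound over the $|A|$ possible groups, with $\delta' = \delta/|A|$, upgrades the bound to hold simultaneously for every $a \in A$ and produces the $\log(|A|/\delta)$ factor in the stated estimate. The main technical subtlety I anticipate lies in the McDiarmid step: one must choose the correct level of granularity (per-client block versus per-sample, and whether the counts $m_{a,k}$ are treated as fixed or as random under the draw of $S_k$) and a suitable worst-case control on $m_{a,k}/m_a$ in order to match the advertised $\sqrt{K}/m_a$ scaling rather than a potentially tighter $1/\sqrt{m_a}$ rate; once that bookkeeping is fixed, the symmetrization and union-bound components are routine.
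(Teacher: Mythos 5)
The paper never actually proves this lemma: it is stated in Appendix C without proof (only Lemma~\ref{lemma:fl_gen} is discharged, by citation to Theorem~2 of \citet{mohri2019agnostic}), so there is no official argument to compare against. Your skeleton --- cancellation of the $\zeta/K$ offsets, symmetrization to produce $2\mathfrak{R}_a(\mathcal{H})$, McDiarmid for the deviation of the supremum from its mean, and a union bound over the $|A|$ groups to obtain the $\log(|A|/\delta)$ factor --- is certainly the intended route, and the first and last of these steps are fine.

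The gap is exactly where you suspect it is, and your proposal does not close it. With per-client blocks the bounded difference of the group-$a$ statistic is $c_k = M m_{a,k}/m_a$, so McDiarmid yields a deviation term $\frac{M}{m_a}\bigl(\tfrac{1}{2}\sum_k m_{a,k}^2\log(|A|/\delta)\bigr)^{1/2}$; this matches the advertised $\frac{M}{m_a}\sqrt{\tfrac{K}{2}\log(|A|/\delta)}$ only if $\sum_k m_{a,k}^2\le K$, i.e.\ only if every client holds at most one group-$a$ example. No ``worst-case control on $m_{a,k}/m_a$'' can rescue this, because Cauchy--Schwarz gives $\sum_k m_{a,k}^2 \ge m_a^2/K$, so the per-block route in general degrades to $\frac{M}{\sqrt{K}}\sqrt{\tfrac{1}{2}\log(|A|/\delta)}$ (and to $M\sqrt{\tfrac{1}{2}\log(|A|/\delta)}$, i.e.\ no concentration at all, when one client owns the whole group). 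The finer per-sample granularity, conditioning on the group assignments so that $m_a$ and the $m_{a,k}$ are deterministic (which also resolves your second worry about random counts), gives $c_{k,i}=M/m_a$ over $m_a$ coordinates and hence a deviation of $\frac{M}{\sqrt{m_a}}\sqrt{\tfrac{1}{2}\log(|A|/\delta)} = \frac{M}{m_a}\sqrt{\tfrac{m_a}{2}\log(|A|/\delta)}$, which coincides with the stated bound only when $m_a\le K$. So neither level of granularity reproduces the lemma's constant in general: what the standard argument actually delivers is the per-sample version with $m_a$ in place of $K$ under the square root. You should either prove that (correct) variant, or restrict to the regime $m_a \le K$; as written, the $\sqrt{K}$ scaling is asserted but not derivable from the argument you outline.
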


\begin{lemma}[Generation for CBGL]
\label{lemma:cbgl_fairness_gen}
Let \[\mathfrak{R}_a(\mathcal{H})=\mathbb{E}_{ S_k\sim \mathcal{D}_k^{m_k}, \sigma}\left[\sup_{h\in\mathcal{H}}\sum_{k=1}^K\frac{1}{m_{a,y}}\sum_{a_i=a,y_{k,i}=y}\sigma_{k,i}l\left(h(x_{k,i}), y_{k,i}\right)\right]\]
then for any $h\in\mathcal{H}$ and \textbf{all} $a\in A$ and $y\in Y$, with probability $1-\delta$, we have
\begin{equation}
    |\mathbf{\mathfrak{r}}_a(h)- \mathbf{r}_a(h)| \leq 2\mathfrak{R}_a(\mathcal{H})+\frac{M}{m_{a,y}}\sqrt{\frac{K}{2}\log(|A||Y|/\delta)}
\end{equation}
\end{lemma}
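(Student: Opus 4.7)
The plan is to prove this CBGL generalization bound by mirroring the proof of Lemma \ref{lemma:fairness_gen} (the BGL generalization bound) and adapting the argument to condition jointly on the protected group $a$ and the label $y$ rather than on the group alone. The three ingredients are standard: a bounded-differences concentration inequality for a fixed hypothesis, a symmetrization step that introduces the Rademacher complexity, and a union bound over fairness constraints.

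First, I would fix a pair $(a, y) \in A \times Y$ and study the uniform deviation $\sup_{h \in \mathcal{H}} |\mathfrak{r}_{a,y}(h) - \mathbf{r}_{a,y}(h)|$ between the population conditional risk $\mathbb{E}[l(h(x), y) \mid A = a, Y = y]$ under the mixture $\mathcal{D} = \tfrac{1}{K}\sum_k \mathcal{D}_k$ and its pooled empirical counterpart $\tfrac{1}{m_{a,y}} \sum_k \sum_{a_{k,i}=a,\, y_{k,i}=y} l(h(x_{k,i}), y_{k,i})$. Applying McDiarmid's bounded-differences inequality to the per-sample perturbation of this statistic: altering a single sample in any client $k$ changes the pooled empirical average by at most $M / m_{a,y}$, and aggregating the squared perturbations across the $K$ clients produces a concentration term of the form $\tfrac{M}{m_{a,y}} \sqrt{K/2 \cdot \log(\cdot)}$, exactly matching the $\sqrt{K}$ scaling already established in Lemma \ref{lemma:fairness_gen}.

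Second, I would apply the classical symmetrization argument to upper bound the expectation of this supremum deviation by twice the Rademacher complexity of the induced loss class. By construction, $\mathfrak{R}_{a,y}(\mathcal{H})$ weights the losses by $1/m_{a,y}$ and restricts the inner sum to indices with matching group and label, so the symmetrized Rademacher expression agrees term-by-term with the definition in the lemma statement, yielding the $2 \mathfrak{R}_{a,y}(\mathcal{H})$ term. Finally, I would apply a union bound over all $|A| \cdot |Y|$ pairs $(a, y)$, replacing $\delta$ with $\delta / (|A||Y|)$ in the per-pair bound to obtain the $\log(|A||Y|/\delta)$ factor and the uniform-over-all-constraints conclusion.

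The step requiring the most care is the McDiarmid application in the federated setting: one must correctly track how a single-sample perturbation in one of $K$ heterogeneous clients propagates through the pooled conditional statistic $\mathbf{r}_{a,y}(h)$, and justify that the bounded-difference constants accumulate to give the $\sqrt{K}$ factor rather than, e.g., $\sqrt{m_{a,y}}$ or $\sqrt{\sum_k m_k}$. Once that scaling is matched to Lemma \ref{lemma:fairness_gen}, the remainder of the argument is a routine restriction of the BGL proof to the jointly conditioned subset $\{A = a,\; Y = y\}$, followed by the union bound to absorb the additional factor of $|Y|$ fairness constraints introduced by conditioning on the label.
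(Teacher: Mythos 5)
The paper does not actually supply a proof of this lemma: it is stated in Appendix C and then consumed directly in the proof of Theorem \ref{th:fairness}, with only the unconditional analogue (Lemma \ref{lemma:fl_gen}) attributed to Theorem 2 of \citet{mohri2019agnostic}. So your proposal can only be assessed on its own terms. Your architecture --- McDiarmid applied to the uniform deviation, symmetrization to produce $2\mathfrak{R}_{a,y}(\mathcal{H})$, and a union bound over the $|A||Y|$ constraints to get the $\log(|A||Y|/\delta)$ factor --- is the standard route and is surely what the authors intend; the symmetrization and union-bound steps are fine.

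The gap is exactly at the step you flag as delicate, and your sketch of it does not close. McDiarmid sums squared bounded differences over the independent coordinates being perturbed. If the coordinates are the individual samples, there are $m_{a,y}$ relevant ones, each with difference $M/m_{a,y}$, and the deviation term is $\frac{M}{\sqrt{m_{a,y}}}\sqrt{\tfrac{1}{2}\log(\cdot)}$ --- no $\sqrt{K}$ appears. If instead the coordinates are the $K$ client datasets $S_k$, the per-coordinate difference is $m^{(k)}_{a,y}M/m_{a,y}$ (with $m^{(k)}_{a,y}$ the number of group-$a$, label-$y$ points at client $k$), and $\sum_k \bigl(m^{(k)}_{a,y}\bigr)^2 \le K$ only when every client holds at most one such point; in general Cauchy--Schwarz gives $\sum_k \bigl(m^{(k)}_{a,y}\bigr)^2 \ge m_{a,y}^2/K$, so the client-level route yields at best $\frac{M}{\sqrt{K}}\sqrt{\tfrac{1}{2}\log(\cdot)}$. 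Your stated recipe --- per-sample constant $M/m_{a,y}$ aggregated over $K$ clients --- mixes the two accountings and is not a valid application of the inequality, so the factor $\frac{M}{m_{a,y}}\sqrt{K/2}$ is never actually derived (the same issue is latent in the unproved Lemma \ref{lemma:fairness_gen}, so "matching the scaling" to that lemma does not resolve it). To make the argument rigorous you should either prove the per-sample bound $\frac{M}{\sqrt{m_{a,y}}}\sqrt{\tfrac{1}{2}\log(2|A||Y|/\delta)}$ and observe that it implies the stated form only when $m_{a,y}\le K$, or state explicitly the allocation condition on $\{m^{(k)}_{a,y}\}_k$ under which $\sqrt{K}/m_{a,y}$ is recovered. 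A secondary point to address in a full write-up: because you condition on $(a,y)$, the index set $\{(k,i): a_{k,i}=a, y_{k,i}=y\}$ and the normalizer $m_{a,y}$ are themselves random, so the expectation of the pooled conditional empirical risk equals $\mathfrak{r}_{a,y}(h)$ only after conditioning on the group/label counts; this needs to be said before McDiarmid and symmetrization are invoked.
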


Denote the right hand side of Lemma \ref{lemma:fairness_gen} and \ref{lemma:cbgl_fairness_gen} for constraint $j$ to be $Gen_{r,j}(\delta)$.

% \begin{lemma}
% \label{lemma:empirical_fair_vio}
% Assume there exists $w^*$ satisfies $\mathbf{r}(w^*)\leq \mathbf{0}_{|A|}$, we have 
% \begin{equation}
%     B\max_{a\in A}\mathbf{r}_a(\bar{w})_+\leq M+2\nu
% \end{equation}
% \end{lemma}

\begin{proof}[Proof for lemma \ref{lemma:empirical_fair_vio}]
Note that
\begin{align}
    F(\bar{w})+B\max_{z\in Z}\mathbf{r}_z(\bar{w})_+-\nu &\leq F(\bar{w})+\bar{\boldsymbol{\lambda}}^T\mathbf{r}(\bar{w})\\
    &=G(\bar{w},\bar{\boldsymbol{\lambda}})\\
    &\leq\min_w G(w, \bar{\boldsymbol{\lambda}})+\nu\\
    &\leq G(w^*, \bar{\boldsymbol{\lambda}})+\nu\\
    &= F(w^*)+\bar{\boldsymbol{\lambda}}^T\mathbf{r}(w^*)+\nu\\
    &\leq F(w^*)+\nu.
\end{align}
Therefore, we have
\begin{equation}
    F(\bar{w})\leq F(w^*)+2\nu.
\end{equation}
Hence, 
\begin{align}
    B\max_{z\in Z}\mathbf{r}_z(\bar{w})_+&\leq F(w^*)-F(\bar{w})+2\nu\\
    &\leq M+2\nu.
\end{align}
\end{proof}
\vspace{-.1in}

Note that Lemma \ref{lemma:empirical_fair_vio} tells us when there exists a solution for problem \ref{eq:empirical_obj_general}, the empirical fairness constraint violates by at most an error of $\frac{M+2\nu}{B}$. In other words, this guarantees that our algorithm \ref{alg:1} always output a model when problem \ref{eq:empirical_obj_general} has a solution.

Now we provide a proof of Theorem \ref{th:fairness}.
\begin{proof}[Proof for Theorem \ref{th:fairness}]

When there exists a solution to Problem \ref{eq:empirical_obj_general}: $w^*$, 
by Lemma \ref{lemma:fl_gen}, \ref{lemma:empirical_fair_vio}, we have with probability $1-\delta/2$
\begin{align}
    \mathcal{F}(\bar{w}) &\leq F(\bar{w}) + 2\mathfrak{R}_m(\mathcal{H})+\frac{M}{K}\sqrt{\sum_{k=1}^K\frac{1}{2m_k}\log(2/\delta)}\\
    &\leq F(w^*)+2\nu+2\mathfrak{R}_m(\mathcal{H})+\frac{M}{K}\sqrt{\sum_{k=1}^K\frac{1}{2m_k}\log(2\delta)}\\
    &\leq \mathcal{F}(w^*)+2\nu+4\mathfrak{R}_m(\mathcal{H})+\frac{2M}{K}\sqrt{\sum_{k=1}^K\frac{1}{2m_k}\log(2/\delta)}.
\end{align}

Combined with Lemma \ref{lemma:fairness_gen}, \ref{lemma:cbgl_fairness_gen}, and \ref{lemma:empirical_fair_vio}, we have for all $r_j$ that encodes a fairness constraint, with probability $1-\delta/2$
\begin{align}
    \mathfrak{r}_j(\bar{w}) &\leq \mathbf{r}_j(\bar{w})+Gen_{r,j}(\delta/2)\\ &\leq \frac{M+2\nu}{B}++Gen_{r,j}(\delta/2)
\end{align}

Therefore, Theorem \ref{th:fairness} holds with probability $1-\delta$ in this case.

When there doesn't exist a solution to problem \ref{eq:empirical_obj_general}, Algorithm \ref{alg:1} outputs $\bar{w}$ only when $\max_{a\in A}\mathbf{r}_a(\bar{w})\leq\frac{M+2\nu}{B}$. In certain scenarios, we are still able to obtain 
\begin{align}
    \mathfrak{r}_a(\bar{w}) \leq \frac{M+2\nu}{B}+Gen_{r,j}(\delta/2)
\end{align} by applying Lemma \ref{lemma:empirical_fair_vio}. Since $w^*$ doesn't exist, the following holds vacuously: \begin{align}
    \mathcal{F}(\bar{w}) \leq \mathcal{F}(w^*)+2\nu+4\mathfrak{R}_m(\mathcal{H})+\frac{2M}{K}\sqrt{\sum_{k=1}^K\frac{1}{2m_k}\log(2/\delta)}
\end{align}

Therefore, Theorem \ref{th:fairness} holds for both cases.

\end{proof}

\section{Hyperparameters}
In order to get the fairest model given a certain test error rate, we apply random grid search over two key hyperparameters in our experiment: the strength of regularizer $B$ and the constant used to bound our fairness constraint ($\zeta$ when BGL is the fairness constraint and $\zeta_{y}$ when CBGL conditioned on $Y=y$ is the fairness constraint). For all our experiments w.r.t PFFL with BGL and PFFL with CBGL, we select $B\in\{0.1,0.5,1,5,10,100,200,500\}$. For CelebA, we select $\zeta\in\{0,0.05,0.1,0.15,0.2,0.25,0.3\}$, $\zeta_1\in\{0,0.05,0.1,0.2,0.3,0.5\}$, and $\zeta_0\in\{0,0.05,0.1,0.15,0.2\}$. For ACS Employment, we select $\zeta,\zeta_1\in\{0,0.1,0.3,0.5,0.7,0.9\}$ and $\zeta_0\in\{0,0.1,0.3,0.5,0.7\}$. For COMPAS, we select $\zeta,\zeta_1\in\{0,0.1,0.3,0.5,0.7\}$ and $\zeta_0\in\{0,0.1,0.3,0.5\}$. We only plot combinations of hyperparameters that yield a model on the pareto frontier in Figure \ref{fig:fairness_utility_fig}.

%\vspace{-.1in}
\section{Comparison with FedMinMax \citep{papadaki2022minimax}}
\label{appen:fedminmax}
As mentioned in Section \ref{sec:formulation:objective}, FedMinMax~\citep{papadaki2022minimax} can be viewed as a special case of our PFFL with BGL with fixed hyperparameters. However, different from our Algorithm \ref{alg:1}, the original FedMinMax solver proposes to fix the FedAvg training epoch $T=1$ whereas our PFFL proposes to training enough number of FedAvg rounds. We compare PFFL with FedMinMax in Figure \ref{fig:comparison_fedminmax}. Similar to Figure \ref{fig:fairness_utility_fig}, we only plot the pareto frontier of our method. Although PFFL with BGL could reduce to FedMinMax, fixing hyperparameters (e.g. $\beta, B, \zeta, T$) limits FedMinMax's flexibility to trade off between fairness and utility.

\begin{figure*}[t!]
\vspace{-0.2in}
    \centering
    \begin{tabular}{ccc}
    \begin{subfigure}{0.3\textwidth}
        \centering
        \includegraphics[width=0.99\textwidth]{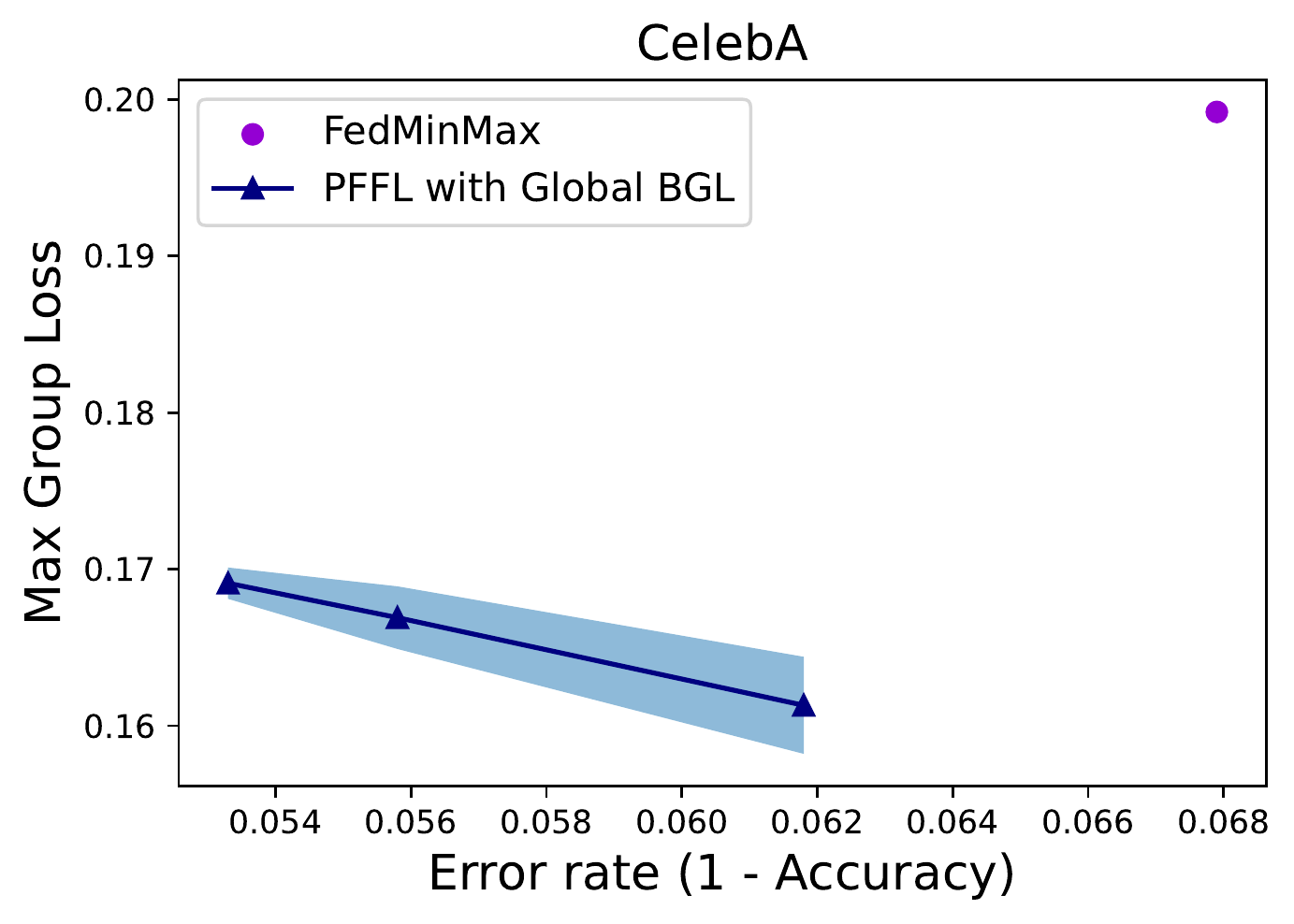}
    \end{subfigure} &
    \begin{subfigure}{0.3\textwidth}
        \centering
        \includegraphics[width=0.99\textwidth]{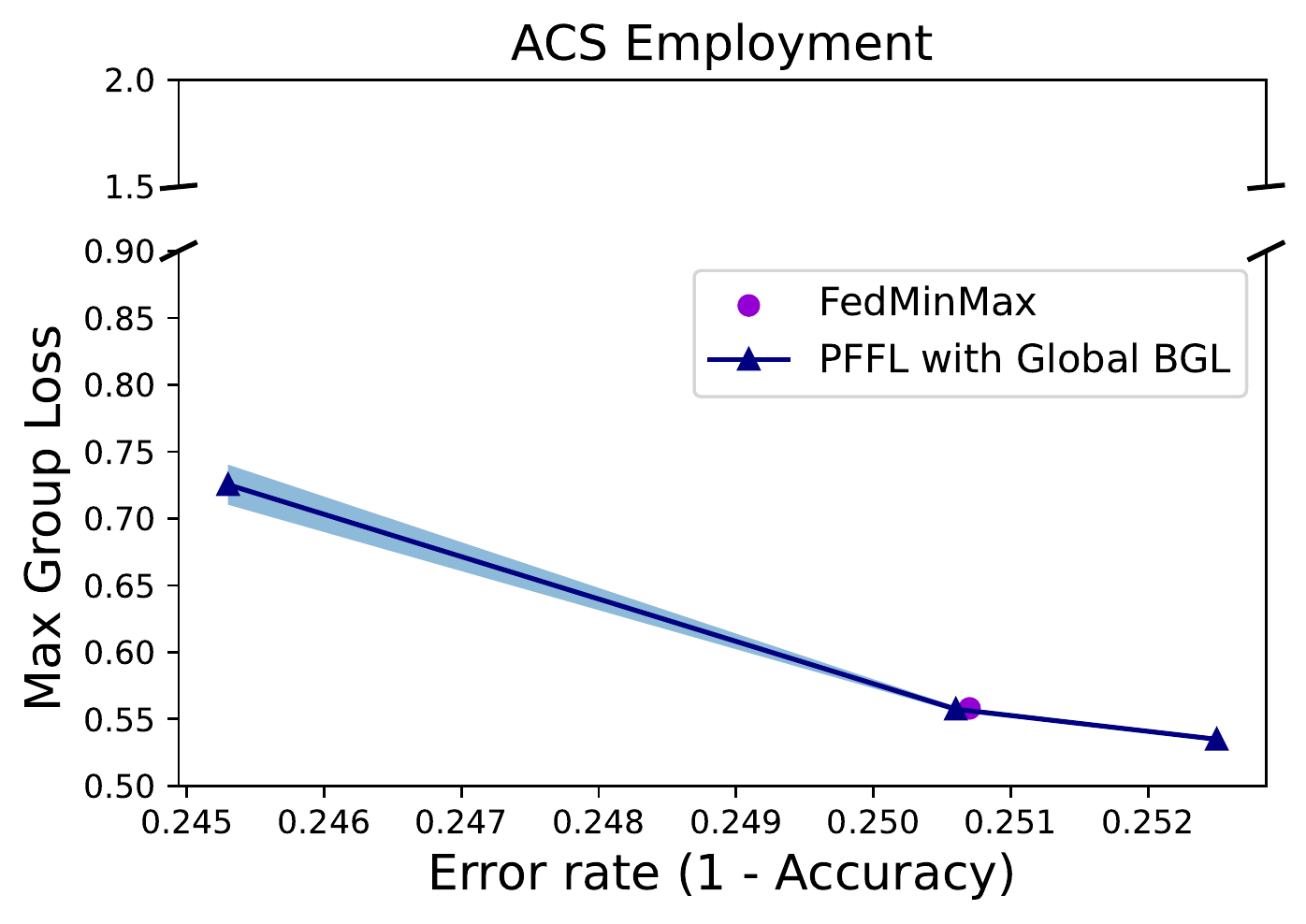}
    \end{subfigure} &
    \begin{subfigure}{0.3\textwidth}
        \centering
        \includegraphics[width=0.99\textwidth,]{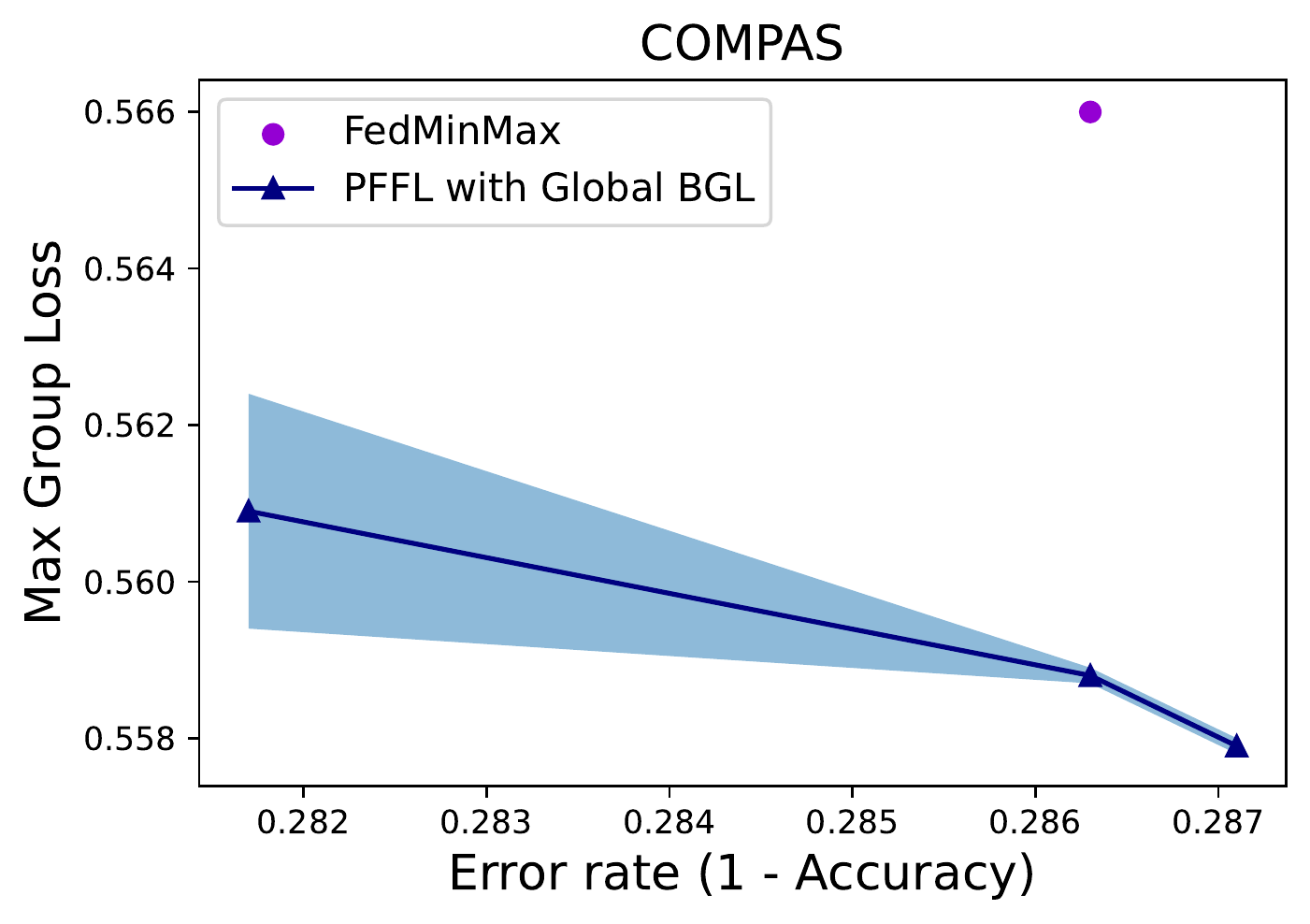}
    \end{subfigure} \\
    % \vspace{0.2in}
    
    \end{tabular}
    \caption{\small Comparison with FedMinMax} %
  \vspace{-0.1in}
    \label{fig:comparison_fedminmax}
\end{figure*}
% \textcolor{red}{\section{Comparison to centralized training}}
% \textcolor{red}{As requested by two reviewers, we show the results of comparing our PFFL with BGL method with BGL in centralized setting method in Figure 5 on COMPAS. As expected, centralized BGL is able to produce a model that is both fairer and more accurate than both centralized baseline model (FedAvg) and PFFL with Gloabl BGL. We also want to re-iterate that as we discussed in the response, since our method directly encodes BGL as the fairness constraint, it is misleading to compare with another fair training method that does not aim to optimize the same fairness notion.  }
% \begin{figure*}[t!]
%     \centering
%     \includegraphics[width=0.5\textwidth,trim=10 10 10 30]{}
%     \caption{\small \textcolor{red}{Comparison between PFFL and using BGL on centralized dataset on COMPAS.}} %
%   %\vspace{-0.2in}
%     \label{fig:pffl_vs_fedfb}
% \end{figure*}

\newpage
\section{Comparison to q-FFL}
\begin{wrapfigure}{r}{0.3\textwidth}
    \centering
    \includegraphics[width=0.3\textwidth,trim=10 10 0 30]{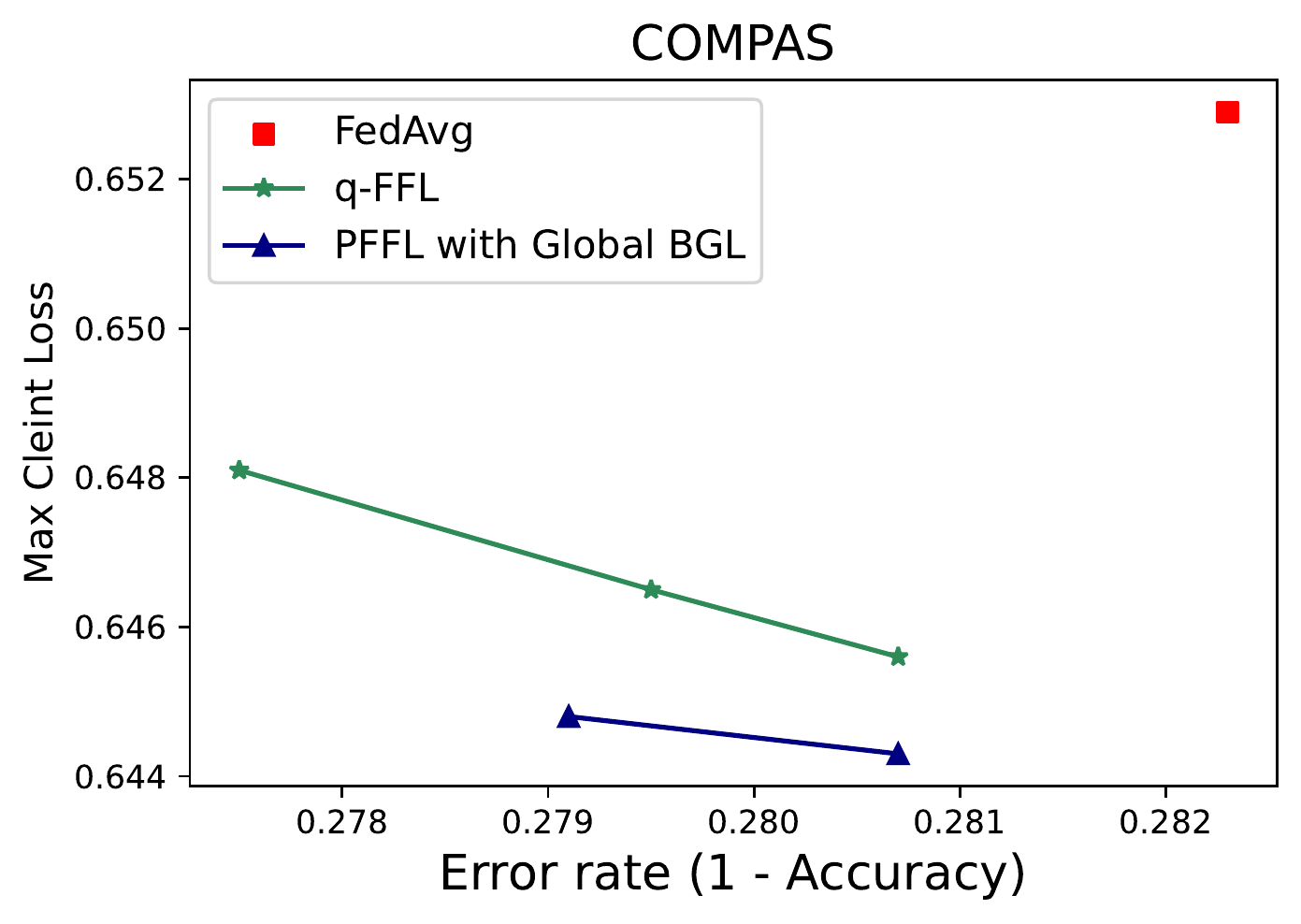}
    \caption{\small Comparison between PFFL and q-FFL on COMPAS.} %
  %\vspace{-0.2in}
    \label{fig:pffl_vs_fedfb}
\end{wrapfigure}
 In the scenario where we treat each client as a different group, our method reduces to a generalized version of AFL~\citep{mohri2019agnostic} where we allow the constraint term to encode different values of $\zeta$. We compare our method with q-FFL~\citep{li2019fair}, a popular method that encourages providing fair utility performance across all the clients. For fair comparison, we plot the average test accuracy w.r.t the largest client’s loss for both methods. Our method achieves comparable results with q-FFL in this setting, and both methods yield models that are both fairer and more accurate than vanilla FedAvg.

\end{document}